
\documentclass[nohyperref]{article}

\usepackage{microtype}
\usepackage{graphicx}
\usepackage{subfigure}
\usepackage{booktabs} 
\usepackage{enumerate}
\usepackage{algorithmic}
\usepackage{algorithm}
\usepackage{adjustbox}
\usepackage{diagbox}

\usepackage{hyperref}


\usepackage[accepted]{icml2023}


\usepackage{amsmath}
\usepackage{amssymb}
\usepackage{mathtools}
\usepackage{amsthm}
\usepackage{thmtools, thm-restate}
\usepackage{bbm}
\input m.macros
\input m.symbols

\newcommand{\Tstrut}{\rule{0pt}{2.6ex}} 

\newcommand{\reload}{{R}e{LOAD}}

\usepackage[capitalize]{cleveref} 

\theoremstyle{plain}
\newtheorem{theorem}{Theorem}[section]

\theoremstyle{definition}
\newtheorem{definition}[theorem]{Definition}

\theoremstyle{remark}


\icmltitlerunning{~ \hfill ReLOAD for Last-Iterate Convergence in Constrained MDPs \hfill \thepage}

\begin{document}

\setlength{\abovedisplayskip}{4pt}
\setlength{\belowdisplayskip}{4pt}

\twocolumn[
\icmltitle{ReLOAD: Reinforcement Learning with Optimistic Ascent-Descent \\for Last-Iterate Convergence in Constrained MDPs}



\icmlsetsymbol{equal}{*}

\begin{icmlauthorlist}
\icmlauthor{Ted Moskovitz}{equal,sch}
\icmlauthor{Brendan O'Donoghue}{comp}
\icmlauthor{Vivek Veeriah}{comp} \\
\icmlauthor{Sebastian Flennerhag}{comp}
\icmlauthor{Satinder Singh}{comp}
\icmlauthor{Tom Zahavy}{comp}
\end{icmlauthorlist}

\icmlaffiliation{comp}{DeepMind}
\icmlaffiliation{sch}{Gatsby Unit, UCL}

\icmlcorrespondingauthor{Ted Moskovitz}{ted@gatsby.ucl.ac.uk}

\icmlkeywords{Machine Learning, ICML}

\vskip 0.3in
]



\printAffiliationsAndNotice{\icmlEqualContribution} 

\begin{figure*}[!t]
    \centering
    \includegraphics[width=0.85\textwidth]{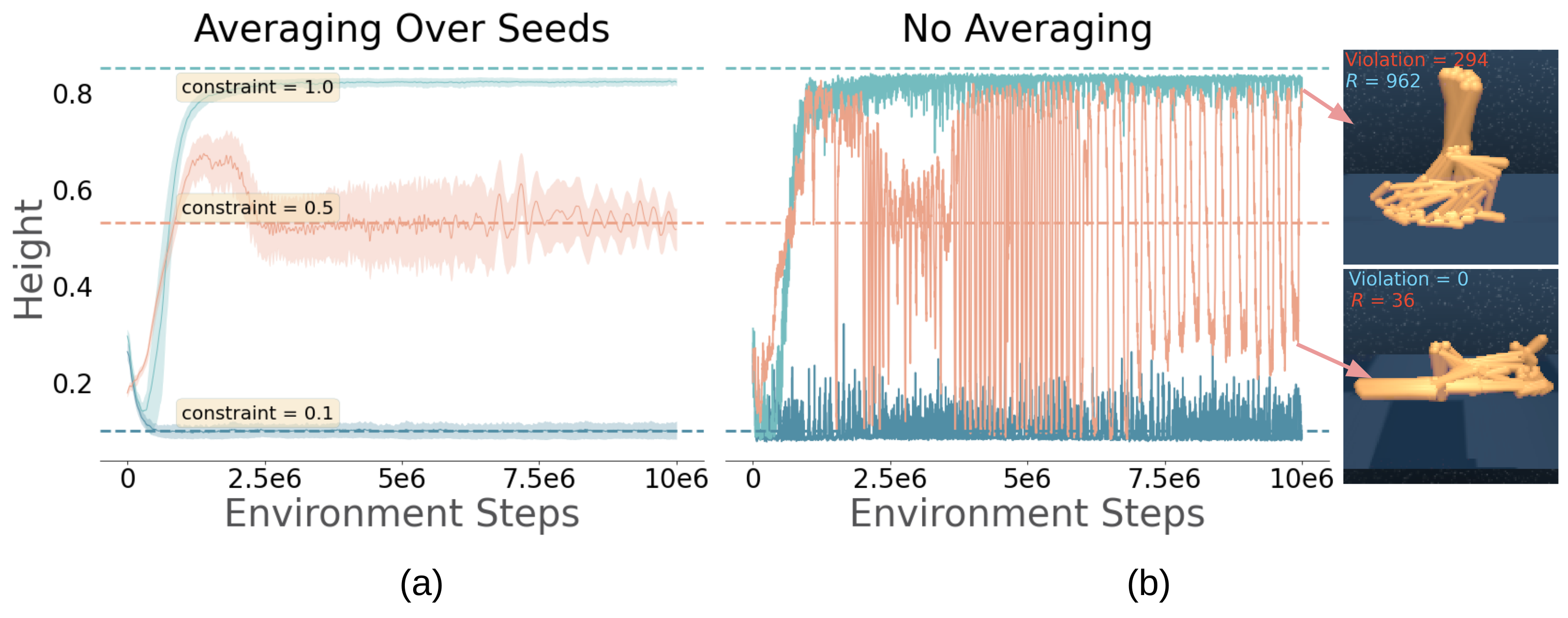}
    \caption{Standard gradient-based methods suffer from oscillations in constrained RL. (a) Training and agent to walk while keeping its height below different thresholds, learning looks stable when averaged across seeds. (b) Examining a single training run, we can see that learning oscillates dramatically, frequently leading to extreme behavior at the end of training (right).}
    \label{fig:the_problem}
\end{figure*}

\begin{abstract}
In recent years, Reinforcement Learning (RL) has been applied to real-world problems with increasing success. Such applications often require to put constraints on the agent's behavior.
Existing algorithms for constrained RL (CRL) rely on gradient descent-ascent, but this approach comes with a caveat. While these algorithms are guaranteed to converge \textit{on average}, they do not guarantee \textit{last-iterate} convergence, i.e., the current policy of the agent may never converge to the optimal solution. In practice, it is often observed that the policy alternates between satisfying the constraints and maximizing the  reward, rarely accomplishing both objectives simultaneously. Here, we address this problem by introducing \textit{Reinforcement Learning with Optimistic Ascent-Descent} (\reload), a principled CRL method with guaranteed last-iterate convergence. We demonstrate its empirical effectiveness on a wide variety of CRL problems including discrete MDPs and continuous control. In the process we establish a benchmark of challenging CRL problems. 
\end{abstract}

\section{Introduction}
From navigating stratospheric balloons through the atmosphere to coordinating plasma control for nuclear fusion research, reinforcement learning \citep[RL;][]{sutton2018reinforcement} has proved increasingly effective at solving real-world problems \citep{bellemare2020balloons,degrave2022fusion}. In RL, an agent interacts with an environment over a series of time steps with the goal of maximizing its expected cumulative reward. 
When developing intelligent systems which must learn and act in the real world, it's often the case that constraints are placed on agents to ensure that certain safety or efficiency requirements are satisfied. Examples range from training a robot to run while avoiding placing too much torque on its joints, to video compression \citep{mandhane2022learn2encode}, to maximizing the efficiency of commercial cooling systems under stability constraints \citep{luo2022controlling}. 

A natural question, then, is how to solve such constrained tasks. In standard RL, we often look to the Reward Hypothesis, which postulates that all goals and purposes of an intelligent agent can be achieved by maximization of a scalar reward \citep{sutton2004reward}. \citet{szepesvari2020cmdps} studied the implications of this hypothesis to CRL. Accordingly, constrained RL problems can be solved by integrating the constraints and task reward into a single, non-stationary reward signal \citep{altman99cmdps}. However, this approach carries a subtle but important challenge that can be easily overlooked: gradient-based optimization for such constrained problems only guarantees that the \textit{average} of the agent's behavior over the course of training converges to an optimal solution, with no assurances for the final policy. 

Unfortunately, simple solutions like averaging model parameters are ineffective when the policy is implemented as a deep neural network. While this problem has been studied in the context of GANs \citep{daskalakis2018training_gans,balduzzi2018mechanics}, it has not been addressed within CRL, which brings additional complications.
As an illustration, consider the problem of training an agent in the \texttt{Walker} domain in DeepMind Control Suite \citep{tassa2018dmc} to walk subject to varying upper limits on its height. In \cref{fig:the_problem}a, learning looks stable due to averaging across multiple seeds, but in \cref{fig:the_problem}b, we can see dramatic oscillations over the course of a single training run, with the agent either simply walking normally or lying on the ground. 

In this work, we address this issue, introducing \textit{Reinforcement Learning with Optimistic Ascent-Descent} (\reload), a principled CRL method for last-iterate convergence (LIC). Specifically, we make the following contributions:
    \textbf{(1)} We analyze several existing methods for CRL and show that they fail to achieve LIC (\cref{thm:md_bad}, \cref{thm:singly_optimistic}). We then prove LIC for a generalized form of optimistic mirror descent (\cref{thm:convergence}).
    \textbf{(2)} Building on these theoretical insights, we introduce ReLOAD    (\cref{sec:reload}) and demonstrate empirically that it achieves LIC in both the tabular and function approximation settings (\cref{sec:experiments_main}). Furthermore, ReLOAD improves the performance of strong baseline algorithms in challenging CRL problems. 
    \textbf{(3)} We identify constraints within a range of control tasks which are especially challenging for traditional methods. We list constraint and task details in the hope that these problems can be reused as a benchmark for CRL.

\section{Reinforcement Learning with Constraints}
In CRL, 
an agent not only seeks to maximize its cumulative reward, but must also obey constraints on its behavior.
Typically, this problem is modeled as a \textit{Constrained Markov Decision Process} \citep[CMDP,][]{altman99cmdps}. An infinite horizon, discounted CMDP is a tuple $\mathcal M_C = \left(\St, \A, r_0, \gamma, \rho, \{r_n\}_{n=1}^N, \{\theta_n\}_{n=1}^N\right)$, where $\St$ is the set of states, $\A$ is the set of available actions, $P: \St \times \A \to \mathcal P(\St)$ is the transition kernel, $r_0: \St \times \A \to \reals$ is the reward function, $\gamma\in[0, 1)$ is a discount factor, $\rho\in\mathcal P(\St)$ is the distribution over initial states, $\{r_n\}_{n=1}^N$ is the set of constraint rewards, and $\{\theta_n\}_{n=1}^N$ is the set of constraint thresholds. $\mathcal P(\cdot)$ denotes the set of distributions over a given space. 
At each time step, the agent samples an action from a stationary \textit{policy} $\pi: \St \to \mathcal P(\A)$, which causes the environment to transition to a new state. This process induces a cumulative, discounted state-action occupancy measure (henceforth, simply ``occupancy measure'') associated with the policy:
\begin{align}
    d_\pi(s,a) \triangleq (1 - \gamma) \sum\nolimits_{t=0}^\infty \gamma^t \pi(a|s)P_\pi(s_t=s).
\end{align}
This probability measure lies within the following convex feasible set (a polytope in the discrete case):
    \begin{align}
    \begin{split}
        \mathcal K \triangleq \big\{d_\pi \ \vert \ d_\pi \geq 0, \ \sum\nolimits_{a} d_\pi(s,a) = (1 - \gamma) \rho(s) \\ + \gamma \sum\nolimits_{s', a'} P(s|s',a')d_\pi(s',a') \big\}.
    \end{split}
    \end{align}
The agent's goal is to find a policy that maximizes its expected, cumulative, discounted reward while adhering to the designated constraints. This quantity is referred to as the policy's \textit{value}: $v_0^\pi \triangleq \langle r_0, d_\pi\rangle$. Mathematically, this can be formalized as a constrained optimization problem:
\begin{align} \label{eq:cmdp_v}
    \min_{\pi} \ -v^\pi_{0} \ \quad \mathrm{s.t.} \quad v_n^\pi \leq \theta_n,  \quad n=1,\dots,N,
\end{align}
where $v_n^\pi \triangleq \langle r_n, d_\pi\rangle$. On inspection, we can see that \cref{eq:cmdp_v}
defines a linear program in $d_\pi$:
\begin{align}
    \min_{d_\pi\in\mathcal K} \ -\langle r_0, d_\pi \rangle \enspace \mathrm{s.t.} \enspace \langle r_n, d_\pi \rangle \leq \theta_n, \enspace n=1,\dots,N
\end{align}
Typically, CMDPs are solved via Lagrangian relaxation \citep{everett1963lagrangian,altman99cmdps}, which reframes the objective as a convex-concave min-max game:
\begin{align} \label{eq:cmdp_lagrangian}
    \min_{d_\pi\in\mathcal K} \max_{\mu \geq 0} -\langle r_0, d_\pi \rangle + \sum_{n=1}^N \mu_n (\langle r_n, d_\pi \rangle - \theta_n) \triangleq \mathcal L(d_\pi, \mu).
\end{align}
We provide a more comprehensive review of relevant work on CMDPs in \cref{sec:additional_related}.

\subsection{The Scalarization Fallacy} Given \cref{eq:cmdp_lagrangian}, it would be tempting to simply solve the inner maximization problem over $\mu$ to find the optimal Lagrange multipliers $\mu^\star$. One could then ``scalarize'' the task and constraint rewards into a single stationary reward function $r^\star = r_0 + \sum_{n=1}^N \mu_n^\star r_n$ and solve the resulting standard MDP. However, the solution to this MDP is not typically the solution to the CMDP---one can easily define CMDPs for which the optimal policy must be stochastic \citep{altman99cmdps,szepesvari2020cmdps}, but all MDPs admit deterministic optimal policies \citep{puterman2014markov}. One notable exception occurs when one of the constraint thresholds $\theta_n$ is extreme (close to the highest or lowest possible $v_n$). In this case, one reward function $r_n$ will dominate the optimization and the solution will closely match the optimal policy for an MDP with reward $r_n$, as seen in \cref{fig:the_problem} for $\theta=0.1$ and $\theta=1.0$. Such cases are discussed in detail in \cref{sec:extreme_constraints}. For non-trivial cases, however, we must turn to more specialized approaches for minimax optimization.

\subsection{Average- vs. Last-Iterate Convergence}
A \textit{saddle-point} (SP), or equilibrium, of a two-player convex-concave zero-sum game like \cref{eq:cmdp_lagrangian}, is a pair $(d_\pi^\star, \mu^\star)$ such that $\mathcal L(d_\pi^\star, \mu) \leq \mathcal L(d_\pi^\star, \mu^\star) \leq \mathcal L(d_\pi, \mu^\star)$ $\forall d_\pi, \mu \in \mathcal K \times \reals^N_{\geq 0}$.  
A general procedure for finding such a point is presented in \cref{alg:cmdp_alg}, where at each round the Lagrange multipliers and occupancy measure are updated by procedures $\mathrm{Alg}_\mu$ and $\mathrm{Alg}_\pi$, respectively. Because both $\mu$ and $d_\pi$ lie within convex sets and the objective is bilinear, standard results in game theory guarantee that when $\mathrm{Alg}_\mu$ and $\mathrm{Alg}_\pi$ are no-regret algorithms (e.g., gradient ascent/descent), the averaged iterates $(\frac{1}{K}\sum_{k=1}^K \mu^k, \frac{1}{K}\sum_{k=1}^K d_\pi^k)$ converge to a SP of the Lagrangian \citep{freund1997online}. Significantly, however, in general there is no guarantee regarding the \textit{last} iterates of the optimization---that is, there is no reason to expect that $(d_\pi^K, \mu^K)$ will be close to $(d_\pi^\star, \mu^\star)$. We formalize the distinction between average- and last-iterate convergence with the following definitions:

\begin{definition}[Average-Iterate Convergence (AIC)] \label{def:avg-iterate}
    Consider a SP problem $\min_{x\in\mathcal X} \max_{y\in\mathcal Y}\ \mathcal L(x, y)$ where $\mathcal X \subseteq \reals^n$ and $\mathcal Y \subseteq \reals^m$ with a nonempty set of equilibria $\mathcal Z^\star \subseteq \mathcal X \times \mathcal Y$. We say that a sequence $(x^1, y^1), (x^2, y^2), \dots, (x^K, y^K)$ displays \textit{average-iterate convergence} if $\left(\frac{1}{K}\sum_{k=1}^K x^k, \frac{1}{K}\sum_{k=1}^K y^k \right) \to (x^\star, y^\star) \in \mathcal Z^\star$ as $K\to\infty$. 
\end{definition}

\begin{definition}[Last-Iterate Convergence (LIC)]
    In the same setting as \cref{def:avg-iterate}, we say that a sequence $(x^1, y^1), (x^2, y^2), \dots, (x^K, y^K)$ displays \textit{last-iterate convergence} if $\left(x^K, y^K\right) \to (x^\star, y^\star) \in \mathcal Z^\star$ as $K\to\infty$.
\end{definition}

This discrepancy can be easily overlooked, but it has significant practical implications, as seen in \cref{fig:the_problem}. How can we address this? For inspiration, we look beyond RL for the moment to optimization theory. 



\begin{algorithm}[!t]
	\caption{Lagrange Optimization for Constrained MDPs}\label{alg:cmdp_alg}
		\begin{algorithmic}[1] 
		    \STATE Input: Lagrangian $\mathcal L: \mathcal K \times \reals_{\geq0}^N \to \reals$, $K\in\mathbb N_+$
            \FOR{$k=1,\dots,K$}
                \STATE $\mu^k = \mathrm{Alg}_\mu(d_\pi^1,\dots,d_\pi^{k-1}; \mathcal L)$
                \STATE $d_\pi^k = \mathrm{Alg}_\pi(\mu^1, \dots, \mu^{k-1}; \mathcal L)$ 
            \ENDFOR
            \STATE Return $d_\pi^K, \ \mu^K$ (last-iterates) or $\bar d_\pi^K = \frac{1}{K}\sum_{k=1}^K d_\pi^k$, $\bar \mu^K = \frac{1}{K}\sum_{k=1}^K \mu^k$ (average-iterates)
	\end{algorithmic}
\end{algorithm}

\section{Optimization in Min-Max Games} \label{sec:optimization}


The CMDP Langrangian min-max game belongs to a larger family of problems for which standard GD-style approaches (in fact, any algorithm in the broad family of follow-the-regularized-leader approaches) fail to converge in the last-iterate, typically displaying cyclic behavior \textit{around} an equilibrium rather than converging to it \citep{mertikopoulos2018optimistic}. 
Specifically, GD approaches are instances of primal-dual \textit{mirror descent} (MD), which for a generic SP problem with a differentiable objective $\min_{x\in\mathcal X}\max_{y\in\mathcal Y} \ \mathcal L(x, y)$ uses MD to update both $x$ and $y$:
\begin{align} \label{eq:md}
    x^{k+1} &= \argmin_{x \in \mathcal X}\ \langle \grad_x \mathcal L^k, x\rangle + \frac{1}{\eta^k} D_{\Omega_x}(x; x^k) \\
    y^{k+1} &= \argmax_{y\in \mathcal Y}\ \langle \grad_y \mathcal L^k, y\rangle - \frac{1}{\eta^k} D_{\Omega_y}(y; y^k),
\end{align}
where we use the notation $\grad\mathcal L^k \triangleq \grad \mathcal L(x^k, y^k)$ for brevity, $\eta^k$ is the step-size at iteration $k$, and
\begin{align}
    D_\Omega(u\text{;} v) = \Omega(u) - \Omega(v) - \langle \grad\Omega(v), u - v\rangle 
\end{align}
is the \textit{Bregman divergence} generated by a strictly convex, continuously differentiable function $\Omega(\cdot)$. Common choices for $\Omega$ include the squared Euclidean distance $\Omega(u) = \frac{1}{2}\|u\|^2$ which induces $D_\Omega(u\text{;}v) = \frac{1}{2}\|u - v\|^2$ and corresponds to standard GD, as well as the negative entropy $\Omega(u) = \langle u, \log u\rangle$, which induces $D_\Omega(u;v) = \kl[u||v]$, corresponding to multiplicative weights updating \citep[MWU;][]{grigoriadis1995mwu}.  When $\mathcal X$ and $\mathcal Y$ are convex and $\mathcal L(x,y)$ is convex in $x$ and concave in $y$, MD guarantees AIC, but may not converge in the last-iterate:
\begin{restatable}[Insufficiency of MD; \citep{daskalakis2018ogda}]{lemma}{mdbad} \label{thm:md_bad}
    There exist convex-concave SP problems for which primal-dual mirror descent does not achieve LIC.
\end{restatable}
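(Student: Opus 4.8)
The plan is to exhibit a single minimal convex--concave saddle-point instance on which primal--dual mirror descent provably fails in the last iterate, namely the unconstrained bilinear game
\begin{align}
  \min_{x\in\reals}\ \max_{y\in\reals}\ \mathcal L(x,y) = x\,y .
\end{align}
Here $\mathcal L$ is affine---hence simultaneously convex and concave---in each argument, the domains $\mathcal X=\mathcal Y=\reals$ are convex, and the equilibrium set is $\mathcal Z^\star=\{(0,0)\}$, which is nonempty, so every hypothesis of \cref{def:avg-iterate} and the LIC definition is met. I would instantiate \cref{eq:md} with the Euclidean regularizer $\Omega_x(u)=\Omega_y(u)=\tfrac12\|u\|^2$, which is exactly (projection-free) gradient descent--ascent, the canonical member of the mirror-descent / FTRL family.

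Next I would carry out the elementary dynamics computation. Since $\grad_x\mathcal L(x^k,y^k)=y^k$ and $\grad_y\mathcal L(x^k,y^k)=x^k$, the updates read
\begin{align}
  x^{k+1} = x^k - \eta^k y^k, \qquad y^{k+1} = y^k + \eta^k x^k .
\end{align}
Writing $r_k^2 \triangleq (x^k)^2 + (y^k)^2$, a one-line expansion (the cross terms $\pm 2\eta^k x^k y^k$ cancel) yields
\begin{align}
  r_{k+1}^2 = \big(1 + (\eta^k)^2\big)\, r_k^2 ,
  \qquad\text{hence}\qquad
  r_K^2 = r_0^2 \prod_{k=1}^{K}\big(1+(\eta^k)^2\big).
\end{align}
The conclusion then splits on the step-size schedule but is adverse in every case: if $\sum_k (\eta^k)^2 = \infty$ (e.g.\ a constant step size) then $r_K \to \infty$; if $\sum_k (\eta^k)^2 < \infty$ then $r_K$ converges to the strictly positive limit $r_0\big(\prod_{k\geq 1}(1+(\eta^k)^2)\big)^{1/2}$. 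Either way, from any $(x^0,y^0)\neq(0,0)$ the iterates never approach the unique equilibrium $(0,0)$, so LIC fails---while AIC still holds by the no-regret argument recalled in \cref{sec:optimization}. This already establishes the lemma.

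I expect the only real obstacle to be cosmetic rather than mathematical: whether to present the example on all of $\reals^2$ (cleanest, and permitted by the definitions) or on a compact set such as $[-1,1]^2$, in which case one must additionally argue that Euclidean projection does not rescue convergence---the iterates instead settle into a limit cycle along the boundary, which needs a slightly more careful case analysis (this is the setting treated by \citet{daskalakis2018ogda}). A secondary point worth a remark is that the failure is not an artifact of the Euclidean geometry: the same phenomenon occurs for the entropic regularizer, where primal--dual MD becomes multiplicative weights and provably cycles on bilinear games over the simplex (e.g.\ matching pennies); including this would show that the obstruction is intrinsic to the whole mirror-descent family, matching the scope of the statement.
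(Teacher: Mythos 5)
Your proposal is correct and uses exactly the same counterexample and computation as the paper: the bilinear game $\min_x\max_y xy$ with Euclidean regularizers, showing $\Delta_{k+1}=(1+\eta^2)\Delta_k$ so the squared distance to the unique equilibrium $(0,0)$ is nondecreasing and never vanishes. Your explicit treatment of variable step sizes via the product $\prod_k(1+(\eta^k)^2)$ is a slightly more careful version of the remark the paper makes in passing, but the argument is the same.
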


\subsection{Optimistic Mirror Descent}
Fortunately, there exists 
an approach called \textit{optimistic mirror descent} (OMD) which has been shown to achieve LIC for convex-concave games \citep{daskalakis2018ogda,daskalakis2018omwu,daskalakis2018training_gans}. OMD makes the following simple modification to standard MD (changes in \textcolor{MidnightBlue}{blue}):
\begin{align*}
    x^{k+1} = \argmin_{x \in \mathcal X}\ \langle \textcolor{MidnightBlue}{2}\grad_x \mathcal L^k \textcolor{MidnightBlue}{ - \grad_x \mathcal L^{k-1}}, x^k\rangle + \frac{1}{\eta^k} D_{\Omega_x}(x; x^k).
\end{align*}
In other words, rather than use the gradient at iteration $k$, OMD uses the \textit{optimistic} gradient obtained by doubling the current gradient and subtracting the previous one. For concision, we'll denote the optimistic gradient of $\mathcal L$ at step $k$ by $\widetilde \grad \mathcal L^k  = 2\grad \mathcal L^k - \grad\mathcal L^{k-1} = \grad \mathcal L^k + (\grad \mathcal L^k - \grad\mathcal L^{k-1})$. In general, optimistic optimization attempts to use a ``hint'' about the next gradient to augment the current update. OMD can be seen as setting the hint to be the current gradient $\grad \mathcal L^k$ and then removing the previous hint $\grad \mathcal L^{k-1}$. It belongs to a broader family of so-called ``single-call'' extra-gradient methods \citep{hsieh2019single_call} that only requires one gradient estimate and a single projection into the constraint set at each step, making it particularly scalable to high-dimensional problems. For more discussion of this family of methods, see \cref{sect:extragradients}, and for other approaches to LIC in min-max games, see \cref{sec:additional_related}. As a demonstration, we applied gradient descent-ascent (GDA) and optimistic GDA (OGDA) to the simple bilinear problem $\min_{x\in\reals} \max_{y\in\reals} xy$, for which GDA can be shown to diverge for any positive learning rate \citep{daskalakis2018ogda}. (Note that (O)GD is (O)MD with $\Omega(\cdot) = \frac{1}{2}\|\cdot\|^2$.) While both achieve AIC, \cref{fig:minmax_xy} shows that that GDA's iterates diverge while OGDA converges to the SP at $(0, 0)$. To get an intuition for why OMD works, we can look to the loss surface in \cref{fig:minmax_xy}b. By adding the current hint $\textcolor{Salmon}{\grad \mathcal L^k}$ and subtracting the previous one $\textcolor{Orchid}{\grad\mathcal L^{k-1}}$ to the standard gradient update, the optimistic gradient $\textcolor{MidnightBlue}{\widetilde\grad\mathcal L^k}$ bends inwards towards the optimum. In contrast, we can see that $\textcolor{Salmon}{\grad\mathcal L^k}$ alone is always orthogonal to the radius pointing towards the optimum, and thus never converges to it. 

Anticipating the application of OMD to RL, a natural question when building on top of existing approaches is how much can be left unchanged. Since existing CRL approaches use standard gradients when performing primal-dual optimization or focus on stabilizing only the Lagrange multiplier \citep{calian2020metal,stooke2020pid}, it would simplify matters if we only required one player in \cref{alg:cmdp_alg} to use optimistic gradients---particularly $\mathrm{Alg}_\mu$, as this would allow the $\mathrm{Alg}_\pi$ player to simply implement a standard RL algorithm. Unfortunately, this is not the case:

\begin{figure}[!t]
    \begin{center}
    \centerline{\includegraphics[width=0.99\columnwidth]{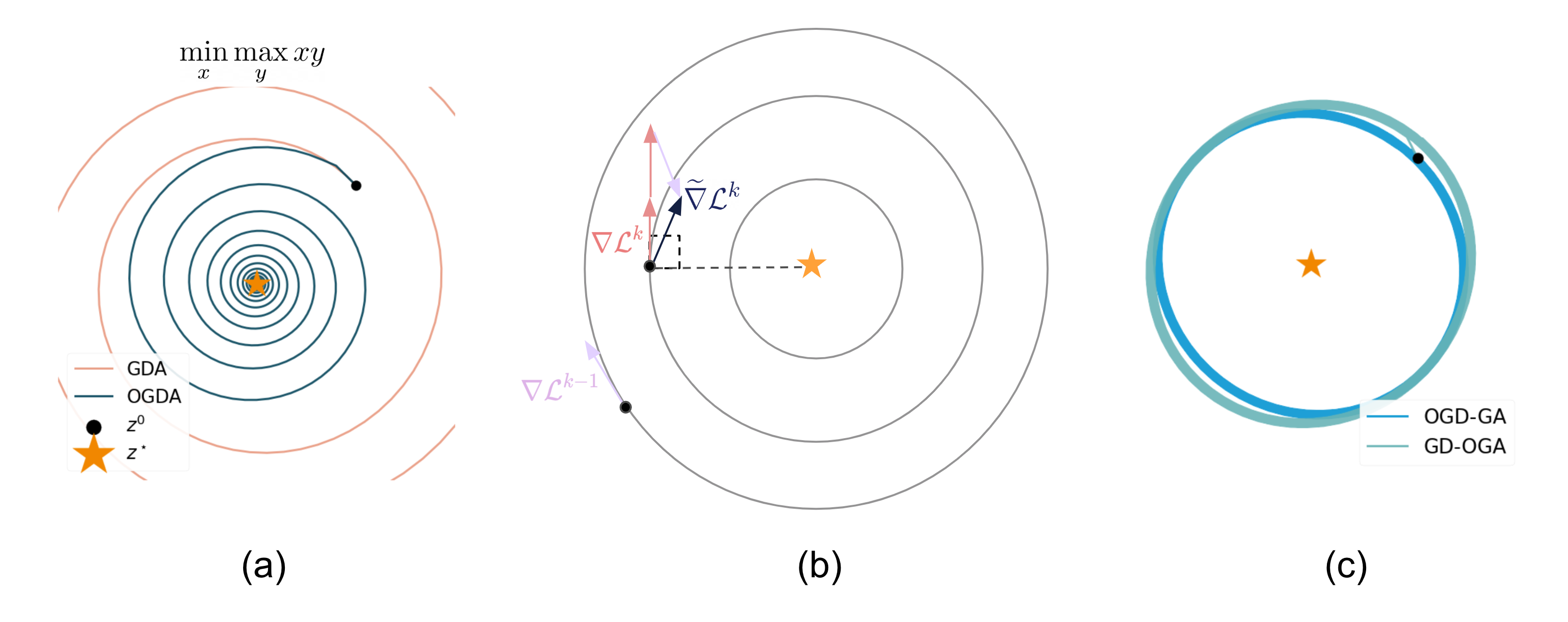}}
    \caption{Optimistic gradients achieve LIC. (a) GDA spirals outwards, while OGDA reaches the optimum. 
    (b) 
    The optimistic update $\textcolor{MidnightBlue}{\widetilde\grad\mathcal L^k}$ bends the trajectory inwards. (c) One optimistic player is not enough for LIC.}
    \label{fig:minmax_xy}
    \end{center}
    \vskip -0.4in
\end{figure}

\begin{restatable}[Being Singly-Optimistic is Not Enough]{lemma}{singlyoptimistic} \label{thm:singly_optimistic}
    There exists a convex-concave objective $\mathcal L$ for which no combination of one OMD and one MD player achieves LIC.  
\end{restatable}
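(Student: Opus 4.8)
The plan is to reuse the canonical bilinear example $\mathcal L(x,y)=xy$ on $\reals\times\reals$ (or on $[-1,1]^2$ if one insists on a compact domain), which is linear, hence convex in $x$ and concave in $y$, and has $(0,0)$ as its unique saddle point. A pairing of ``one OMD and one MD player'' is one of two assignments: the $x$-player is optimistic and the $y$-player is not, or vice versa. Since $xy$ is symmetric under swapping the two variables together with swapping $\min$ and $\max$, it suffices to rule out LIC for the first assignment; the second is identical after relabeling. Moreover, because $(0,0)$ is interior, near it the projections are inactive and any strictly convex mirror map only reshapes the local dynamics through the fixed positive-definite Hessian $\grad^2\Omega(0)$, so --- exactly as in the proof of \cref{thm:md_bad} --- it is enough to treat the Euclidean case (plain GD/OGD).

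With $\grad_x\mathcal L = y$, $\grad_y\mathcal L = x$ and (constant) step sizes $\eta_x,\eta_y>0$, the two update rules read
\begin{align*}
  x^{k+1} &= x^k - \eta_x\bigl(2y^k - y^{k-1}\bigr), &
  y^{k+1} &= y^k + \eta_y\,x^k .
\end{align*}
Stacking $z^k\triangleq(x^k,y^k,y^{k-1})^\top$ turns this into a linear recursion $z^{k+1}=Mz^k$, and a one-line determinant expansion gives a characteristic polynomial that depends on the step sizes only through $c\triangleq\eta_x\eta_y>0$:
\[
  \det(\lambda I - M)=p(\lambda)=\lambda^3-2\lambda^2+(1+2c)\lambda-c .
\]
The heart of the argument is to show $\rho(M)>1$ for every $c>0$. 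One clean way: $p(0)=-c<0$ and $p(\tfrac12)=\tfrac18>0$, and a short sign/monotonicity check shows $p$ has exactly one real root $t\in(0,\tfrac12)$; since $p(t)=0$ forces $c=t(1-t)^2/(1-2t)$, the elementary-symmetric relations for the remaining complex-conjugate pair $re^{\pm i\phi}$ (product of roots $=c$, sum $=2$) pin down
\[
  r^2=\frac{(1-t)^2}{1-2t}=1+\frac{t^2}{1-2t}>1,\qquad 0<t<\tfrac12 .
\]
(Equivalently, invoke the Schur--Cohn/Jury criterion: a necessary condition for all roots of $\lambda^3+b_2\lambda^2+b_1\lambda+b_0$ to lie in the open unit disk is $|b_0b_2-b_1|<1-b_0^2$, which here reads $1<1-c^2$ and fails for every $c\neq 0$.) Either route shows $M$ has an eigenvalue of modulus exceeding $1$ for all admissible step sizes.

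Since $\rho(M)>1$, the origin is a linearly unstable fixed point of $z\mapsto Mz$: the stable eigenspace is at most one-dimensional, so for Lebesgue-almost-every initialization $z^1$ the sequence $z^k$ --- hence $(x^k,y^k)$ --- fails to converge to $(0,0)$ (indeed it diverges). Combined with the symmetry reduction, no pairing of one OMD player and one MD player achieves LIC on $\mathcal L(x,y)=xy$, which proves the lemma. The step I expect to cost the most effort is establishing $\rho(M)>1$ uniformly in $c$: the factored formula for $r^2$ makes the final inequality transparent, but one first has to show $p$ has a single real root in $(0,\tfrac12)$, which needs a separate glance at the regime $c\le\tfrac16$ where $p'$ acquires real critical points --- or one can bypass this entirely by quoting Schur--Cohn.
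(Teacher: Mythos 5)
Your proposal is correct and takes essentially the same route as the paper: the same bilinear counterexample $\min_x\max_y\,xy$, the same augmentation to a three-dimensional linear recursion, and the same cubic characteristic polynomial (yours, $\lambda^3-2\lambda^2+(1+2c)\lambda-c$ with $c=\eta_x\eta_y$, is the negative of the paper's with $c=\eta^2$). The only substantive difference is that you actually justify the spectral claim the paper merely asserts (``two of the three eigenvalues have modulus greater than one''), via the root-product identity $r^2=(1-t)^2/(1-2t)>1$ or the Schur--Cohn condition, and you make the symmetry reduction for the two player-orderings and the local reduction from general mirror maps to the Euclidean case explicit.
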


All proofs are provided in \cref{sec:theory}. This result is reflected in \cref{fig:minmax_xy}c, where we can see that combining one GD player with one OGD player cycles rather than reaching LIC. 
We then need to show that OMD's guarantees carry over to the specific setting we require. Typically, convergence results for (O)MD set $\Omega_x = \Omega_y$, but in practice $\mathcal X$ and $\mathcal Y$ may be such that optimization is more natural with different MD algorithms for $x$ and $y$. For example, $\mathcal X$ may be the probability simplex, for which MWU is a natural approach, while $\mathcal Y$ may be the non-negative reals, for which projected GD is more appropriate. Because the OMD-inspired CRL algorithm we derive in \cref{sec:reload} is more naturally implemented with $\Omega_x \neq \Omega_y$, we'd like to show that MD achieves LIC in this setting. To do so, we turn to monotone operator theory \citep{bauschke2011monotone}.

\subsubsection{Analysis via Monotone Operators}
Here, we'll prove the LIC of a generalized form of OMD which permits different Bregman divergences by casting OMD as the application of several \textit{monotone operators}. In the next section, we'll show that the CMDP Lagrangian can be solved by this method. 
We first provide a few definitions:
\begin{definition}[Set-Valued Operator]
    An operator $F$ on a Hilbert space $\mathcal H$ is said to be \textit{set-valued} if $F$ maps a point in $\mathcal H$ to subset of $\mathcal H$. We denote this by $F: \mathcal H \rightrightarrows \mathcal H$. 
\end{definition}
\begin{definition}[Graph]
    The \textit{graph} of an operator $F$ is
    \begin{align*}
        \mathrm{Gra} \ F = \{(x, u) \mid u \in F(x) \} \subseteq \mathcal H \times \mathcal H.
    \end{align*}
\end{definition}
\begin{definition}[Monotone Operator]
    An operator $F$ on a Hilbert space $\mathcal H$ is said to be $m$-\textit{strongly monotone} if 
    \begin{align*}
        \langle F(x_1) - F(x_2), x_1 - x_2 \rangle \geq \frac{m}{2}\|x_1 - x_2\|^2 \quad \forall x_1, x_2 \in \mathcal H,
    \end{align*}
    with $m > 0$. If the inequality holds with $m=0$, $F$ is simply called monotone.
    $F$ is \textit{maximal monotone} if there is no other monotone operator $G$ such that $\mathrm{Gra} \ F \subset \mathrm{Gra} \ G$. 
\end{definition}

\paragraph{Mirror Descent as Fixed Point Iteration} 
Let $\ell(x) \triangleq \mathcal L(x, y^k)$. Then \cref{eq:md} can be written as
\begin{align*}
    x^{k+1} 
    &= \argmin_{x\in\reals^d}\  \langle \grad\ell(x^k), x \rangle + \frac{1}{\eta^k} D_\Omega(x\text{;} x^k) + \mathbb I_\mathcal X(x),
\end{align*}
where $\mathbb I_\mathcal X(x)$ is the indicator function which equals $0$ if $x\in\mathcal X$ and $+\infty$ if $x \notin\mathcal X$.
Because this problem is convex, we only need a first-order optimality condition, and it is equivalent to solving the following inclusion problem:
\begin{align*}
\begin{split}
    &0 \in \grad\ell(x^k) + \frac{1}{\eta^k}(\grad\Omega(x) - \grad\Omega(x^k)) + N_\mathcal X(x) \\
    &\iff \grad\Omega(x^k) - \eta^k \grad\ell(x^k) \in (\grad\Omega + \eta^k N_\mathcal X)(x) \\
    &\iff x \in \mathrm{Prt}_{\eta^k N_\mathcal X}^\Omega(\grad\Omega(x^k) - \eta^k \grad\ell(x^k)), 
\end{split}
\end{align*}
where $N_\mathcal X = \partial \mathbb I_\mathcal X$ is the normal cone operator for $\mathcal X$ and $\mathrm{Prt}_{\eta^k N_\mathcal X}^\Omega = (\grad\Omega + \eta^k N_\mathcal X)^{-1}$ is the \textit{proto-resolvent} of $\eta^k N_\mathcal X$ relative to $\Omega$ \citep{reich2011bregman}. Thus, mirror descent can be seen as performing fixed point iteration:
\begin{align}
    x^{k+1} = \mathrm{Prt}_{\eta^k N_\mathcal X}^\Omega(\grad\Omega - \eta^k \grad\ell)(x^k).
\end{align}

For optimistic mirror descent, we write the update as
\begin{align*}
    x^{k+1} = \mathrm{Prt}_{\eta^k N_\mathcal X}^\Omega(&\grad\Omega(x^k) - \eta^k\grad\ell(x^k) \\  &- \eta^{k-1}(\grad\ell(x^k) - \grad\ell(x^{k-1})) ).
\end{align*}
In general, we can replace $N_\mathcal X$ with any maximal monotone operator $A$ and $\grad \ell$ with any monotone and $L$-Lipschitz operator $B$:
\begin{align} \label{eq:omd_monotone}
\begin{split}
    x^{k+1} = \mathrm{Prt}_{\eta^k A}^\Omega(&\grad\Omega(x^k) - \eta^kB(x^k)  \\ &- \eta^{k-1}(B(x^k) - B(x^{k-1}))).
\end{split}
\end{align}
When $\Omega(\cdot) = \frac{1}{2}\|\cdot\|^2$, this is \textit{forward-reflected-backward splitting} \citep{malitsky2020forb}. We now generalize the convergence result of \citet{malitsky2020forb} to OMD 
for split monotone inclusion problems of the form:
\begin{align} \label{eq:monotone_inc}
    \mathrm{find }\  x \in \mathcal H \quad \mathrm{s.t.} \quad 0 \in (A + B)(x).
\end{align}

\begin{restatable}[Convergence]{theorem}{convergence} \label{thm:convergence}
    Let $A: \mathcal H \rightrightarrows \mathcal H$ be maximal monotone and let $B: \mathcal H \to \mathcal H$ be monotone and $L$-Lipschitz and suppose that $(A+B)^{-1}(0) \neq \varnothing$. Suppose that $(\eta^k) \subseteq [\varepsilon, \frac{1 - 2\varepsilon}{2L}]$ for some $\varepsilon > 0$. Given $x^0, x^{-1} \in \mathcal H$, define the sequence $(x^k)$ according to \cref{eq:omd_monotone} with $\Omega$ $\sigma$-strongly convex, $\sigma\geq 1$. Then $(x^k)$ converges weakly to a point contained in $(A + B)^{-1}(0)$. 
\end{restatable}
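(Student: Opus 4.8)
The plan is to adapt the forward--reflected--backward (FoRB) analysis of \citet{malitsky2020forb} to the Bregman setting: carry the constants produced by $\sigma$-strong convexity of $\Omega$ and replace $\tfrac12\|\cdot\|^2$ by $D_\Omega$ wherever the Euclidean argument uses a squared norm. Fix $x^\star \in (A+B)^{-1}(0)$, so $-B(x^\star)\in A(x^\star)$. First I would rewrite \cref{eq:omd_monotone} in inclusion form: $x^{k+1}=\mathrm{Prt}^\Omega_{\eta^k A}(w^k)$ with $w^k=\grad\Omega(x^k)-\eta^k B(x^k)-\eta^{k-1}(B(x^k)-B(x^{k-1}))$ means exactly
\begin{equation*}
u^{k+1} \;\triangleq\; \tfrac{1}{\eta^k}\bigl(\grad\Omega(x^k)-\grad\Omega(x^{k+1})\bigr)-B(x^k)-\tfrac{\eta^{k-1}}{\eta^k}\bigl(B(x^k)-B(x^{k-1})\bigr) \;\in\; A(x^{k+1}).
\end{equation*}
Pairing this with $-B(x^\star)\in A(x^\star)$ and using monotonicity of $A$, then monotonicity of $B$ to drop $\langle B(x^{k+1})-B(x^\star),x^{k+1}-x^\star\rangle\ge0$, and finally the Bregman three-point identity
\begin{equation*}
\langle \grad\Omega(x^k)-\grad\Omega(x^{k+1}),\,x^{k+1}-x^\star\rangle = D_\Omega(x^\star;x^k)-D_\Omega(x^\star;x^{k+1})-D_\Omega(x^{k+1};x^k),
\end{equation*}
gives a one-step estimate in which the only terms of indefinite sign are inner products of $B(x^k)-B(x^{k-1})$ against $x^{k+1}-x^\star$ and against $x^{k+1}-x^k$.

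Second, I would convert this into a Lyapunov descent. Split $\langle B(x^k)-B(x^{k-1}),x^{k+1}-x^\star\rangle$ into the telescoping part $\langle B(x^k)-B(x^{k-1}),x^k-x^\star\rangle$ and the residual $\langle B(x^k)-B(x^{k-1}),x^{k+1}-x^k\rangle$, and define, for a constant $\beta$ fixed by the computation,
\begin{equation*}
\Phi^k \;=\; D_\Omega(x^\star;x^k)\;+\;\eta^{k-1}\langle B(x^k)-B(x^{k-1}),\,x^k-x^\star\rangle\;+\;\beta\|x^k-x^{k-1}\|^2 .
\end{equation*}
Bounding the residual and the coupling term with $L$-Lipschitzness of $B$, Cauchy--Schwarz and Young's inequality, and using $\sigma$-strong convexity --- so that $D_\Omega(x^{k+1};x^k)\ge\tfrac{\sigma}{2}\|x^{k+1}-x^k\|^2\ge\tfrac12\|x^{k+1}-x^k\|^2$ absorbs the $\eta^k L$ cross terms and $D_\Omega(x^\star;x^k)\ge\tfrac{\sigma}{2}\|x^\star-x^k\|^2$ keeps $\Phi^k\ge0$ after the coupling term is dominated --- one arrives at $\Phi^{k+1}\le\Phi^k-(c-c'L\eta^k)\|x^{k+1}-x^k\|^2$ for absolute constants $c,c'$. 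The hypothesis $\eta^k\in[\varepsilon,\tfrac{1-2\varepsilon}{2L}]$ is exactly what makes $c-c'L\eta^k\ge\varepsilon'>0$, and $\sigma\ge1$ is precisely what lets the Euclidean ($\sigma=1$) computation of \citet{malitsky2020forb} go through unchanged.

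Third comes the standard endgame. Telescoping the descent inequality shows $(\Phi^k)$ is nonincreasing and bounded below, hence convergent, so $\sum_k\|x^{k+1}-x^k\|^2<\infty$ and $\|x^{k+1}-x^k\|\to0$; boundedness of $D_\Omega(x^\star;x^k)$ then yields boundedness of $(x^k)$. Take a weak cluster point $\bar x$, $x^{k_j}\rightharpoonup\bar x$; since $\|x^{k_j+1}-x^{k_j}\|\to0$, also $x^{k_j+1}\rightharpoonup\bar x$. From the inclusion above, $u^{k+1}+B(x^{k+1})\in(A+B)(x^{k+1})$, and Lipschitzness of $B$ together with continuity of $\grad\Omega$ on bounded sets gives $u^{k_j+1}+B(x^{k_j+1})\to0$ strongly; since $A+B$ is maximal monotone ($\mathrm{dom}\,B=\mathcal H$), its graph is sequentially closed in the strong$\times$weak topology, so $0\in(A+B)(\bar x)$. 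Finally, the argument applies verbatim with $x^\star$ replaced by any $z\in(A+B)^{-1}(0)$, so $\lim_k D_\Omega(z;x^k)$ exists for every such $z$; an Opial-type lemma for the Bregman distance then forces uniqueness of the weak cluster point, giving $x^k\rightharpoonup\bar x\in(A+B)^{-1}(0)$.

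The main obstacle is the second step: the Bregman three-point identity, unlike its Euclidean special case, leaves cross terms that do not telescope cleanly, and lining up the coefficient of $\|x^{k+1}-x^k\|^2$ with that of the coupling term requires invoking $\sigma\ge1$ twice --- once to let $D_\Omega(x^{k+1};x^k)$ absorb the $\eta^k L$ cross terms, once to keep $\Phi^k$ nonnegative. A secondary subtlety, in the third step, is that the passage to the limit must be routed through the residual $u^{k+1}+B(x^{k+1})$, which vanishes \emph{strongly}, rather than through $B$ directly, since a monotone $L$-Lipschitz operator need not be weak-to-weak continuous --- this is where maximal monotonicity of $A$ does the real work.
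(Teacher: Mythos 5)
Your proposal is correct and follows essentially the same route as the paper: it adapts the forward--reflected--backward analysis of Malitsky--Tam to the Bregman setting via the three-point identity, uses the same Lyapunov quantity $D_\Omega(x^\star;x^k)+\eta^{k-1}\langle B(x^k)-B(x^{k-1}),x^k-x^\star\rangle+\tfrac14\|x^k-x^{k-1}\|^2$, invokes $\sigma$-strong convexity with $\sigma\ge1$ in the same two places, and closes with summability of $\|x^{k+1}-x^k\|^2$, demiclosedness of the graph of the maximal monotone sum, and an Opial-type lemma. The paper merely packages the first two steps as separate preparatory lemmas (\cref{thm:bregman_bound} and \cref{thm:bregman_bound2}); the content is the same.
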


\begin{restatable}[Convergence Rate]{theorem}{rate} \label{thm:rate}
    In the setting of \cref{thm:convergence}, if $A$ or $B$ is $m$-strongly monotone and $\Omega$ has an $\Lo$-Lipschitz continuous gradient, then $(x^k)$ converges at a rate $\mathcal O(1/\alpha^k)$ to the unique element $x^\star \in (A + B)^{-1}(0)$, where $\alpha>1$ is a constant.
\end{restatable}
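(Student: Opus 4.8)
The plan is to bootstrap the Fej\'er-type (energy) inequality established in the proof of \cref{thm:convergence} into a linear contraction by feeding in the strong monotonicity. Recall that the proof of \cref{thm:convergence} produces a Lyapunov sequence --- schematically $\Phi_k = 2D_\Omega(x^\star; x^k) + c\,\|x^k - x^{k-1}\|^2 + 2\eta^{k-1}\langle B(x^k) - B(x^{k-1}), x^k - x^\star\rangle$ --- together with a per-step decrease of the form
\begin{align*}
    \Phi_{k+1} + \delta_k\,\|x^{k+1} - x^k\|^2 \;\le\; \Phi_k ,
\end{align*}
where $\delta_k \ge \delta > 0$ uniformly because $(\eta^k)$ lives in the compact window $[\varepsilon, \tfrac{1-2\varepsilon}{2L}]$ and $\Omega$ is $\sigma$-strongly convex with $\sigma\ge 1$. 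First I would observe that if either $A$ or $B$ is $m$-strongly monotone then $A+B$ is $m$-strongly monotone (the other summand being at least monotone), so $(A+B)^{-1}(0)$ is a singleton $\{x^\star\}$; this is the source of the ``unique element'' in the statement.

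Next I would revisit the single place in the proof of \cref{thm:convergence} where monotonicity of $A+B$ is invoked at the pair $(x^{k+1}, x^\star)$ --- i.e.\ the inequality $\langle u, x^{k+1} - x^\star\rangle \ge 0$ for the particular $u \in (A+B)(x^{k+1})$ that the update \cref{eq:omd_monotone} selects --- and sharpen it to $\langle u, x^{k+1} - x^\star\rangle \ge \tfrac{m}{2}\|x^{k+1} - x^\star\|^2$, which is exactly what the strong-monotonicity definition provides (for set-valued $A$, the bound over all subgradients is what is available, and it is precisely what enters here). Running the same algebra through yields the strengthened recursion
\begin{align*}
    \Phi_{k+1} + \delta_k\,\|x^{k+1} - x^k\|^2 + \rho_k\,\|x^{k+1} - x^\star\|^2 \;\le\; \Phi_k ,
\end{align*}
with $\rho_k \ge c_0\, m\,\eta^k \ge c_0\, m\,\varepsilon > 0$.

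Finally I would convert this additive gain into a multiplicative one. The $\Lo$-Lipschitz continuity of $\grad\Omega$ gives $2D_\Omega(x^\star; x^{k+1}) \le \Lo\,\|x^{k+1} - x^\star\|^2$; a Young split combined with $L$-Lipschitzness of $B$ bounds the cross term appearing in $\Phi_{k+1}$ by $\tfrac12\rho_k\|x^{k+1}-x^\star\|^2$ plus a multiple of $\|x^{k+1}-x^k\|^2$; and the term $c\|x^{k+1}-x^k\|^2$ is already of the subtracted form. Collecting these, $\Phi_{k+1} \le C_1\,\|x^{k+1}-x^\star\|^2 + C_2\,\|x^{k+1}-x^k\|^2$ for constants $C_1, C_2$ depending only on $\Lo, L, c, \varepsilon$, so the choice $\alpha - 1 = \min\{\rho_k/(2C_1),\, \delta_k/C_2\} > 0$ (uniform in $k$) gives $\alpha\,\Phi_{k+1} \le \Phi_k$, hence $\Phi_k \le \alpha^{-k}\Phi_0 = \mathcal O(1/\alpha^k)$. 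Using the same Young split one also checks $\Phi_k \ge 2D_\Omega(x^\star;x^k) \ge \sigma\|x^k - x^\star\|^2$, so $\|x^k - x^\star\|^2 = \mathcal O(1/\alpha^k)$; in particular this is strong convergence, upgrading the weak convergence of \cref{thm:convergence}.

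The main obstacle I anticipate is purely bookkeeping on the constants: making $\alpha > 1$ \emph{uniformly} in $k$ requires that every ``good'' coefficient ($\delta_k$, $\rho_k$) is bounded away from $0$ and every ``bad'' coefficient ($C_1, C_2$ and the Young parameters) is bounded above --- which is exactly what $\eta^k \in [\varepsilon, \tfrac{1-2\varepsilon}{2L}]$ and the two-sided bound $\tfrac{\sigma}{2}\|\cdot\|^2 \le D_\Omega(\cdot) \le \tfrac{\Lo}{2}\|\cdot\|^2$ deliver. A secondary subtlety is keeping the two cases aligned: when $B$ is strongly monotone the extra $\|x^{k+1}-x^\star\|^2$ term surfaces when estimating $B(x^{k})$ against $B(x^\star)$, whereas when $A$ is strongly monotone it surfaces in the proto-resolvent step; both, however, feed into the identical recursion above.
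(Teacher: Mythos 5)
Your proposal is correct and follows essentially the same route as the paper: sharpen the monotonicity step in the key lemma to its $m$-strongly monotone form, use the two-sided bound $\tfrac{\sigma}{2}\|\cdot\|^2 \le D_\Omega(\cdot;\cdot) \le \tfrac{\Lo}{2}\|\cdot\|^2$ together with Young's inequality and the $L$-Lipschitzness of $B$ to turn the additive gain into a uniform multiplicative contraction of the Lyapunov energy, and iterate to get $\mathcal O(1/\alpha^k)$. The only cosmetic difference is how the two cases are unified --- the paper shifts $A+B=(A+mI)+(B-mI)$ to reduce to $A$ strongly monotone, while you track where the extra quadratic term surfaces in each case --- but the recursion and constants are the same.
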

In the following section, we'll show that this approach can be adapted to the RL setting to achieve LIC for CMDPs.


\section{Applying Optimistic Optimization to RL} \label{sec:reload}
Next, we will show that the guarantees we developed in the previous section apply to OMD in CMDPs. We begin by writing the CMDP Lagrangian problem as follows:
\begin{align}
    \min_{d_\pi \in \reals^{|\mathcal S||\mathcal A|}} \max_{\mu\in \reals^{N}} \ \mathbb I_{\mathcal K}(d_\pi) + \mathcal L(d_\pi, \mu) + \mathbb I_{\mathbb R_{\geq 0}^N}(\mu).
\end{align}
 We can then express the CMDP problem in the form of \cref{eq:monotone_inc} by noting that a SP must satisfy: 
\begin{align} \label{eq:saddlepoint_monotone}
    \mathrm{find }\  \begin{bmatrix} d_\pi \\ \mu \end{bmatrix}  \  \mathrm{s.t.} \ 
    \begin{bmatrix} 0 \\ 0 \end{bmatrix} \in \begin{bmatrix}
        \partial  \mathbb I_{\mathcal K}(d_\pi) \\
        \partial \mathbb I_{\mathbb R_{\geq 0}^N}(\mu)
    \end{bmatrix}
    + \begin{bmatrix}
        \grad_{d_\pi} \mathcal L(d_\pi, \mu) \\
        -\grad_\mu \mathcal L(d_\pi, \mu)
    \end{bmatrix}
\end{align}
Define ${\grad_{d_\pi} \mathcal L} = - r_0 + \sum_{n=1}^N \mu_n  r_n \triangleq  r_\mu$ as the \textit{mixed reward vector} for Lagrange multipliers $ \mu$. Substituting this and ${\grad_{\mu} \mathcal L}^k =  v_{1:N} -  \theta$ into the OMD updates above yields:
\begin{align}
\begin{split} \label{eq:convex_reload}
    {d_\pi}^{k+1} &= \argmin_{{d_\pi}\in \mathcal K} \ \langle {\tilde r_\mu}^k , {d_\pi} \rangle + \frac{1}{\eta} D_{\Omega_\pi}({d_\pi}\text{;} {d_\pi}^k)  \\
    {\mu}^{k+1} &= \argmax_{\mu \geq  0} \ \left\langle {\tilde v}_{1:N}^k -  \theta,  \mu \right\rangle - \frac{1}{\eta} D_{\Omega_\mu}(\mu\text{;} \mu^k).
\end{split}
\end{align}
where ${\tilde r_\mu}^k \triangleq 2{r_\mu}^k - {r_\mu}^{k-1}$ and ${\tilde v}_{1:N}^k \triangleq 2{ v}_{1:N}^k - { v}_{1:N}^{k-1}$.
Hereafter, we'll refer to this general family of approaches (determined by different choices of $\Omega_\pi$ and $\Omega_\mu$) as \textit{Reinforcement Learning with Optimistic Ascent-Descent} (ReLOAD). Based on the above, we can guarantee the LIC of ReLOAD.
\begin{restatable}[ReLOAD Convergence]{corollary}{convergenceconvexreload} \label{thm:reload_convergence_fixed}
    The sequence $((d_\pi^k, \mu^k))$ generated by \cref{eq:convex_reload} converges in the last-iterate for $\eta \in (0, 1 /2)$. 
\end{restatable}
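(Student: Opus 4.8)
The plan is to recognize \cref{eq:convex_reload} as exactly the abstract optimistic iteration \cref{eq:omd_monotone} applied to the split monotone inclusion \cref{eq:saddlepoint_monotone}, and then to invoke \cref{thm:convergence}. Concretely, take $\mathcal H = \reals^{|\St||\A|}\times\reals^N$, write the stacked iterate $x^k = (d_\pi^k,\mu^k)$, and set
\begin{align*}
    A(d_\pi,\mu) = \partial\mathbb I_{\mathcal K}(d_\pi)\times\partial\mathbb I_{\reals^N_{\geq 0}}(\mu), \qquad B(d_\pi,\mu) = \big(r_\mu,\ \theta - v_{1:N}\big),
\end{align*}
with $\Omega(d_\pi,\mu) = \Omega_\pi(d_\pi) + \Omega_\mu(\mu)$. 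With these identifications, the fixed-point derivation in the paragraph ``Mirror Descent as Fixed Point Iteration'' shows that stacking the two updates in \cref{eq:convex_reload} --- the $\mu$-update being a maximization, hence contributing $-\grad_\mu\mathcal L = \theta - v_{1:N}$ to $B$, consistent with \cref{eq:saddlepoint_monotone} --- under the constant step-size $\eta^k\equiv\eta^{k-1}\equiv\eta$ is precisely \cref{eq:omd_monotone}, since then $2B(x^k)-B(x^{k-1})$ reproduces the optimistic vectors $\tilde r_\mu^k$ and $\tilde v_{1:N}^k - \theta$.

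It then remains to verify the hypotheses of \cref{thm:convergence}. (i) $A$ is maximal monotone: each indicator $\mathbb I_{\mathcal K}$, $\mathbb I_{\reals^N_{\geq 0}}$ is proper, closed and convex, so its subdifferential (the corresponding normal-cone operator) is maximal monotone, and a direct sum of maximal monotone operators is maximal monotone. (ii) $B$ is monotone and $L$-Lipschitz: $B$ is affine, $B(x) = Mx + c$, with linear part $M = \left[\begin{smallmatrix} 0 & R^\top\\ -R & 0\end{smallmatrix}\right]$ arising from the bilinear coupling $\sum_n\mu_n\langle r_n,d_\pi\rangle$ (with $R$ the matrix whose rows are the $r_n$); $M$ is skew-symmetric, so $\langle B(x_1)-B(x_2),x_1-x_2\rangle = \langle M(x_1-x_2),x_1-x_2\rangle = 0$, giving monotonicity, and $B$ is $L$-Lipschitz with $L = \|M\| = \|R\|$, which under the usual normalization of the constraint rewards is at most $1$. (iii) $(A+B)^{-1}(0)$ is exactly the set of saddle points of $\mathcal L$ on $\mathcal K\times\reals^N_{\geq 0}$; since $\mathcal K$ is a nonempty compact polytope and strong duality holds for the underlying LP (Slater's condition), this set is nonempty. (iv) Choosing (or rescaling) $\Omega_\pi$ and $\Omega_\mu$ to be $1$-strongly convex on $\mathcal K$ and $\reals^N_{\geq 0}$ respectively makes $\Omega$ $\sigma$-strongly convex with $\sigma\geq 1$.

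Given these facts, the step-size argument is immediate: for $\eta\in(0,1/2)$ and $L\leq 1$, the choice $\varepsilon = \min\{\eta,\ (1-2L\eta)/2\}>0$ satisfies $\eta\in[\varepsilon,\tfrac{1-2\varepsilon}{2L}]$, so \cref{thm:convergence} applies and yields weak convergence of $(d_\pi^k,\mu^k)$ to a point of $(A+B)^{-1}(0)$. Since $\mathcal H$ is finite-dimensional, weak convergence is norm convergence, so $(d_\pi^k,\mu^k)$ converges to a saddle point of the CMDP Lagrangian, which is exactly the claimed last-iterate convergence. The main obstacle is the bookkeeping needed to make the clean statement ``$\eta\in(0,1/2)$'' line up with the threshold $\tfrac{1-2\varepsilon}{2L}$ of \cref{thm:convergence}: this rests on normalizing the coupling operator so that its Lipschitz constant $L$ is at most $1$ and on picking mirror maps with strong-convexity modulus $\sigma\geq 1$, together with carefully tracking the sign conventions between the maximization step for $\mu$ and the inclusion form \cref{eq:saddlepoint_monotone}. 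Once those normalizations are in place, the corollary is a direct specialization of \cref{thm:convergence}.
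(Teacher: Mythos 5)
Your proposal is correct and follows essentially the same route as the paper: stack the iterates as $x^k=(d_\pi^k,\mu^k)$, take $A$ to be the concatenated normal-cone operators and $B$ the (sign-flipped) gradient field of the bilinear Lagrangian, verify maximal monotonicity of $A$ and monotonicity plus $1$-Lipschitzness of $B$, and apply the fixed-step-size version of the convergence theorem. Your write-up is in fact somewhat more careful than the paper's, which leaves the skew-symmetry argument, the nonemptiness of the saddle-point set, and the finite-dimensional weak-to-norm step implicit.
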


At first, glance, one might think that \cref{thm:reload_convergence_fixed} violates the Scalarization Fallacy \cite{szepesvari2020cmdps},\citep[Lemma 1]{zahavy2021reward}---as $\mu^k$ converges to $\mu^\star$, the policy is optimizing an increasingly stationary reward. However, this is not the case: $(d_\pi^k, \mu^k)$ are jointly converging (last-iterate) to the optimal SP $(d_\pi^\star, \mu^\star)$. This implies that $\pi^\star$ is an optimal policy w.r.t to the $\mu^\star$-weighted reward $r_{\mu^\star}$. However, there might exist other policies that are optimal w.r.t to the $r_{\mu^\star}$ that are not in Nash equilibrium with $\mu^\star$. ReLOAD is guaranteed to converge in last iterate to $\pi^\star$ and not to these other policies. An algorithm that maximizes the stationary reward $r_{\mu^\star}$, on the other hand, will be optimal w.r.t to $r_{\mu^\star}$ but will not necessarily return $\pi^\star$ and therefore will not be in Nash equilibrium with $\mu^
\star$. We revisit this observation in our experiments, with \cref{fig:osc} showing that simply optimizing $r_{\mu^\star}$ fails to match ReLOAD's performance.  
\begin{figure*}[!t]
    \centering
    \includegraphics[width=0.99\textwidth]{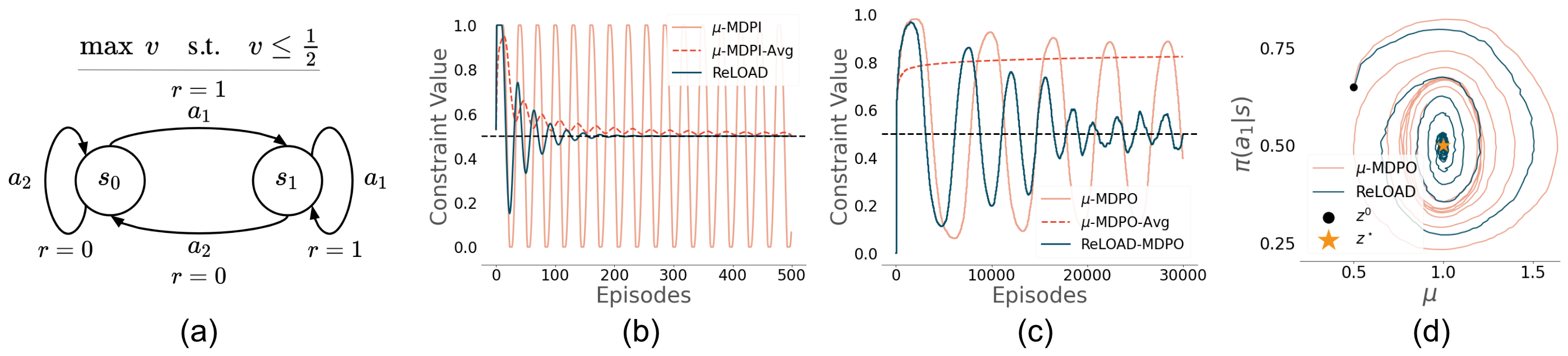}
    \caption{Optimization in a simple CMDP. (a) Schematic of a CMDP whose constraint and reward conflict. (b) Adding ReLOAD to tabular policy iteration damps oscillations. (c) ReLOAD also damps oscillations with function approximation, a setting in which averaging parameters no longer works. (d) Unlike the baseline, ReLOAD successfully converges to the SP with function approximation. }
    \label{fig:toy}
    \vspace{-3mm}
\end{figure*}
\subsection{Policy-Based ReLOAD} \label{sec:tabular_reload}
In larger settings, it is easier to optimize policies than occupancy measures. Accordingly, virtually all scalable RL algorithms either learn a policy directly or define one implicitly, e.g., via $q$-learning, such that the resulting occupancy measure is guaranteed to lie in the feasible set. Rewriting the value in terms of the policy $v_\pi = \langle  r,  d_\pi \rangle = \langle  q_\pi,  \pi \rangle$ is not convex in $\pi$. Nevertheless, policy optimization methods based on non-convex relaxations of convex objectives often converge to the optimal solution \citep{shani2020mdpotheory} and in particular, OMD has been shown to reach first-order---and occasionally global---equilibria in non-convex settings \citep{cai2022accelerated}. We can thus rewrite the Lagrangian as 
\begin{align*}
\begin{split}
    \mathcal L( d_\pi, \mu) &= \big\langle - r_0 + \sum_{n} \mu_n  r_n,  d_\pi \big\rangle - \langle  \mu,  \theta \rangle \\
    &= \big\langle - q_0^\pi + \sum_{n} \mu_n  q_n,  \pi \big\rangle - \langle \mu, \theta \rangle = \mathcal L(\pi, \mu),
\end{split}
\end{align*}
where $ q_n \triangleq  q_{\pi,r_n}$. We note that in the non-parametric setting, $\grad_\pi \mathcal L = - q_0 + \sum_{n=1}^N \mu_n  q_n \triangleq  q_\mu$. In other words, gradient estimation is equivalent to policy evaluation, resulting in the mixed $q$-value $ q_\mu$. As before, the gradient with respect to the Lagrange multipliers is the vector of constraint violations: $\grad_\mu \mathcal L =  v_{1:N} -  \theta$. When dealing with probability measures, a natural choice for $\Omega_\pi$ is the negative entropy $\Omega_\pi( u) = \langle  u, \log  u \rangle$, 
and setting $\Omega_\mu( u)=\frac{1}{2}\| u\|_2^2$, 
Applying OMD, ReLOAD then performs the following updates:
\begin{align*} 
\begin{split}
    \label{eq:tabular_reload_policy}
     \pi^{k+1} &= \argmin_{ \pi\in\Pi} \ -\langle \textcolor{MidnightBlue}{2} q_\mu^k \textcolor{MidnightBlue}{-\  q_{\mu}^{k-1}}, \pi \rangle + \frac{1}{\eta} \kl[ \pi ||  \pi^k] \\
    &= \frac{ \pi^k \exp\left( ( \textcolor{MidnightBlue}{2} q_\mu^k \textcolor{MidnightBlue}{-\  q_\mu^{k-1}})/\eta_\pi^k \right)}{\left\langle  \pi^k \exp\left( (\textcolor{MidnightBlue}{2} q_\mu^k \textcolor{MidnightBlue}{-\  q_\mu^{k-1}})/\eta_\pi^k \right), \mathbf 1 \right\rangle} 
    \\
     \mu^{k+1} &= \argmax_{\mu\geq  0} \ \langle \textcolor{MidnightBlue}{2} v^k \textcolor{MidnightBlue}{-\  v^{k-1}} - \theta,  \mu \rangle - \frac{1}{2\eta} \| \mu -  \mu^k\|_2^2 \\
     &= \max\{ \mu^k + \eta_\mu^k (\textcolor{MidnightBlue}{2}  v^k_{1:N} \textcolor{MidnightBlue}{-\  v^{k-1}_{1:N}} -  \theta), \  0 \},
\end{split}
\end{align*}
with the full algorithm presented in \cref{alg:tabular_reload}.
Here, accurate policy evaluation is especially important, as estimating $q$ amounts to computing the gradient of the Lagrangian.

\subsection{ReLOAD with Function Approximation} \label{sec:reload_func_approx}
For large state and action spaces, 
function approximation, i.e., deep RL (DRL), becomes preferable. 
To implement ReLOAD using DRL, we require the following ingredients: (1) optimistic value estimates and (2) a trust region for the policy update. Fortunately, both requirements are easy to satisfy for most state-of-the-art policy optimization methods. For (1), a crucial factor is that with function approximation, rather than compute $q$-value estimates for every state-action pair in the environment, policy evaluation is performed over minibatches of sampled experience. This means that the agent can't simply store past gradients to compute optimistic values, as those gradients may have been obtained from the values of different $(s,a)$ pairs from those used to compute the current gradient. Instead, the agent must maintain a copy of the previous value network so that the optimistic $q$-values can be computed from the same samples: $\tilde q^k_\mu(s,a) = 2q^k_\mu(s,a) - q^{k-1}_\mu(s,a)$. Similarly, $\mu$ must also be updated using value estimates from the same data. This is a complication which makes using OMD directly, as optimistic GAN methods do \cite{daskalakis2018training_gans}, nontrivial for RL. As for (2), many high-performing policy optimization algorithms employ trust regions as a means of stabilizing learning and improving sample efficiency. Examples include TRPO \citep{schulman2015trpo}, MDPO \citep{tomar2021mdpo}, and MPO \citep{abdolmaleki2018mpo}. 


\section{Experiments} \label{sec:experiments_main}

Next, we study the LIC of  ReLOAD empirically on a variety of CMDPs with discrete and continuous state and action spaces. We augmented ReLOAD with popular DRL algorithms including: MD Policy Iteration \citep[MDPI;][]{geist2019mdpi}, MD Policy Optimization \citep[MDPO;][]{tomar2021mdpo}, IMPALA \citep{espeholt2018impala}, and distributional MPO \citep[DMPO;][]{abdolmaleki2020dmpo}. \textbf{Notation:} In the following, when augmenting unconstrained methods to perform Lagrangian optimization, we prefix the method name with ``$\mu$-''. 
For more detail on all algorithms, see \cref{sect:algorithm_details}. To measure performance, we use the \textit{weighted reward}: the task reward minus the multiplier-weighted constraint overshoot, given by
$
r_0 - \sum_{n=1}^N \hat\mu_n^\star \max\{r_n - \theta_n, 0\},
$
where $\hat\mu_n^\star$ is the normalized optimal Lagrange multiplier \citep{stooke2020pid}. 
$\hat\mu^\star$ was calculated by averaging the Lagrange multipliers learned by the non-optimistic baseline agents (details in \cref{sect:experiment_details}). Experiments were repeated over 8 random seeds, and error bars denote one standard error. Additional plots and result tables can be found in \cref{sect:experiment_results}. Videos of trained agents can be found at 
\url{https://tedmoskovitz.github.io/ReLOAD/}.

\textbf{Toy Example: A Paradoxical CMDP} In many real-world applications of CMDPs, constraints are introduced 
to ensure the system's integrity is maintained. For example, a robot may be trained to run as fast as possible but restrained so that it does not place sufficient torque on its joints to break them. To capture these conflicting goals in a simple setting, we tested tabular ReLOAD-MDPI (\cref{alg:tabular_reload}) on the two-state CMDP in \cref{fig:toy}a. In this task, there is a single constraint reward which is \textit{equal} to the primary reward $r_0 = r_1 = r$, so that $r=1$ when the agent takes action $a_1$ placing it in $s_1$, and $r=0$ for action $a_2$ which moves the agent to $s_2$. The constraint $\theta = 1/2$ was chosen so that the agent may only choose $a_1$ at most half of the time. Plotting the value over the course of learning in \cref{fig:toy}b, we can see that ReLOAD converges, while $\mu$-MDPI oscillates and fails to converge in the last-iterate. However, this approach does achieve AIC (``$\mu$-MDPI-Avg''). 
\begin{figure}[ht]
    \begin{center}
    \centerline{\includegraphics[width=0.55\columnwidth]{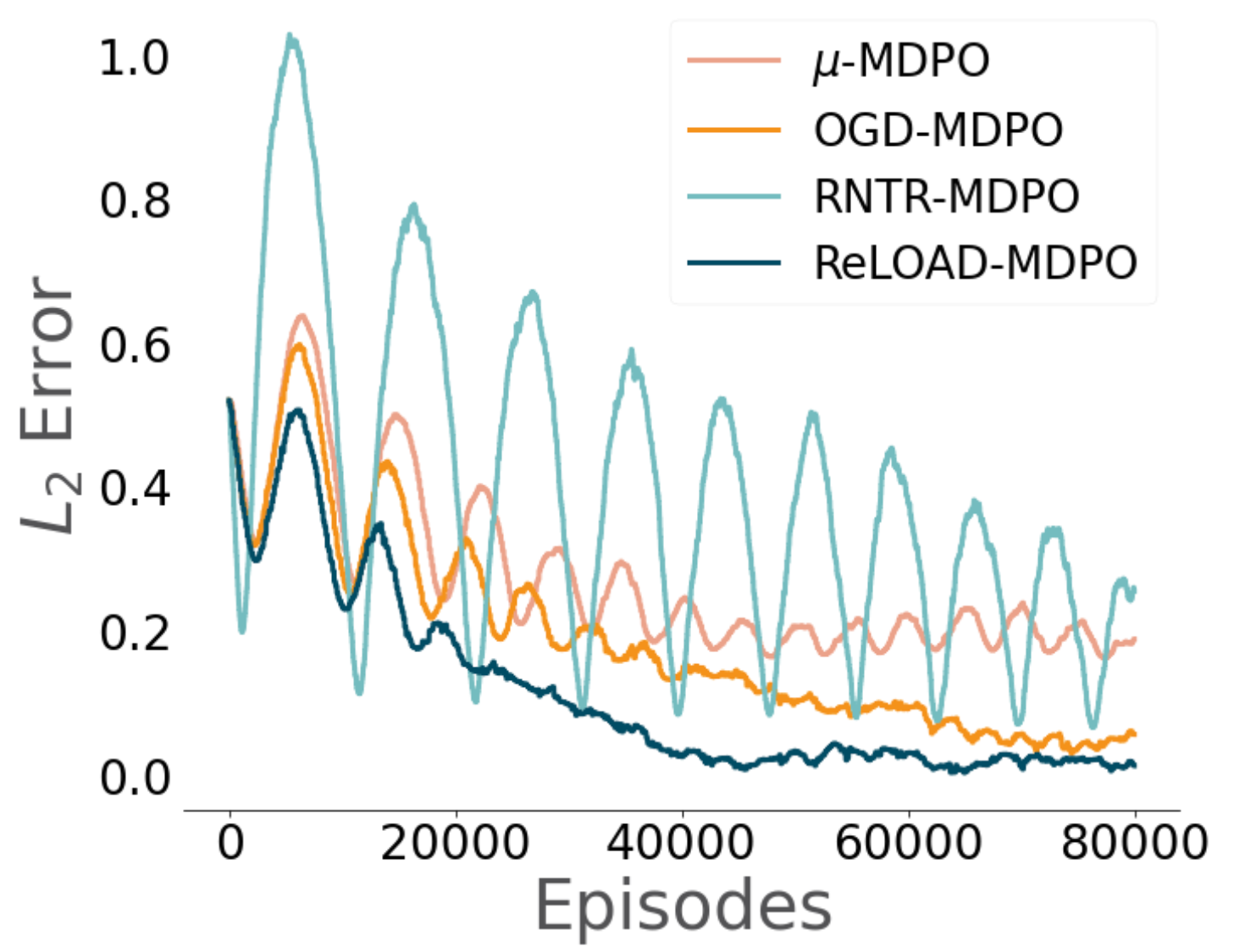}}
    \vspace{-3mm}
    \caption{Ablations in the toy CMDP. Both $\mu$-MDPO and RNTR-MDPO fail to converge. However, OGD nearly matches ReLOAD: there are only two states, so using the gradient from the previous minibatch will be close to the current gradient. }
    \label{fig:toy_ablations}
    \end{center}
    \vskip -0.2in
\end{figure}
To test performance with function approximation, we then applied ReLOAD-MDPO and $\mu$-MDPO to the same problem. As in the tabular case, ReLOAD significantly dampens oscillations compared to its non-optimistic counterpart (\cref{fig:toy}c). (However, some noise remains due to noise in the approximate policy evaluation.) Importantly, $\mu$-MDPO-Avg does not converge in this setting, as it corresponds to averaging the parameters of a nonlinear network. Examining the optimization trajectories in \cref{fig:toy}d, we can see that ReLOAD-MDPO converges to the SP, while $\mu$-MDPO gets ``stuck,'' circling but never reaching the optimum. 
In \cref{fig:toy_ablations}, we compare ReLOAD-MDPO against various ablations by measuring the $L_2$ distance from the SP over the course of training. Agents which use OGD directly on the policy parameters are prefixed by ``OGD-'', and ReLOAD with no trust region is denoted by ``RNTR-''. Both $\mu$-MDPO and RNTR-MDPO fail to converge. However, performing OGD directly on the policy parameters, rather than via optimistic value estimates, performs nearly as well as ReLOAD. This is because the CMDP only has two states, so the gradient computed from the previous minibatch will be close to the current gradient. This difference becomes significant on large-scale problems.

\textbf{Catch.} 
We then applied $\mu$-MDPO and ReLOAD-MDPO to a constrained version of Bsuite's Catch \citep{osband2019bsuite}. In the standard version of this task, the agent moves a paddle left or right to catch a falling ball. 
To convert this problem into a CMDP, we added a constraint reward $r_1$ which was 0.2 in the leftmost three columns of the environment and 0 elsewhere, with the constraint $\theta_1 = 1.0$. 
To both obey the constraint and catch the ball, the agent could only effectively make one trip per episode to the left side of the arena. In \cref{fig:catch}a, ReLOAD successfully damps oscillations and achieves LIC, while $\mu$-MDPO only achieves AIC. As an illustration of the consequences, we can see that when the Lagrange multiplier spikes upwards, indicating a constraint violation, the agent successfully catches the ball but lingers on the left side of the environment and thus violates the constraint (\cref{fig:catch}b, top). Conversely, a drop in the Lagrange multiplier indicates that the while the constraint is satisfied, performance suffers---the agent simply ignores balls falling on the left side of the arena (\cref{fig:catch}b, bottom). ReLOAD 
learns to stay to the right until the last moment, catching the ball while obeying the constraint (\cref{fig:catch}b, middle).

\begin{figure}[ht]
    \begin{center}
    \centerline{\includegraphics[width=0.99\columnwidth]{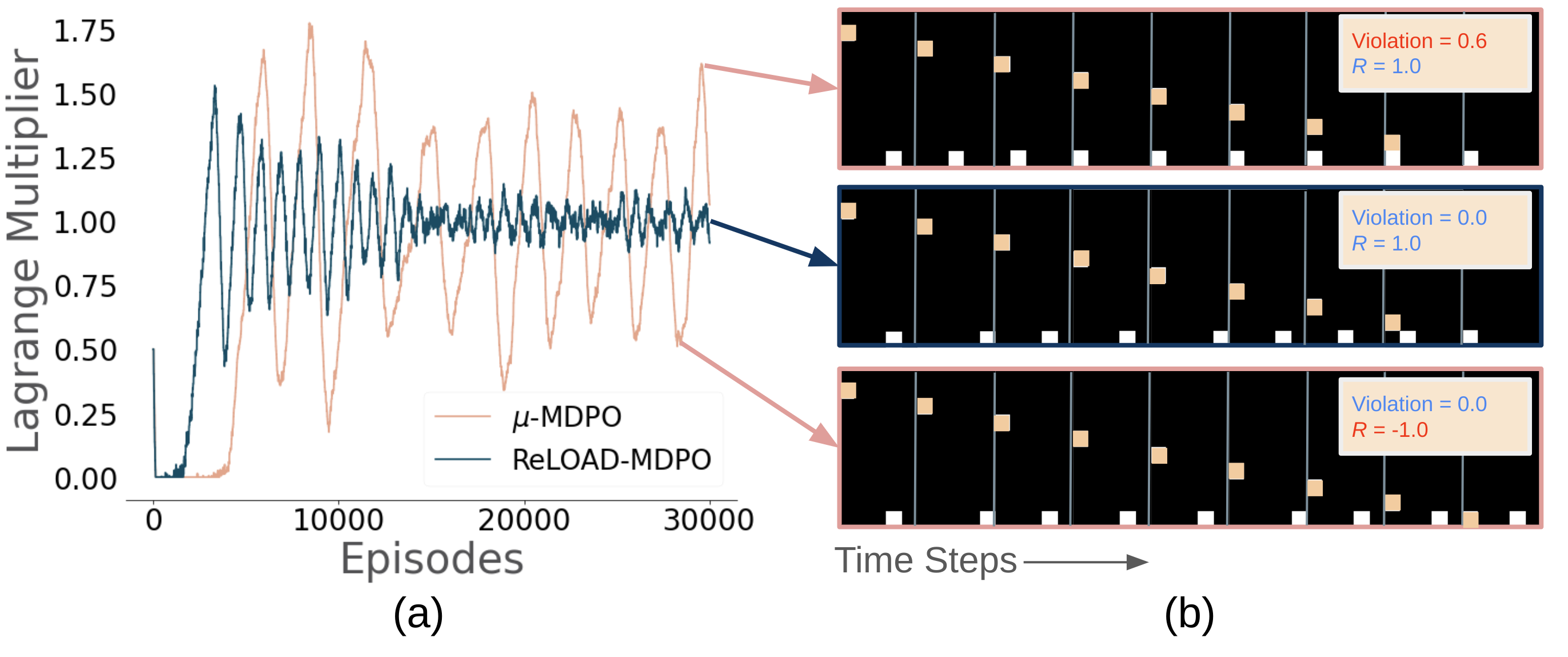}}
    \vspace{-3mm}
    \caption{LIC in Catch. (a) ReLOAD reduces oscillations. (b) ReLOAD catches the ball and obeys the constraint, while the standard method only does one or the other.}
    \label{fig:catch}
    \end{center}
    \vskip -0.2in
\end{figure}

\textbf{The Real-World RL Suite}  \citep[RWRL;][]{dulacarnold2019rwrl} is a collection of DeepMind Control Suite tasks modified with constraints as well as a variety of other real-world challenges which has become a benchmark for applied RL agents \citep{dulac2020empirical,huang2022lp3,calian2020metal,brunke2022rwrl}.
\begin{figure}[ht]
    \begin{center}
    \centerline{\includegraphics[width=0.99\columnwidth]{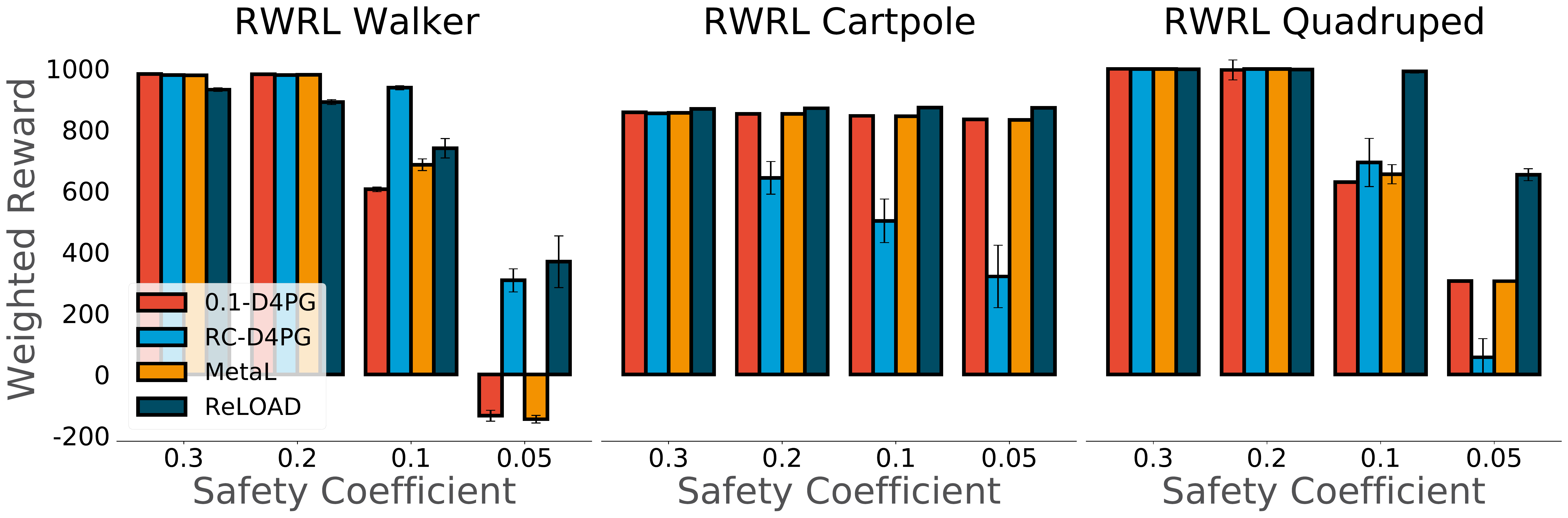}}
    \caption{ReLOAD outperforms baselines on the RWRL Suite for the most challenging safety coefficients (lower = harder).}
    \label{fig:rwrl_weighted_reward}
    \end{center}
    \vskip -0.3in
\end{figure}
\begin{figure*}[!t]
    \centering
    \includegraphics[width=0.95\textwidth]{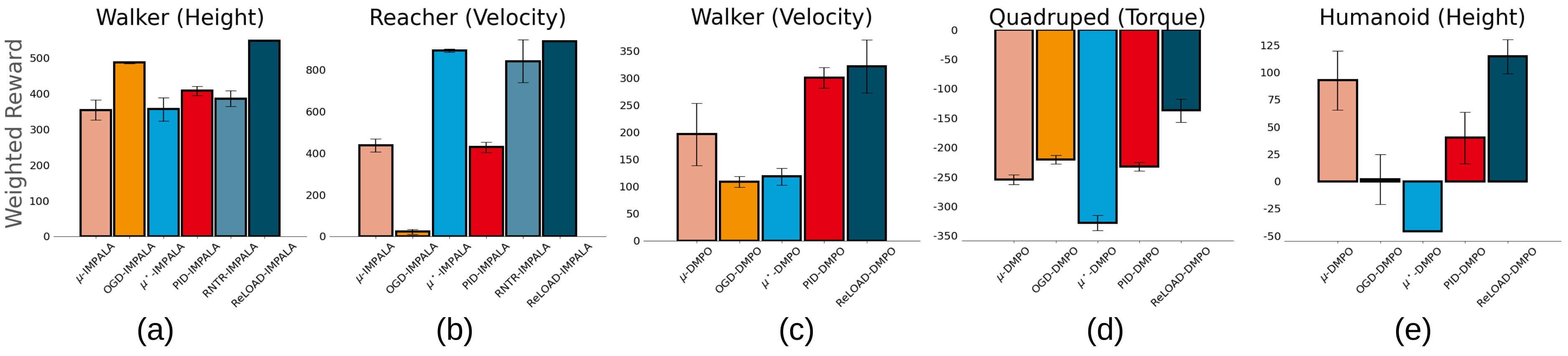}
    \caption{ReLOAD (dark blue) achieves the strongest performance on a variety of CMDPs which induce oscillations in control suite.}
    \label{fig:osc}
\end{figure*}

In addition to the choice of constraint threshold, each task in the RWRL Suite has a \textit{safety coefficient}, where low values of this coefficient indicate that it's harder to satisfy the constraints. 
We trained ReLOAD-DMPO on three challenging tasks: \texttt{RWRL-Walker}, \texttt{RWRL-Cartpole}, and \texttt{RWRL-Quadruped} across the same three constraint thresholds and four safety coefficients for each task used by \citet{calian2020metal}. As baselines, we applied MetaL, a tuned, fixed-Lagrange method (0.1-D4PG), and a primal-dual D4PG variant similar to $\mu$-D4PG (RC-D4PG), all as in \citet{calian2020metal}. We also compared ReLOAD against $\mu$-DMPO and multi-objective DMPO \citep{huang2022lp3} on the easier safety coefficient settings used by \citet{huang2022lp3} (see \cref{sect:algorithm_details} and \cref{fig:rwrl_lp3} for further details). As we can see in \cref{fig:rwrl_weighted_reward}, ReLOAD solves nearly all the CMDPs at least as well as the baselines and outperforms them for the most challenging safety coefficients. Interestingly, we found that the benchmark constraint thresholds introduced by \citet{calian2020metal} were selected to be extreme so as to avoid oscillations. The RWRL suite therefore serves as a useful sanity check that ReLOAD performs strongly even without the threat of oscillations, but we would still like to test it on high-dimensional CMDPs which carry this threat.

\textbf{Oscillating Control Suite.} 
\begin{figure*}[!t]
    \centering
    \includegraphics[width=0.85\textwidth]{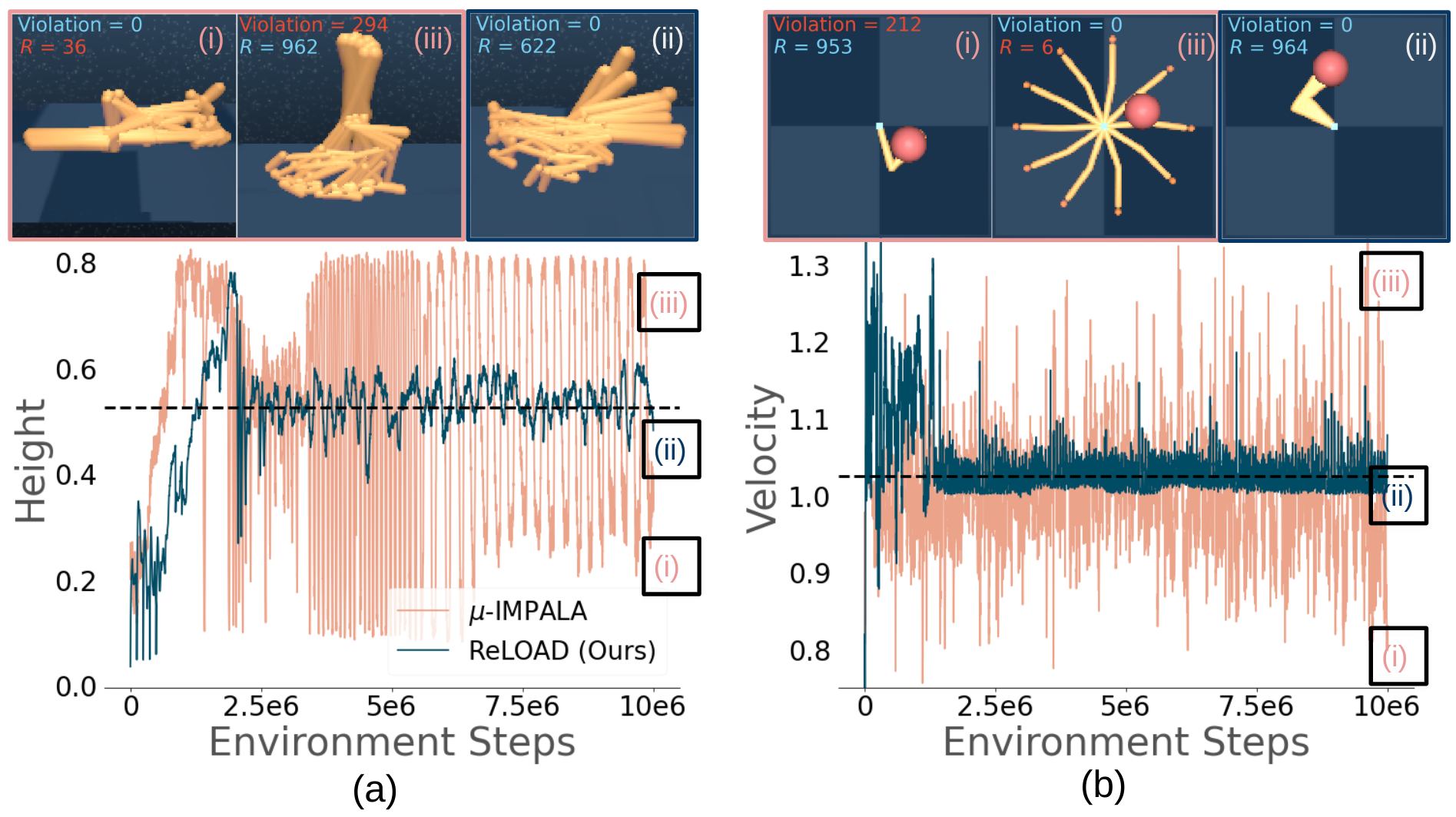}
    \caption{ReLOAD significantly damps oscillations, resulting in agents which perform the desired task while obeying constraints.}
    \label{fig:the_solution}
\end{figure*}
Finally, we identified tasks and constraint settings in DeepMind Control Suite which cause standard agents to oscillate. We trained ReLOAD on the following tasks: \texttt{Walker, Walk} with a constraint on the height of the agent, \texttt{Reacher, Easy} with a velocity constraint, \texttt{Walker, Walk} with a velocity constraint, \texttt{Quadruped, Walk} with a constraint on the torque applied to its joints, and \texttt{Humanoid, Walk} with a height constraint. We call this set of tasks and constraints, whose details can be found in \cref{sect:experiment_details}, the \textit{Oscillating Control Suite}, and we believe it can serve as a challenging benchmark of CMDPs. To test its generality, we paired ReLOAD with both IMPALA and DMPO base agents,
and compared it against the corresponding $\mu$- agent, OGD, an agent which uses the fixed optimal Lagrange multiplier obtained by averaging the final iterates of the associated $\mu$- across seeds ($\mu^\star$-), and PID control \citep[PID-;][]{stooke2020pid}. For IMPALA, we also tested RNTR. We found that in all cases, ReLOAD achieves higher average weighted reward at the end of training than the baselines (\cref{fig:osc}). Importantly, the performance gap between ReLOAD and OGD is much greater than in \cref{fig:toy_ablations}, as the data distribution varies more significantly from batch to batch in high-dimensional problems.  \cref{fig:the_solution} depicts example training curves and final behaviors for $\mu$-IMPALA and ReLOAD-IMPALA on \texttt{Walker} 
and \texttt{Reacher}, with curves for other domains in Appendix \cref{fig:dmpo_curves}. 
We can see that ReLOAD significantly dampens oscillations. For \texttt{Walker} (\cref{fig:the_solution}a), it produces an agent which moves forward with a modified, kneeling walk (panel (ii)), while $\mu$-IMPALA typically either ends up lying down (panel (i))---thus obeying the constraint but not performing the task---or walking normally and ignoring the constraint (panel (iii)). We see a similar pattern with \texttt{Reacher} (\cref{fig:the_solution}b), with the ReLOAD agent moving quickly while keeping the tip of its arm in the rewarded area (panel (ii)), while $\mu$-IMPALA either stops moving within the rewarded area (panel (i)) or maximizes velocity while swinging in a circle and ignoring the task (panel (iii)).

\section{Conclusion}
In this work, we introduced ReLOAD, an RL framework for LIC in constrained MDPs. We examined the challenges in achieving LIC in this setting from a formal perspective, derived a convergence guarantee for a generalized form of OMD for whom a special case is the convex formulation of ReLOAD, and demonstrated ReLOAD's strong empirical performance on a range of challenging CMDPs. One shortcoming of the current analysis is a lack of theoretical understanding of non-convex ReLOAD, and in the future we'd like to experiment with more constraints. We believe that ReLOAD may offer insights into policy optimization with non-stationary rewards more generally, as well as the oscillations which are known to plague standard RL combined with function approximation \citep{young2020oscillations,gopalan2022approximate}, improving optimization within the primal-dual formulation of RL \citep{neu2021logistic}, and extensions to convex MDPs \citep{zahavy2021reward}.

\paragraph{Acknowledgements} Work funded by DeepMind. The authors would like to thank Abbas Abdolmaleki, Steven Bohez, Edouard Leurent, Daniel Mankowitz, Dan Calian, Lior Shani, Yash Chandak, Chris Lu, Robert Lange, DJ Strouse, Jack Parker-Holder, Kate Baumli, Kevin Waugh, and other colleagues on the Discovery team and at DeepMind for helpful discussions and feedback over the course of this project.

\newpage





\bibliography{main}
\bibliographystyle{icml2022}

\newpage
\appendix
\onecolumn

\section{Additional Related Work} \label{sec:additional_related}

Beyond that which is covered in the main text, there is a rich history of work on CMDPs. \citet{borkar2005actor} was the first to analyze an actor-critic approach to CMDPs, while \citet{bhatnagar2012online} was the first to expand this approach to function approximation. \citet{tessler2018reward,achiam2017constrained,efroni2020cmdps,bohez2019success,chow2018lyapunov,paternain2019duality} all focus on integrating constraints into sequential decision problems \citep{altman99cmdps}, commonly done, as noted, via Lagrangian relaxation \citep{tessler2018reward}.
\citet{calian2020metal} argue for a \textit{soft-constraint} approach to CRL, wherein the solution is not absolutely required to satisfy a particular constraint, but rather is penalized in proportion to its violation \citep{thomas2017wellbehaved,dulac2020empirical}. This philosophy lies in contrast to \textit{hard-constraint} approaches, for which a solution is marked as invalid if there is any constraint violation \citep{dalal2018safe,bura2022dope}. In zero-violation approaches, methods are initialized in the feasible region, and only updated in ways that are guaranteed not to leave the feasible region \citep{liu2021learning,simao2021alwayssafe}. In applications, soft constraints may be preferable when violations do not result in catastrophic system failure, but rather simply undesirable behavior (e.g., inefficiency). \citet{calian2020metal} attempt to stabilize the learning process in CMDPs by using meta-gradients \citep{xu2018metagrads, zahavy2020self} to adapt the learning rate of the Lagrange multiplier online. More recently, Bootstrapped meta gradients \citep{flennerhag2021bootstrapped} have been analyzed and shown to provide a form of optimism \citep{flennerhag2023optimistic}; thus, it would be interesting to see if they can help to achieve LIC in CMDPs.  \citet{stooke2020pid} apply a principled approach to damping the dynamics of the Lagrange multiplier via PID control with the goal of reducing constraint overshoots over the course of training. However, neither of these two approaches guarantees LIC---this may have a connection to \cref{thm:singly_optimistic}, which shows that only one optimistic player (e.g., the Lagrange player) is in general not sufficient to guarantee LIC. \citet{efroni2020cmdps} address the role of exploration in CMDPs, proving bounds for both the linear programming formulation and the primal-dual formulation of the problem, though they only consider standard gradients and do not focus on LIC vs. AIC. 

Rather than formulate CRL as a CMDP, \citet{huang2022lp3} and \citet{abdolmaleki2020dmpo} consider a multi-objective approach. That is, rather than integrate the constraints and the task reward into a single, non-stationary reward, they optimize the task reward and each constraint reward independently, and then search over a pareto front which balances among them.

There are a number of recent applications of CRL which have achieved impressive practical successes. One example is the use of MuZero \citep{schrittwieser2020muzero} for video compression by \citet{mandhane2022learn2encode}. Specifically, the agent was trained on a constrained MDP to maximize video quality with a constraint on the allowed bit rate. Others involve using CRL for quality-diversity optimization where the agent is trying to find a set of diverse skill while all of the skills are required to satisfy a near-optimality constraint on the reward \citep{zahavy2021discovering,zahavy2022domino,kumar2020one}. 

Beyond extra-gradient methods (see \cref{sec:eg_methods} below), there are other approaches which aim to achieve LIC in min-max games. The symplectic gradient adjustment \citep{balduzzi2018mechanics} uses a signed additive term to the gradient to push the dynamics away from unstable equilibria and towards stable ones. Hamiltonian gradient descent updates the iterates in the direction of the (negative) gradient of the squared norm of the signed partial derivatives, and has been shown to achieve LIC in a variety of min-max games \citep{abernethy2021last}. However, when applied to CMDPs, minimizing the squared gradient with respect to the Lagrange multiplier(s) is equivalent to an apprenticeship learning problem \citep{abbeel2004apprenticeship, zahavy2020apprenticeship, shani2022oal}, which is itself a convex MDP representing a challenging optimization problem \citep{zahavy2021reward}. \citet{perolat2021poincare} instead augment the objective with an adaptive regularizer, solving the resulting convex/concave (but biased) problem exactly before iteratively refitting with progressively lesser regularization. 

One aspect we have not touched on is the question of exploration. When the agent has uncertainty about the rewards or the constraints, how can it `explore' sufficiently well in order to solve the CMDP? This would require information seeking behaviour to discover the rewards and constraints in the environment. A common heuristic for exploration is `optimism in the face of uncertainty' \cite{dayan1996exploration, strehl2008analysis, o2021klearning}, however there is little work applying optimism to CMDPs. Some preliminary work in constrained bandits \cite{o2021vbos} or more generally in two-player zero sum matrix games \cite{o2020stochastic} is encouraging, but the presented algorithms are not online and therefore questions like LIC are not applicable. Similarly, classic adversarial algorithms like EXP3 \cite{exp3} typically only guarantee AIC. Clearly more work is required in this area.

As noted in the main text, many policy optimization methods in deep RL currently use a form of trust region to stabilize optimization. KL-based trust regions are discussed in detail by \citet{agarwal2021theory,geist2019mdpi,moskovitz2022towards,shani2020mdpotheory}. While not explored further here, it would also be interesting to study policy optimization methods which use trust regions generated by other divergence measures or distances, such as the Wasserstein distance \citep{moskovitz2021efficient,pacchiano2020bgrl}.

\section{Extreme Constraints and Oscillations} \label{sec:extreme_constraints}
As noted in the main text, so-called ``extreme'' constraints don't often induce oscillations in practice. Here, we present a simple, relatively informal argument to provide an intuition for why this is the case. Consider a simple CMDP with one constraint and where values lie in the range $[0, B]$, where $B < \infty$. (Note this bounding can be easily obtained for any reward function which is upper- and lower-bounded by adding the lower bound to rewards to make them non-negative.) The Lagrangian in this case is
\begin{align}
    \mathcal L(d_\pi, \mu) = -\langle r_0, d_\pi \rangle + \mu(\langle r_1, d_\pi\rangle - \theta). 
\end{align}
An \textit{extreme} constraint threshold $\theta$, then, is one which is close to 0 or B, as those are the bounds on the value. Say that $\theta = 0$. Then the Lagrangian reduces to
\begin{align}
    \mathcal L(d_\pi, \mu) = -\langle r_0, d_\pi \rangle + \mu\langle r_1, d_\pi\rangle,
\end{align}
with the gradients being
\begin{align*}
    \grad_{d_\pi}\mathcal L &= \mu r_1 - r_0 \\
    \grad_\mu \mathcal L &= \langle r_1, d_\pi\rangle = v_1. 
\end{align*}
Because $v_1 \geq 0$, the Lagrange multiplier $\mu$ will always be increasing, which, because of $\grad_{d_\pi}\mathcal L$, means that the occupancy measure will increasingly be updated to align with the constraint reward $r_1$ and ignore the task reward $r_0$. Therefore, the constraint reward will dominate the optimization of $d_\pi$, and there is no ``back and forth'' between optimizing the constraint reward and the task reward. Similarly, if $\theta = B$, $v_1 - B \leq 0$, so $\mu$ will be non-increasing, eventually dropping to 0, so that $d_\pi$ will only optimize the task reward $r_1$. 

This also provides motivation for why problems with ``intermedaite'' constraints which induce oscillations are the most valuable/interesting CMDPs---because extreme constraints cause one reward function to dominate, the problem essentially reduces to an MDP, and can reasonably be solved with standard RL methods.

\section{Extra-Gradient Methods} \label{sec:eg_methods}
The minimax problems studied in the paper all fall within the broader class of \textit{variational inequality} (VI) problems, which can be written as
\begin{align*}
    \mathrm{find} \ x^\star \in \mathcal X \subseteq \reals^d \quad \mathrm{s.t.} \quad \langle F(x^\star), x - x^\star \rangle \geq 0 \ \forall x\in\mathcal X
\end{align*}
for some single-valued operator $F: \reals^d \to \reals^d$. For example, if $F = \grad \mathcal L$ for some differentiable loss $\mathcal L$, the solution to the VI problem is a critical point of $\mathcal L$. The \textit{extra-gradient} algorithm \citep[EG;][]{korpelevich1976eg} is as follows:
\begin{align*}
    x^{k+1/2} &= \Pi_\mathcal X(x^k - \eta^k F(x^k)) \\
    x^{k+1} &= \Pi_\mathcal X(x^k - \eta^k F(x^{k+1/2})),
\end{align*}
where $\Pi_\mathcal X(z) \triangleq \min_{x\in\mathcal X} \|x - z\|$ is the projection operator onto $\mathcal X$. The Bregman variant of EG is called the Mirror-Prox method \citep[MP;][]{nemirovski2004prox}. 
While EG achieves the optimal $\mathcal O(1/k)$ convergence rate when $V$ is monotone and Lipschitz-continuous, it is difficult to scale, as each update requires two gradient computations and two projections into $\mathcal X$. In addition to the optimistic gradient method, there are a number of so-called ``single-call'' EG methods (one of which is the optimistic gradient):
\begin{itemize}
    \item \textit{Past EG} \citep[PEG;][]{chiang2012online,gidel2019variational,popov1980modification}:
        \begin{align}
        \begin{split} \label{eq:peg}
            x^{k+1/2} &= \Pi_\mathcal X(x^k - \eta^k F(x^{k-1/2})) \\
            x^{k+1} &= \Pi_\mathcal X(x^k - \eta^k F(x^{k+1/2}))
        \end{split}
        \end{align}
    \item \textit{Reflected Gradient} \citep[RG;][]{chambolle2011first,cui2016analysis}:
        \begin{align*}
            x^{k+1/2} &= x^k - (x^{k-1} - x^k) \\
            x^{k+1} &= \Pi_\mathcal X(x^k - \eta^k F(x^{k+1/2})).
        \end{align*}
\end{itemize}
Both PEG and RG are very closely related to OGD, and are identical in the unconstrained case. 
As an additional note, the term ``optimistic gradient'' is frequently applied broadly within the literature, and is often used to refer to the EG method or any of its single-call variants. 
For a more comprehensive discussion and analysis of these methods, we refer the reader to \citet{hsieh2019single_call}. 

\subsection{PEG Policy Iteration}
As described in \cref{sec:reload}, policy evaluation in the tabular case is equivalent to gradient computation, and a trust region step equates to a projection step in OMD. While OMD only requires one of these each per update, PEG (\cref{eq:peg}) requires two projections and one gradient call per update, which makes it less scalable than OMD. Nonetheless, we can derive a Bregman PEG-based optimization method for optimizing CMDPs which carries similar guarantees to ReLOAD. 

Let $\ell(x) \triangleq \mathcal L(x, y^k)$. Then we can write the PEG update as
\begin{align*}
    x^{k+1/2} &= \argmin_{x\in\mathcal X} \langle \grad\ell(x^{k-1/2}), x\rangle + \frac{1}{\eta^k} D_\Omega(x; x^k) \\
    x^{k+1} &= \argmin_{x\in\mathcal X} \langle \grad\ell(x^{k+1/2}), x \rangle + \frac{1}{\eta^k} D_\Omega(x; x^k). 
\end{align*}
Setting $\Omega_\pi$ as the negative entropy and $\Omega_\mu$ as the squared Euclidean norm, we can derive the policy iteration-style algorithm in \cref{alg:peg_mdpi}. We verified its LIC on the paradoxical CMDP from \cref{fig:toy}, with results plotted in \cref{fig:peg_mdpi}.

\begin{algorithm}[!t]
	\caption{PEG-MDPI}\label{alg:peg_mdpi}
		\begin{algorithmic}[1] 
		    \STATE Require: CMDP $\mathcal M_C$, step sizes $\{\eta_\pi, \eta_\mu\} > 0$
		    \STATE Initialize $ \pi^1$, $\mu^1$, $\pi^0$, $\mu^0$
            \FOR{$k=1,\dots,K$}
                \STATE Half-step: 
                \begin{align*}
                    \pi^{k+1/2} &=  \frac{ \pi^k\exp\left(  q_{\pi^{k-1/2}}^{\mu^{k-1/2}}/\eta_\pi \right)}{\left\langle \pi^k\exp\left(  q_{\pi^{k-1/2}}^{\mu^{k-1/2}}/\eta_\pi \right),  1 \right\rangle} \\
                     \mu^{k+1/2} &=  \max\{ \mu^k - \eta_\mu ( v_{1:N}^{k-1/2} -  \theta),  0\}
                \end{align*}
            \STATE $\{ q_n^{k+1/2}\}_{n=0}^N \gets \texttt{PolicyEval}(\mathcal M_C,  \pi^{k+1/2})$
            \STATE $ q_{\mu^{k+1/2}}^{k+1/2} \gets - q_0^{k+1/2} + \sum_{n=1}^N \mu^{k+1/2}_n  q_n^{k+1/2}$ \quad (mixed $q$-values)
            \STATE $ v_{1:N}^{k+1/2} \gets [\langle  q_1^{k+1/2},  \pi^{k+1/2}\rangle, \dots,  \langle  q_N^{k+1/2},  \pi^{k+1/2}\rangle]^\top$
            \STATE Full-step:
                \begin{align*}
                    \pi^{k+1} &=  \frac{ \pi^k\exp\left(  q_{\pi^{k+1/2}}^{\mu^{k+1/2}}/\eta_\pi \right)}{\left\langle \pi^k\exp\left(  q_{\pi^{k+1/2}}^{\mu^{k+1/2}}/\eta_\pi \right),  1 \right\rangle} \\
                     \mu^{k+1} &=  \max\{ \mu^k - \eta_\mu ( v_{1:N}^{k+1/2} -  \theta),  0\}
                \end{align*}
            \ENDFOR
            \STATE \textbf{return} $\pi^K, \mu^K$
	\end{algorithmic}
\end{algorithm}


\section{Theoretical Results} \label{sec:theory}

\subsection{Impossibility Results}

\mdbad*

\begin{proof}
    This proof is originally due to \citet{daskalakis2018training_gans}, but we repeat it here for completeness. 
    
    Consider the problem
    \begin{align*}
        \min_{x\in\reals} \max_{y\in\reals} xy.
    \end{align*}
     The gradient descent-ascent updates at step $k$ are:
    \begin{align*}
        x_{k+1} &= x_k - \eta y_k \\
        y_{k+1} &= y_k + \eta x_k.
    \end{align*}
    For simplicity of notation we consider a fixed learning rate $\eta$, but the same result is obtained for variable learning rates. This problem has a unique SP at $(x^\star, y^\star) = (0, 0)$. Consider the squared distance from the origin at step $k$, $\Delta_k \triangleq x_k^2 + y_k^2$. We then have
    \begin{align*}
        \Delta_{k+1} &= x_{k+1}^2 + y_{k+1}^2 \\
            &= (x_k - \eta y_k)^2 + (y_k + \eta x_k)^2 \\
            &= x_k^2 - 2\eta x_k y_k + \eta^2 y_k^2 + y_k^2 + 2\eta x_k y_k + \eta^2 x_k^2 \\
            &= \Delta_k + \eta^2 \Delta_k \\
            &= (1 + \eta^2) \Delta_k.
    \end{align*}
    Therefore, for any $\eta > 0$, the distance from the SP grows with every step of gradient ascent-descent.
\end{proof}

\singlyoptimistic* 

\begin{proof}
Consider $\min_{x_\reals} \max_{y\in\reals} \ xy$. The OGD-GA (singly-optimistic) dynamics are given by
\begin{align*}
    x_{k+1} &= x_k - 2\eta y_k + \eta y_{k-1} \\
    y_{k+1} &= y_k + \eta x_k.
\end{align*}

\noindent \textit{Stability analysis}: To facilitate analysis, we can rewrite this (discrete-time) dynamical system by introducing an extra variable $z$ which tracks $y$:
\begin{align*}
    x_{k+1} &= x_k - 2\eta y_k + \eta z_k \\
    y_{k+1} &= y_k + \eta x_k \\
    z_{k+1} &= y_k.
\end{align*}
The Jacobian is given by
\begin{align*}
    J = \begin{pmatrix}
        1 & -2\eta & \eta \\
        \eta & 1 & 0 \\
        0 & 1 & 0
    \end{pmatrix}.
\end{align*}
Its eigenvalues $\lambda$ are obtained by solving 
\begin{align}
    \mathrm{det}(J - \lambda I) = -\lambda^3 + 2\lambda^2 - (2\eta^2 +1)\lambda + \eta^2 = 0.
\end{align}
Two of the three eigenvalues have modulus greater than one for all $\eta > 0$, so $\rho(J) > 1$ and the system is unstable. 
\end{proof}

\subsection{Analysis of Mixed-Bregman OMD via Monotone Operators}

\paragraph{CMDPs as Monotone Inclusion Problems}
The theory of monotone operators provides a general, powerful framework for analyzing the behavior of convex optimization problems \citep{ryu20222monotone}. While monotone operator theory's generality often facilitates relatively simple proofs of convergence, it can also make it more challenging to show \textit{rates} of convergence. Nonetheless, as the main goal of our theoretical analysis is to simply demonstrate the fundamental soundness of our proposed approach, monotone operator theory is a useful choice. Specifically, we cast convex-concave SP problems as \textit{monotone inclusion problems}. 
Monotone inclusion problems take the form 
\begin{align*}
   \mathrm{find }\  x \in \mathcal H \quad \mathrm{s.t.} \quad 0 \in F(x)
\end{align*}
where $F: \mathcal H \to \mathcal H$ is a \textit{monotone} operator and $\mathcal H$ is a Hilbert space. A Hilbert space is a vector space upon which an inner product $\langle \cdot, \cdot\rangle$ is defined and for which the distance induced by this inner product is a complete metric space (one which contains every Cauchy sequence within it). One example of a vector space is the $d$-dimensional reals $\reals^d$, for which the inner product is the standard dot product and the induced metric is the Euclidean distance. A monotone operator is an operator for which the following inequality holds:
\begin{align*}
    \langle F(x) - F(y), x - y \rangle \geq 0
\end{align*}
for all $x, y \in \mathcal H$. Often inclusion problems can be made simpler to solve if the operator $F$ can be \textit{split} into two (or more) operators $F = A + B$, as $A$ and $B$ may be more tractable to use and evaluate separately. We therefore consider inclusion problems of the form
\begin{align} 
    \mathrm{find }\  x \in \mathcal H \quad \mathrm{s.t.} \quad 0 \in (A + B)(x),
\end{align}
where $A: \mathcal H \rightrightarrows \mathcal H$ and $B: \mathcal H \to \mathcal H$ are monotone operators. The notation $\rightrightarrows$ indicates $A$ is (in the general case) a set-valued operator, one which maps a point in $\mathcal H$ to a (possibly empty) subset of $\mathcal H$. A wide variety of problems can be expressed in this form, and in particular we consider the Lagrangian of the CMDP problem expressed as follows:
\begin{align}
    \min_{d_\pi \in \reals^{|\mathcal S||\mathcal A|}} \max_{\mu\in \reals^{N}} \ \mathbb I_{\mathcal K}(d_\pi) + \mathcal L(d_\pi, \mu) + \mathbb I_{\mathbb R_{\geq 0}^N}(\mu), 
\end{align}
where $\mathbb I_\mathcal X(\cdot)$ is the indicator function which equals 0 inside the set $\mathcal X \subseteq \mathcal H$ and $+\infty$ outside of $\mathcal X$. We can therefore express the CMDP problem in the form of \cref{eq:monotone_inc} by noting that a SP must satisfy the first-order optimality condition:
\begin{align}
    \mathrm{find }\  \begin{bmatrix} d_\pi \\ \mu \end{bmatrix}  \quad \mathrm{s.t.} \quad 
    \begin{bmatrix} 0 \\ 0 \end{bmatrix} \in \begin{bmatrix}
        \partial  \mathbb I_{\mathcal K}(d_\pi) \\
        \partial \mathbb I_{\mathbb R_{\geq 0}^N}(\mu)
    \end{bmatrix}
    + \begin{bmatrix}
        \grad_{d_\pi} \mathcal L(d_\pi, \mu) \\
        -\grad_\mu \mathcal L(d_\pi, \mu)
    \end{bmatrix}
\end{align}
where we can note that $\partial \mathbb I_\mathcal X = N_\mathcal X$, where $N_\mathcal X$ is the normal cone operator for $\mathcal X$ \citep{ryu20222monotone}.

\paragraph{(Optimistic) Mirror Descent as Fixed Point Iteration}
The mirror descent update for the Bregman divergence $D_\Omega(\cdot; \cdot)$ generated by the strictly convex, continuously differentiable function $\Omega(\cdot)$ and applied to a differentiable loss function $\ell(\cdot)$ is given by
\begin{align}
    x^{k+1} &= \argmin_{x\in\mathcal X} \ \langle \grad \ell(x^k), x \rangle + \frac{1}{\eta^k} D_\Omega(x; x^k) \\
    &= \argmin_{x\in\reals^d}\  \langle \grad\ell(x^k), x \rangle + \frac{1}{\eta^k} D_\Omega(x; x^k) + \mathbb I_\mathcal X(x).
\end{align}
This is equivalent to solving the following inclusion problem:
\begin{align}
    0 &\in \grad\ell(x^k) + \frac{1}{\eta^k}(\grad\Omega(x) - \grad\Omega(x^k)) + N_\mathcal X(x) \\
    \iff \grad\Omega(x^k) - \eta^k \grad\ell(x^k) &\in (\grad\Omega + \eta^k N_\mathcal X)(x) \\
    \iff x &\in (\grad\Omega + \eta^k N_\mathcal X)^{-1}(\grad\Omega(x^k) - \eta^k \grad\ell(x^k)) = \mathrm{Prt}_{\eta^k N_\mathcal X}^\Omega(\grad\Omega(x^k) - \eta^k \grad\ell(x^k)),
\end{align}
where $\mathrm{Prt}_{\eta^k N_\mathcal X}^\Omega = (\grad\Omega + \eta^k N_\mathcal X)^{-1}$ is the \textit{proto-resolvent} of $\eta^k N_\mathcal X$ relative to $\Omega$ \citep{reich2011bregman}. Thus, mirror descent can be seen as performing fixed point iteration (FPI) as follows:
\begin{align}
    x^{k+1} = \mathrm{Prt}_{\eta^k N_\mathcal X}^\Omega(\grad\Omega - \eta^k \grad\ell)(x^k).
\end{align}
For optimistic mirror descent, we write the update as
\begin{align}
    x^{k+1} = \mathrm{Prt}_{\eta^k N_\mathcal X}^\Omega(\grad\Omega(x^k) - \eta^k\grad\ell(x^k) - \eta^{k-1}(\grad\ell(x^k) - \grad\ell(x^{k-1})) ).
\end{align}
More generally, for SP problems like \cref{eq:saddlepoint_monotone}, we can think of $x^k = [d_\pi^k, \mu^k]\tr \in \mathcal K \times \reals_{\geq 0} = \mathcal X$, where 
\begin{align}
\begin{split}
    A = \begin{bmatrix}
        N_{\mathcal K} \\ N_{\reals_{\geq 0}^N}
    \end{bmatrix} \qquad
    B = \begin{bmatrix}
        \grad_{d_\pi} \mathcal L(d_\pi, \mu) \\
        -\grad_\mu \mathcal L(d_\pi, \mu)
    \end{bmatrix} \qquad
    \grad\Omega = \begin{bmatrix}
        \grad \Omega_{\pi} \\ \grad \Omega_\mu
    \end{bmatrix}.
\end{split}
\end{align}
As a note, in the following analysis we'll frequently consider the Bregman divergence generated by $\Omega$:
\begin{align}
    D_\Omega(x_1; x_2) \triangleq \Omega(x_1) - \Omega(x_2) - \langle \grad\Omega(x_2), x_1 - x_2\rangle.
\end{align}
In the ``stacked''/SP case, we can consider this to be the \textit{stacked} divergence where $\Omega = [\Omega_\pi, \Omega_\mu]^\top$ and $\grad\Omega$ is as above.
We can then write the OMD update as
\begin{align} \label{eq_omd_monotone}
    x^{k+1} = \mathrm{Prt}_{\eta^k A}^\Omega(\grad\Omega(x^k) - \eta^kB(x^k) - \eta^{k-1}(B(x^k) - B(x^{k-1}))).
\end{align}
When $\Omega_\pi(\cdot) = \Omega_\mu(\cdot) = \frac{1}{2}\|\cdot\|^2$, this is equivalent to \textit{forward-reflected-backward splitting} \citep{malitsky2020forb}. 

\subsection{Convergence Analysis}
Before showing convergence, we require several prepatory lemmas. In the following, we assume that $A: \mathcal H \rightrightarrows H$ is maximal monotone and $B: \mathcal H \to \mathcal H$ is monotone and $L$-Lipschitz. 

\begin{restatable}{lemma}{bregman_bound} \label{thm:bregman_bound}
    Let $F: \mathcal H \rightrightarrows \mathcal H$ be maximal monotone, and let $d_1, u_1, u_0, v_2, v_1 \in \mathcal H$ be arbitrary. Define $d_2$ as
    \begin{align*}
        d_2 = \mathrm{Prt}_F^\Omega(\grad\Omega(d_1) - u_1 - (v_1 - u_0)).
    \end{align*}
    Then, $\forall x \in \mathcal H$ and $u \in -F(x)$, we have
    \begin{align*}
        D_\Omega(x;d_2) + \langle v_2 - u_1, x - d_2 \rangle \leq D_{\Omega}(x; d_1) &+ \langle v_1 - u_0, x - d_1 \rangle \\ &+ \langle v_1 - u_0, d_1 - d_2 \rangle - D_\Omega(d_2; d_1) - \langle v_2 - u, d_2 - x\rangle.
    \end{align*}
\end{restatable}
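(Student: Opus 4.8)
The plan is to derive the inequality from just three ingredients: the first‑order optimality condition hidden in the proto‑resolvent, plain monotonicity of $F$, and the three‑point identity for the Bregman divergence $D_\Omega$. Notably, neither the $L$‑Lipschitzness of $B$ nor any strong convexity of $\Omega$ enters here; maximality of $F$ is only used so that the proto‑resolvent step producing $d_2$ is well defined. This is a purely algebraic ``one‑step'' estimate whose role is to telescope when later summed over $k$, with $u_1,u_0$ standing in for the operator terms $\eta^k B(x^k),\eta^{k-1}B(x^{k-1})$ and $v_1,v_2$ for the reflected corrections appearing in the OMD update.

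First I would unpack the definition of $d_2$. Writing $z = \grad\Omega(d_1) - u_1 - (v_1 - u_0)$, the statement $d_2 = \mathrm{Prt}_F^\Omega(z) = (\grad\Omega + F)^{-1}(z)$ is equivalent to $z - \grad\Omega(d_2)\in F(d_2)$, i.e.
\[
\grad\Omega(d_1) - \grad\Omega(d_2) - u_1 - (v_1 - u_0) \in F(d_2).
\]
Since $u\in -F(x)$ means $-u\in F(x)$, monotonicity of $F$ applied to this graph point and to $(x,-u)$ gives
\[
\big\langle \grad\Omega(d_1) - \grad\Omega(d_2) - u_1 - (v_1 - u_0) + u,\ d_2 - x \big\rangle \geq 0 .
\]

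Next I would invoke the three‑point identity, obtained by directly expanding the definition of $D_\Omega$:
\[
\langle \grad\Omega(d_1) - \grad\Omega(d_2),\ d_2 - x \rangle = D_\Omega(x; d_1) - D_\Omega(x; d_2) - D_\Omega(d_2; d_1).
\]
Substituting this into the monotonicity inequality and rearranging yields
\[
D_\Omega(x; d_2) \leq D_\Omega(x; d_1) - D_\Omega(d_2; d_1) + \big\langle u - u_1 - (v_1 - u_0),\ d_2 - x \big\rangle .
\]
The remaining step is bookkeeping: split $\langle v_1 - u_0,\ x - d_2\rangle = \langle v_1 - u_0,\ x - d_1\rangle + \langle v_1 - u_0,\ d_1 - d_2\rangle$, add $\langle v_2,\ x - d_2\rangle$ to both sides, use $\langle v_2 - u_1,\ x - d_2\rangle + \langle u_1,\ x - d_2\rangle = \langle v_2,\ x - d_2\rangle$ to move the $\langle v_2 - u_1,\ x - d_2\rangle$ term onto the left, and combine $\langle v_2,\ x - d_2\rangle + \langle u,\ d_2 - x\rangle = -\langle v_2 - u,\ d_2 - x\rangle$. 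This reproduces exactly the claimed inequality.

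There is no genuine analytic obstacle; the only thing to get right is the sign and term matching in that final rearrangement, which is mildly finicky because five free vectors ($d_1,u_1,u_0,v_1,v_2$) appear. As sanity checks I would specialize to $\Omega = \tfrac12\|\cdot\|^2$ with $v_i = u_i$ and confirm the estimate collapses to the known one‑step inequality for forward‑reflected‑backward splitting \citep{malitsky2020forb}, and I would verify that the retained nonpositive term $-D_\Omega(d_2; d_1)$ lands on the side consistent with the direction of the inequality.
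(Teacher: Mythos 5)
Your proposal is correct and follows essentially the same route as the paper's proof: unpack the proto-resolvent into the inclusion $\grad\Omega(d_1)-\grad\Omega(d_2)-u_1-(v_1-u_0)\in F(d_2)$, apply monotonicity of $F$ against the graph point $(x,-u)$, invoke the Bregman three-point identity, and finish with the add-and-subtract bookkeeping on $\langle v_2,\cdot\rangle$ and the $v_1-u_0$ terms. The final rearrangement you describe does reproduce the stated inequality exactly, so there is no gap.
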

\begin{proof}
    From the definition of the proto-resolvent, we have
    \begin{align*}
        \grad\Omega(d_2) + F(d_2) \ni \grad\Omega(d_1) - u_1 - (v_1 - u_0).
    \end{align*}
    Then by the monotonicity of $F$,
    \begin{align*}
        0 &\leq \langle -u - \grad\Omega(d_1) + u_1 + (v_1 - u_0) + \grad\Omega(d_2), x - d_2 \rangle \\
        &= -\langle \grad\Omega(d_1) - \grad\Omega(d_2), x - d_2 \rangle + \langle u + v_1 - u_0, x - d_2\rangle + \langle u, d_2 - x\rangle \\
        &\stackrel{(1)}{=} -D_\Omega(x; d_2) - D_\Omega(d_2; d_1) + D_\Omega(x; d_1) + \langle u + v_1 - u_0, x - d_2\rangle + \langle u, d_2 - x\rangle,
    \end{align*}
    where $(1)$ is due to the Bregman 3-point identity. Then we can simply add and subtract both $\langle v_2, x - d_2\rangle$ and $\langle d_1, v_1 - u_0\rangle$ and rearrange to get the desired result.
\end{proof}

\begin{restatable}{lemma}{bregman_bound2} \label{thm:bregman_bound2}
    Let $x \in (A + B)^{-1}(0)$ and let $(x^k)$ be given by \cref{eq_omd_monotone}. Suppose that $(\eta^k) \subseteq [\varepsilon, \frac{1 - 2\varepsilon}{2L}]$ for some $\varepsilon > 0$. Then for all $k\in\mathbb N$ we have
    \begin{align*}
    \begin{split}
        D_\Omega(x; x^{k+1}) + \eta^k\langle B(x^{k+1}) - B(x^k),\ &x - x^{k+1} \rangle + D_\Omega(x^{k+1}; x^k) + \left(\frac{\varepsilon}{2} - \frac{1}{4}\right)\|x^{k+1} - x^k\|^2 \\
        & \leq D_\Omega(x; x^k) + \eta^{k-1}\langle B(x^k) - B(x^{k-1}), x - x^k\rangle + \frac{1}{4}\|x^k - x^{k-1}\|^2.
    \end{split}
    \end{align*}
\end{restatable}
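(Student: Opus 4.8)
The plan is to apply \cref{thm:bregman_bound} to the update \cref{eq_omd_monotone} with a careful choice of the auxiliary vectors, then absorb the leftover cross term via the Lipschitz bound on $B$, monotonicity of $B$, and the step-size restriction.

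First, since $x \in (A+B)^{-1}(0)$ we have $-B(x) \in A(x)$, hence $\eta^k B(x) \in -\eta^k A(x)$. So I would invoke \cref{thm:bregman_bound} with $F = \eta^k A$, $d_1 = x^k$, $d_2 = x^{k+1}$, $u_1 = \eta^k B(x^k)$, $u_0 = \eta^{k-1} B(x^{k-1})$, $v_1 = \eta^{k-1} B(x^k)$, $v_2 = \eta^k B(x^{k+1})$, and $u = \eta^k B(x) \in -F(x)$. With these choices the argument of the proto-resolvent is exactly $\grad\Omega(x^k) - \eta^k B(x^k) - \eta^{k-1}(B(x^k) - B(x^{k-1}))$, so the hypothesis $d_2 = \mathrm{Prt}^\Omega_F(\grad\Omega(d_1) - u_1 - (v_1-u_0))$ is precisely \cref{eq_omd_monotone}, while $v_2 - u_1 = \eta^k(B(x^{k+1}) - B(x^k))$ and $v_1 - u_0 = \eta^{k-1}(B(x^k) - B(x^{k-1}))$. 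Rearranging the conclusion of \cref{thm:bregman_bound} then gives
\begin{align*}
D_\Omega(x;x^{k+1}) &+ \eta^k\langle B(x^{k+1})-B(x^k),\, x-x^{k+1}\rangle + D_\Omega(x^{k+1};x^k) + \eta^k\langle B(x^{k+1})-B(x),\, x^{k+1}-x\rangle \\
&\le D_\Omega(x;x^k) + \eta^{k-1}\langle B(x^k)-B(x^{k-1}),\, x-x^k\rangle + \eta^{k-1}\langle B(x^k)-B(x^{k-1}),\, x^k-x^{k+1}\rangle.
\end{align*}
The term $\eta^k\langle B(x^{k+1})-B(x),\, x^{k+1}-x\rangle$ is nonnegative by monotonicity of $B$, so it may be dropped from the left-hand side.

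It remains to control $\eta^{k-1}\langle B(x^k)-B(x^{k-1}),\, x^k-x^{k+1}\rangle$. By Cauchy--Schwarz and $L$-Lipschitz continuity of $B$ this is at most $\eta^{k-1}L\,\|x^k-x^{k-1}\|\,\|x^{k+1}-x^k\|$, and Young's inequality with the weight chosen so that the coefficient of $\|x^k-x^{k-1}\|^2$ is $\tfrac14$ yields the bound $\tfrac14\|x^k-x^{k-1}\|^2 + (\eta^{k-1}L)^2\|x^{k+1}-x^k\|^2$. Substituting this and moving $(\eta^{k-1}L)^2\|x^{k+1}-x^k\|^2$ to the left gives exactly the claimed inequality, provided $-(\eta^{k-1}L)^2 \ge \tfrac{\varepsilon}{2}-\tfrac14$, i.e. $(\eta^{k-1}L)^2 \le \tfrac14 - \tfrac{\varepsilon}{2}$. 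This follows from $\eta^{k-1} \le \tfrac{1-2\varepsilon}{2L}$, which gives $(\eta^{k-1}L)^2 \le (\tfrac{1-2\varepsilon}{2})^2 = \tfrac14 - \varepsilon + \varepsilon^2$, together with $\varepsilon \le \tfrac12$ — which holds since nonemptiness of the step-size interval forces $\varepsilon \le \tfrac{1}{2(L+1)}$ — because $\tfrac14 - \varepsilon + \varepsilon^2 \le \tfrac14 - \tfrac{\varepsilon}{2}$ is equivalent to $\varepsilon^2 \le \tfrac{\varepsilon}{2}$. The main obstacle is purely bookkeeping: setting up the assignment of $u_0,u_1,v_1,v_2$ so that the pre-resolvent argument matches \cref{eq_omd_monotone} and the two bilinear forms line up with the telescoping terms in the statement; after that, everything reduces to Cauchy--Schwarz, monotonicity, and the step-size arithmetic above. (Note that $\sigma$-strong convexity of $\Omega$ is needed only for well-definedness of the iterate in \cref{eq_omd_monotone}, not for this estimate.)
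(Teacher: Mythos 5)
Your proof is correct and follows essentially the same route as the paper's: the identical assignment of $F, d_1, d_2, u_0, u_1, v_1, v_2, u$ in \cref{thm:bregman_bound}, dropping the monotone term, and bounding the remaining cross term via Lipschitzness and Young's inequality. The only (immaterial) difference is that you use an asymmetrically weighted Young's inequality, giving coefficient $(\eta^{k-1}L)^2$ on $\|x^{k+1}-x^k\|^2$ and requiring the extra observation $\varepsilon \le 1/2$, whereas the paper uses equal weights so that both coefficients are $\tfrac{1}{2}\eta^{k-1}L \le \tfrac{1-2\varepsilon}{4} = \tfrac14 - \tfrac{\varepsilon}{2}$ directly.
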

\begin{proof}
    Apply \cref{thm:bregman_bound} with 
    \begin{align*}
    \begin{split}
        &F \triangleq \eta^k A \quad d_1 \triangleq x^k \quad u_0 \triangleq \eta^{k-1}B(x^{k-1}) \quad v^1 \triangleq \eta^{k-1}B(x^k) \\
        &u \triangleq \eta^k B(x) \quad d_2 \triangleq x^{k+1} \quad u_1 \triangleq \eta^{k} B(x^k) \quad v_2 \triangleq \eta^k B(x^{k+1})
    \end{split}
    \end{align*}
    to get 
    \begin{align*}
    \begin{split}
        D_\Omega(x; x^{k+1}) + &\eta^k\langle B(x^{k+1}) - B(x^k), x - x^{k+1}\rangle + D_\Omega(x^{k+1}; x^k) \\
        &\leq D_\Omega(x; x^k) + \eta^{k-1}\langle B(x^k) - B(x^{k-1}), x - x^k\rangle \\ & \qquad + \eta^{k-1}\underbrace{\langle B(x^k) - B(x^{k-1}), x^k - x^{k+1}\rangle}_{(1)} - \eta^k \underbrace{\langle B(x^{k+1}) - B(x), x^{k+1} - x\rangle}_{(2)}.
    \end{split}
    \end{align*}
    Since $B$ is monotone, $(2)$ is non-negative and can be dropped. Since $B$ is also $L$-Lipschitz, $(1)$ can be written as
    \begin{align*}
        \langle B(x^k) - B(x^{k-1}), x^k - x^{k+1}\rangle &\leq L\|x^k - x^{k-1}\|\|x^k - x^{k+1}\| \\
        &\leq \frac{L}{2}(\|x^k - x^{k-1}\|^2 + \|x^k - x^{k+1}\|^2).
    \end{align*}
    Applying these changes and rearranging gives
    \begin{align*}
        D_\Omega(x; x^{k+1}) + \eta^k\langle B(x^{k+1}) - B(x^k),\  &x - x^{k+1}\rangle + D_\Omega(x^{k+1}; x^k) - \frac{1}{2} \eta^{k-1}L\|x^{k+1} - x^k\|^2  \\
        &\leq D_\Omega(x; x^k) + \eta^{k-1}\langle B(x^k) - B(x^{k-1}), x - x^k\rangle + \frac{1}{2}\eta^{k-1}L\|x^k - x^{k-1}\|^2.
    \end{align*}
    Then since $\eta^{k-1}L \leq 1/2$ and
\begin{align*}
    -\frac{1}{2}\eta^{k-1}L \geq -\frac{1 - 2\varepsilon}{4} = -\frac{1}{4} + \frac{\varepsilon}{2},
\end{align*}
we get the desired bound.
\end{proof}

Now we are able to prove convergence.

\convergence*

\begin{proof}
    Let $x\in (A+B)^{-1}(0)$. We can telescope \cref{thm:bregman_bound2} to get
    \begin{align} \label{eq:telescope}
    \begin{split}
        D_\Omega(x; x^{k+1}) + \eta^k \langle B(x^{k+1}) - B(x^k),\  &x - x^{k+1}\rangle + D_\Omega(x^{k+1}; x^k) - \frac{1}{4}\|x^{k+1} - x^k\|^2  + \frac{\varepsilon}{2} \sum_{i=0}^k \|x^{i+1} - x^i\|^2 \\ 
        &\leq D_\Omega(x; x^0) + \eta^{-1}\langle B(x^0) - B(x^{-1}), x - x^0\rangle + \frac{1}{4} \|x^0 - x^{-1}\|^2. 
    \end{split}
    \end{align}
    Because $B$ is $L$-Lipschitz, we have
    \begin{align*}
        \eta^k \langle B(x^{k+1}) - B(x^k), x - x^{k+1}\rangle &\geq -\eta^k L\|x^{k+1} - x^k\|\|x - x^{k+1}\| \\
        &\geq -\frac{1}{2} \eta^k L(\|x^{k+1} - x^k\|^2 + \|x - x^{k+1}\|).
    \end{align*}
    Then, because $\frac{1}{2}\eta^kL \leq \frac{1 - 2\varepsilon}{4} < \frac{1}{4}$ and since our choice of $\Omega$ implies $D_\Omega(w; z) \geq \frac{1}{2}\|w - z\|^2$, 
    we can write
    \begin{align*}
        D_\Omega(x^{k+1}; x^k) - \frac{1}{4}\|x^{k+1} - x^k\|^2 
        \geq \frac{1}{2}\|x^{k+1} - x^k\|^2 - \frac{1}{4}\|x^{k+1} - x^k\|^2 
        = \frac{1}{4}\|x^{k+1} - x^k\|^2.
    \end{align*}
    Combining this with \cref{eq:telescope} gives
    \begin{align*}
        D_\Omega(x; x^{k+1}) + \frac{\varepsilon}{2}\sum_{i=0}^k \|x^{i+1} - x^i\|^2  \leq D_\Omega(x; x^0) + \eta^{-1}\langle B(x^0) - B(x^{-1}), x - x^0\rangle + \frac{1}{4} \|x^0 - x^{-1}\|^2.
    \end{align*}
    Therefore, we can see that $(x^k)$ is bounded above and moreover that $\|x^{k+1} - x^k\| \to 0$. Let $\bar{x}$ be a sequential weak cluster point of $(x^k)$. We know that 
    \begin{align*}
        x^{k+1} = (\grad\Omega + \eta^k A)^{-1}(\grad\Omega(x^k) - \eta^k B(x^k)) - \eta^{k-1}(B(x^k) - B(x^{k-1}))
    \end{align*}
    so that
    \begin{align*}
        \grad\Omega(x^{k+1}) - \grad\Omega(x^k) + \eta^k B(x^k) + \eta^{k-1}(B(x^k) - B(x^{k-1})) \in - \eta^k A(x^{k+1}).
    \end{align*}
    By subtracting $\eta^k B(x^{k+1})$ from both sides and rearranging, we get
    \begin{align} \label{eq:seq}
        \frac{1}{\eta^k} \left(\grad\Omega(x^{k}) - \grad\Omega(x^{k+1}) + \eta^k (B(x^{k+1}) - B(x^k)) + \eta^{k-1}(B(x^{k-1}) - B(x^{k})) \right) \in (A + B)(x^{k+1}).
    \end{align}
    Since $A+B$ is maximal monotone, its graph is demiclosed. Take the limit along a subsequence of $(x^k)$ which converges to $\bar x$ in \cref{eq:seq} and, noting that $\eta^k \geq \varepsilon\ \forall k\in\mathbb N$, we can see that the differences on the left side of \cref{eq:seq} vanish in the limit so that $0 \in (A+ B)(\bar x)$. 
    
    To show that $(x^k)$ is weakly convergent to $\bar x$, note that by combining \cref{thm:bregman_bound2} and the Lipschitzness of $B$, we can see that the limit
    \begin{align*}
        \lim_{k\to\infty} \ D_\Omega(\bar x; x^k) + \eta^{k-1}\langle B(x^k)- B(x^{k-1}), \bar x - x^k\rangle + \frac{1}{4} \|x^k - x^{k-1}\|^2 
    \end{align*}
    exists. Since $(x^k)$ and $(\eta^k)$ are bounded, $\|x^k - x^{k-1}\| \to 0$, and since $B$ is continuous, the limit is equal to $\lim_{k\to\infty} D_\Omega(\bar x; x^k)$. Since the cluster point $\bar x$ was chosen arbitrarily, $(x^k)$ is weakly convergent by \citet{malitsky2020forb} Lemma 2.2, and the proof is complete. 
\end{proof}

This result then immediately implies the following corollary when the learning rate $\eta$ is fixed.
\begin{restatable}{corollary}{convergence_fixed} \label{thm:convergence_fixed}
    Let $A: \mathcal H \rightrightarrows \mathcal H$ be maximal monotone and let $B: \mathcal H \to \mathcal H$ be monotone and $L$-Lipschitz and suppose that $(A+B)^{-1}(0) \neq \varnothing$. Suppose that $\eta \in (0, \frac{1}{2L})$. Given $x^0, x^{-1} \in \mathcal H$, define the sequence $(x^k)$ according to 
    \begin{align}
         x^{k+1} = \mathrm{Prt}_{\eta A}^\Omega(\grad\Omega(x^k) - 2\eta B(x^k) + \eta B(x^{k-1}))
    \end{align}
    Then $(x^k)$ converges weakly to a point contained in $(A + B)^{-1}(0)$. 
\end{restatable}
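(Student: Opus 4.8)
The plan is to observe that \cref{thm:convergence_fixed} is precisely the constant-step-size instance of \cref{thm:convergence}, so the whole argument reduces to checking that a fixed $\eta \in (0, \tfrac{1}{2L})$ meets the step-size hypothesis of that theorem. First I would rewrite the stated update in the form of \cref{eq:omd_monotone}: taking $\eta^k = \eta^{k-1} = \eta$ there gives $\grad\Omega(x^k) - \eta B(x^k) - \eta(B(x^k) - B(x^{k-1})) = \grad\Omega(x^k) - 2\eta B(x^k) + \eta B(x^{k-1})$ and $\mathrm{Prt}_{\eta^k A}^\Omega = \mathrm{Prt}_{\eta A}^\Omega$, which is exactly the recursion in the corollary. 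Hence the sequence $(x^k)$ here coincides with the one analyzed in \cref{thm:bregman_bound2} and \cref{thm:convergence} once the step sizes are held constant.

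Second, I would exhibit a constant $\varepsilon > 0$ such that the (constant) sequence $(\eta^k) = (\eta)$ lies in $[\varepsilon, \tfrac{1 - 2\varepsilon}{2L}]$. The left inclusion $\varepsilon \le \eta$ holds for any sufficiently small $\varepsilon$. For the right inclusion, $\eta \le \tfrac{1 - 2\varepsilon}{2L}$ is equivalent to $\varepsilon \le \tfrac{1 - 2L\eta}{2}$, and since $\eta < \tfrac{1}{2L}$ implies $2L\eta < 1$, the bound $\tfrac{1 - 2L\eta}{2}$ is strictly positive. So any $0 < \varepsilon \le \min\{\eta,\ \tfrac{1 - 2L\eta}{2}\}$ works. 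The remaining hypotheses of \cref{thm:convergence}—$A$ maximal monotone, $B$ monotone and $L$-Lipschitz, $(A+B)^{-1}(0)\neq\varnothing$, and $\Omega$ $\sigma$-strongly convex with $\sigma \ge 1$—are inherited unchanged from the corollary's assumptions (with the $\Omega$ condition being the standing assumption of this subsection).

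Finally, applying \cref{thm:convergence} with this $\varepsilon$ gives that $(x^k)$ converges weakly to a point of $(A+B)^{-1}(0)$, which is the claim.

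I do not expect a genuine obstacle here: the only thing needing care is the elementary observation that the strict inequality $\eta < \tfrac{1}{2L}$ leaves a positive margin on both ends of the admissible step-size interval, so that a valid $\varepsilon$ exists. All of the substantive work—the Bregman three-point estimates of \cref{thm:bregman_bound}, the one-step energy inequality of \cref{thm:bregman_bound2}, the telescoping and demiclosedness argument in \cref{thm:convergence}—has already been carried out in the general (variable-step) setting, so nothing further is required beyond this specialization.
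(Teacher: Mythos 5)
Your proposal is correct and matches the paper's treatment: the paper simply notes that \cref{thm:convergence_fixed} "immediately" follows from \cref{thm:convergence} when the step size is held constant, which is exactly your specialization. The only added content in your write-up is the explicit verification that $\eta < \tfrac{1}{2L}$ leaves room for a valid $\varepsilon$, which the paper leaves implicit.
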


\rate*

\begin{proof}
Without loss of generality, let $A$ be $m$-strongly monotone. Note that we do not lose generality here because if $B$ were $m$-strongly monotone instead, we could write $A + B = (A + mI) + (B - mI)$, thereby making $A' = A + mI$ strongly monotone while preserving the monotonicity of $B' = B - mI$. Let $x \in (A + B)^{-1}(0)$. Then with a constant learning rate $\eta \in (0, 1/2L)$ as in \cref{thm:convergence_fixed} we can use the strong monotonicity of $A$ in place of standard monotonicity in \cref{thm:bregman_bound} and propagate the resulting inequality through \cref{thm:bregman_bound2} to get
\begin{align*}
    \eta m \|x - x^{k+1} \|^2 + D_\Omega(x; x^{k+1}) &+ \eta \langle B(x^{k+1} - B(x^k), x - x^{k+1} \rangle + \bd(\xkk; x^k) - \frac{1}{2} L\eta \|\xkk - x^k\|^2  \\
    &\leq \bd(x; x^k) + \eta\langle B(x^k) - B(x^{k-1}), x - x^k\rangle + \frac{1}{4}\|x^k - x^{k-1}\|^2. 
\end{align*}
Then because $\Omega$ is $\sigma$-strongly convex and has $\Lo$-Lipschitz smooth gradient, by \cite{bauschke2011monotone} Theorem 18.5, we have
\begin{align*}
    \eta m \|x - x^{k+1} \|^2 + \frac{\sigma}{2}\|x - \xkk\|^2 &+ \eta \langle B(x^{k+1} - B(x^k), x - x^{k+1} \rangle + \frac{\sigma}{2}\|\xkk - x^k\|^2 - \frac{1}{2} L\eta \|\xkk - x^k\|^2  \\
    &\leq \frac{\Lo}{2}\|x - x^k\|^2 + \eta\langle B(x^k) - B(x^{k-1}), x - x^k\rangle + \frac{1}{4}\|x^k - x^{k-1}\|^2
\end{align*}
Simplifying gives 
\begin{align*}
    (\sigma + 2\eta m) \|x - x^{k+1} \|^2 &+ 2\eta \langle B(x^{k+1} - B(x^k), x - x^{k+1} \rangle + (\sigma - L\eta)\|\xkk - x^k\|^2 \\
    &\leq \Lo\|x - x^k\|^2 + 2\eta\langle B(x^k) - B(x^{k-1}), x - x^k\rangle + \frac{1}{2}\|x^k - x^{k-1}\|^2.
\end{align*}
Define sequences $(a^k)$ and $(b^k)$ such that 
\begin{align*}
    a^k &\triangleq \frac{\Lo}{2}\|x - x^k\|^2 \\
    b^k &\triangleq \frac{\Lo}{2}\|x - x^k\|^2 + 2\eta\langle B(x^k) - B(x^{k-1}), x - x^k\rangle + \frac{1}{2}\|x^k - x^{k-1}\|^2.
\end{align*}
We'd like to rewrite the LHS of the inequality in terms of $a^{k+1}$ and $b^{k+1}$. Observe that 
\begin{align*}
    (\sigma + 2\eta m)\|x - x^{k+1}\|^2 &= \frac{2}{\Lo}(\sigma + 2\eta m)\frac{\Lo}{2} \|x - x^{k+1}\|^2 = \frac{2}{\Lo}(\sigma + 2\eta m) a^{k+1} \\
    &= \left(\frac{2\sigma}{\Lo} + \frac{4\eta m}{\Lo} \right)a^{k+1} = \frac{\Lo}{2}\|x - \xkk\|^2  + \left(\frac{2\sigma}{\Lo} + \frac{4\eta m}{\Lo} - 1\right)a^{k+1},
\end{align*}
and that 
\begin{align*}
    (\sigma - L\eta) \|\xkk - x^k\|^2 = \frac{1}{2}\|\xkk - x^k\|^2 + \left(\sigma - L\eta - \frac{1}{2}\right)\|\xkk - x^k\|^2.
\end{align*}
Then we have
\begin{align*}
    \left(\frac{2\sigma}{\Lo} + \frac{4\eta m}{\Lo} - 1\right)a^{k+1} + b^{k+1} + \left(\sigma - L\eta - \frac{1}{2}\right)\|\xkk - x^k\|^2 \leq a^k + b^k.
\end{align*}
Set $\varepsilon \triangleq \min\left\{\frac{4\sigma - 2\Lo + 8\eta m}{\Lo + 2}, \sigma - L\eta - \frac{1}{2} \right\}$. Then we can write
\begin{align*}
    \left(\frac{2\sigma}{\Lo} + \frac{4\eta m}{\Lo} - 1\right)a^{k+1} + b^{k+1} + \varepsilon\|\xkk - x^k\|^2 \leq a^k + b^k.
\end{align*}
Consider the LHS. Add and subtract $\varepsilon b^{k+1}/2$ to get 
\begin{align*}
     &\left(\frac{2\sigma}{\Lo} -1 + \frac{4\eta m}{\Lo} - \frac{\varepsilon}{2}\right)a^{k+1} + \left(1 + \frac{\varepsilon}{2}\right) b^{k+1} + \frac{3\varepsilon}{4}\|\xkk - x^k\|^2 - \varepsilon \eta \langle B(x^{k+1}) - B(x^k), x - \xkk \rangle \\
     &\stackrel{(i)}{\geq} \left(\frac{2\sigma}{\Lo} -1 + \frac{4\eta m}{\Lo} - \frac{\varepsilon}{2}\right)a^{k+1} + \left(1 + \frac{\varepsilon}{2}\right) b^{k+1} + \frac{3\varepsilon}{4}\|\xkk - x^k\|^2 - \varepsilon\eta L \|\xkk - x^k \|\|x - \xkk\| \\
     &\stackrel{(ii)}{\geq} \left(\frac{2\sigma}{\Lo} -1 + \frac{4\eta m}{\Lo} - \frac{\varepsilon}{2}\right)a^{k+1} + \left(1 + \frac{\varepsilon}{2}\right) b^{k+1} + \frac{3\varepsilon}{4}\|\xkk - x^k\|^2 - \frac{\varepsilon\eta L}{2}\left(\|\xkk - x^k \|^2 + \|x - \xkk\|^2 \right) \\
     &\geq \left(\frac{2}{\Lo} - 1 +  \frac{4\eta m}{\Lo} - \frac{\varepsilon}{2} - \frac{\varepsilon}{2\Lo} \right)a^{k+1} + \left(1 + \frac{\varepsilon}{2}\right) b^{k+1} + \frac{\varepsilon}{2} \|\xkk - x^k \|^2,
\end{align*}
where $(i)$ is because $B$ is $L$-Lipschitz and $(ii)$ is due to Young's inequality. Now, set $\alpha \triangleq \min\left\{\frac{2}{\Lo} - 1 +  \frac{4\eta m}{\Lo} - \frac{\varepsilon}{2} - \frac{\varepsilon}{2\Lo}, 1 + \frac{\varepsilon}{2} \right\} > 1$ due to $\varepsilon \leq \frac{4\sigma - 2\Lo + 8\eta m}{\Lo + 2}$. We then get
\begin{align*}
    &\alpha\left(a^{k+1} + b^{k+1}\right) + \frac{\varepsilon}{2}\|\xkk - x^k\|^2 \leq a^k + b^k \\
    &\Rightarrow \alpha(a^{k+1} + b^{k+1}) \leq a^k + b^k,
\end{align*}
which, iterating, yields
\begin{align*}
    a^{k+1} + b^{k+1} \leq \frac{1}{\alpha}( a^k + b^k) \leq \frac{1}{\alpha}\left(\frac{1}{\alpha}( a^{k-1} + b^{k-1}) \right) \leq \cdots \leq \frac{1}{\alpha^k}(a^0 + b^0).
\end{align*}
Therefore, $x^k \to x$ with rate $\mathcal O(1/\alpha^k)$, and since $x$ was arbitrarily chosen, it is unique. 
\end{proof}

\convergenceconvexreload*

\begin{proof}
To connect this problem to \cref{thm:convergence}, we can can take advantage of the fact that monotonicity is preserved by concatenation \citep{ryu20222monotone} and set $x^k = [d_\pi^k, \mu^k]\tr \in \mathcal K \times \reals_{\geq 0} = \mathcal X$ such that
\begin{align}
\begin{split}
    A = \begin{bmatrix}
        N_{\mathcal K} \\ N_{\reals_{\geq 0}^N}
    \end{bmatrix} \quad
    B = \begin{bmatrix}
        \grad_{d_\pi} \mathcal L \\
        -\grad_\mu \mathcal L
    \end{bmatrix} \quad
    \grad\Omega = \begin{bmatrix}
        \grad \Omega_{\pi} \\ \grad \Omega_\mu
    \end{bmatrix}.
\end{split}
\end{align}
We can further confirm that the normal cone operator is maximal monotone \citep{ryu20222monotone} and that $\grad\mathcal L$ is monotone and 1-Lipschitz, as the problem is bilinear. We then get the desired result by applying \cref{thm:convergence_fixed}.
\end{proof}

\section{Overview of Extragradient Approaches} \label{sect:extragradients}

\section{Further Algorithm Details} \label{sect:algorithm_details}
Below, we provide a more detailed overview of the algorithms used in the paper, both ReLOAD variants and baseline approaches.

\paragraph{Mirror Descent Policy Iteration (MDPI)} MDPI \citep{geist2019mdpi} is a modified policy iteration algorithm which, rather than apply a hard/greedy improvement step to the policy, uses a soft-improvement step:
\begin{align*}
    \pi^{k+1} = \frac{\pi^k \exp( q^k/\eta_\pi)}{\langle\pi^k \exp( q^k/\eta_\pi),  1 \rangle }
\end{align*}
To adapt MDPI for Lagrangian optimization, we update the policy using the mixed $q$-value $ q^k \to  q_\mu^k$ as described in the main text and update the Lagrange multiplier(s) with projected gradient ascent. This is $\mu$-MDPI, and is similar to the procedure used in \citet{shani2022oal}. To add ReLOAD, we simply replace the mixed $q$-values with optimistic mixed $q$-values. This procedure is summarized in \cref{alg:tabular_reload}.

\paragraph{Mirror Descent Policy Optimization (MDPO)} MDPO \citep{tomar2021mdpo} is a DRL approach designed to apply a scalable approximation of MD to the policy update, and is closely related to both TRPO \citep{schulman2015trpo} and PPO \citep{schulman2017ppo}. There are both on- and off-policy versions of MDPO. Here, we focus on the on-policy version, for which at step the algorithm approximately performs the following update to the policy parameters $\theta$: 
\begin{align}
\begin{split} \label{eq:mdpo_update}
    \theta^{k+1} &\gets \argmax_{\theta\in\Theta} \ \Psi(\theta, \theta^k) \quad \mathrm{where} \\
    \Psi(\theta, \theta^k) = \E_{s\sim d_{\pi_{\theta^k}}}&\left[\E_{a\sim \pi_{\theta^k}} A^{\pi^{\theta^k}}(s, a) - \frac{1}{\eta^k} \kl[\pi_{\theta}(\cdot\mid s) || \pi_{\theta^k}(\cdot\mid s)] \right],
\end{split}
\end{align}
where $A^\pi(s,a)\triangleq q^\pi(s,a) - v^\pi(s)$ is the advantage function for $\pi$. Rather than solve this optimization exactly, for each update to the policy, MDPO performs an inner loop consisting of $m$ SGD steps on $\Psi$ using $(s,a)$ pairs obtained from rollouts using the current policy $\pi_{\theta^k}$. To adapt MDPO for Lagrangian optimization ($\mu$-MDPO), we computed the advantages for the task and constraint rewards $A^\pi_n(s,a) = q^\pi_n(s,a) - v^\pi_n(s)$ and then mixed them using the Lagrange multiplier(s) $\mu^n$ to create mixed advantages:
\begin{align*}
    A_{\mu^k}^{\theta^k}(s,a) = A_0^{\theta^k}(s,a) - \sum_{n=1}^N \mu_n A_n^{\theta^k}(s,a).
\end{align*}
Note that this requires a value function estimate for each constraint reward in addition to the task reward. We then substituted the mixed advantages for the standard advantages in \cref{eq:mdpo_update} to update the policy. To compute the Lagrange multiplier update, we computed the average constraint violation (gradient) on a minibatch of $L$ states obtained from rollouts of the current policy 
\begin{align*}
    \Delta^k_n = \frac{1}{L}\sum_{\ell=1}^L (v^\pi(s_\ell) - \theta_n)
\end{align*}
and updated the Lagrange multiplier(s) as 
\begin{align*}
    \mu^{k+1}_n = \max\{\mu^k_n + \eta_\mu \Delta^k_n, 0\} \quad \mathrm{for} \ n=1, 2, \dots, N.
\end{align*}
To add ReLOAD, we repeated the above steps but used the optimistic mixed advantages
\begin{align*}
    \tilde A_{\mu^k}^{\pi^k}(s,a) = 2A_{\mu^k}^{\pi^k}(s,a) - A_{\mu^{k-1}}^{\pi^{k-1}}(s,a)
\end{align*}
and optimistic values to compute the constraint violations:
\begin{align*}
    \tilde v^{\theta^k}_n(s_\ell) = 2v^{\theta^k}_n(s_\ell)  - v^{\theta^{k-1}}_n(s_\ell)  \quad \mathrm{for} \ n=1, 2, \dots, N.
\end{align*}
To compute the optimistic advantages and values, we maintain a copy of the previous parameters $\theta^{k-1}$ to evaluate on the new trajectories collected by $\pi_{\theta^k}$. In practice, we also use entropy regularization on the policy, making our implementation of $\mu$-MDPO equivalent to \textit{magnetic mirror-descent} \citep[MMD;][]{sokota2022unified}.

\paragraph{IMPALA} The Importance-Weighted Actor-Learner Architecture \citep[IMPALA][]{espeholt2018impala} is a distributed actor-critic agent which uses a set of actors to generate trajectories of experience which are then used by one or more learners to update the policy and value function parameters off-policy. To correct for the disjunction between the actor parameters and the learner parameters, updates are computed using an importance-weighting correction technique called V-trace. Unlike the other base agents with which we paired ReLOAD, IMPALA does not by default use a trust region approach for policy optimization. Therefore, our first modification was to add one via an inner loop in the style of MDPO. We then also added modifications for Lagrangian optimization and optimism as in MDPO. However, since we applied the IMPALA-based agents to large-scale tasks, optimization was inherently less stable, and so we used a sigmoid function to bound the Lagrange multiplier as done in \citet{zahavy2021reward,stooke2020pid}, leading to mixed advantages of the form:
\begin{align*}
     A_{\mu^k}^{\theta^k}(s,a) = \left(N - \sum_{n=1}^N \sigma(\mu_n^k) \right) A_0^{\theta^k}(s,a) + \sum_{n=1}^N \sigma(\mu_n^k) A_n^{\theta^k}(s,a).
\end{align*}
The overall procedure for the learner is summarized in \cref{alg:reload_impala_learner_step}, \cref{alg:reload_impala_fixedoutputs}, and \cref{alg:reload_impala_loss}. Code for V-trace can be found at this link: \url{https://github.com/deepmind/rlax/blob/master/rlax/_src/vtrace.py}.

\begin{algorithm}[!t]
	\caption{ReLOAD-IMPALA \texttt{LearnerStep}}\label{alg:reload_impala_learner_step}
		\begin{algorithmic}[1] 
		    \STATE Require: network parameters $\theta^k$, previous parameters $\theta^{k-1}$, Lagrange multiplier $\mu^k$, previous Lagrange multiplier $\mu^{k-1}$, total learner updates $K$, trajectory rollouts $\{\tau_n\}_{n=1}^N$, number of inner loop updates $m$, constraint value $\varphi_e$
		    \STATE Compute trust region step-size: $\eta^k \gets 1 - k/K$
		    \STATE $\pi_{\theta^{k}}(\cdot|s_t), \textcolor{purple}{\pi_{\theta^{k-1}}(\cdot|s_t)}, \textcolor{purple}{v^0_{\theta^{k-1}}(s_t)}, \textcolor{purple}{v^1_{\theta^{k-1}}(s_t)}, \log \nu(a_t|s_t), \textcolor{purple}{\rho_{\theta^{k-1}}(s_t, a_t)} \gets$\texttt{GetFixedOutputs}$(\theta^k,\textcolor{purple}{\theta^{k-1}}, \{\tau_n\}_{n=1}^N)$
		    \STATE Set inner loop parameters $\theta^k_{(0)} \gets \theta^k$
            \FOR{$i=0,\dots,m-1$}
                \STATE $\theta_{(i+1)}^k \gets \theta_{(i)}^k - \epsilon \grad_\theta \mathcal L(\theta_{(i)}^k, \eta^k, \mu^k, \pi_{\theta^{k}}(\cdot|s_t), \textcolor{purple}{\pi_{\theta^{k-1}}(\cdot|s_t)}, \textcolor{purple}{v^0_{\theta^{k-1}}(s_t)}, \textcolor{purple}{v^1_{\theta^{k-1}}(s_t)}, \log \nu(a_t|s_t), \textcolor{purple}{\rho_{\theta^{k-1}}(s_t, a_t)}, \mu^{k-1}, \{\tau_n\}_{n=1}^N)$
            \ENDFOR
            \STATE Reset network parameters $\theta^{k+1} \gets \theta^k_{(m)}$
            \STATE Update Lagrange multiplier:
                \begin{align*}
                    \tilde \Delta_1^k &\gets 2(v_{\theta^{k+1}}^1(s_t) - \varphi_1) - (v_{\theta^{k-1}}^1(s_t) - \varphi_1) \quad \text{(optimistic constraint violation)} \\
                    \mu^{k+1} &\gets \max\{\mu^k + \eps_\mu \tilde \Delta_1^k ,\ 0\}  \quad \text{(projected gradient ascent)}
                \end{align*}
	\end{algorithmic}
\end{algorithm}

\begin{algorithm}[!t]
	\caption{ReLOAD-IMPALA \texttt{GetFixedOutputs}}\label{alg:reload_impala_fixedoutputs}
		\begin{algorithmic}[1] 
		    \STATE // A function to compute everything that does not change within the inner loop.
		    \STATE Require: network parameters $\theta^k$, previous parameters $\textcolor{purple}{\theta^{k-1}}$, trajectory rollouts $\{\tau_n\}_{n=1}^N$
		    \STATE Unroll current parameters: $\pi_{\theta^{k}}(\cdot|s_t), \ \_, \ \_ \gets \texttt{Unroll}(\{\tau_n\}, \theta^k)$
		    \STATE Unroll previous parameters: $\textcolor{purple}{\pi_{\theta^{k-1}}(\cdot|s_t)}, \textcolor{purple}{v^0_{\theta^{k-1}}(s_t)}, \textcolor{purple}{v^1_{\theta^{k-1}}(s_t)} \gets \texttt{Unroll}(\{\tau_n\}, \textcolor{purple}{\theta^{k-1}})$
            \STATE Compute behavior policy log-probs $\log \nu(a_t|s_t)$
            \STATE Compute previous policy importance weights: $\textcolor{purple}{\rho_{\theta^{k-1}}(s_t, a_t)} = \frac{\textcolor{purple}{\pi_{\theta^{k-1}}(a_t|s_t)}}{\nu^k(a_t|s_t)}$
            \STATE Return $\pi_{\theta^{k}}(\cdot|s_t), \textcolor{purple}{\pi_{\theta^{k-1}}(\cdot|s_t)}, \textcolor{purple}{v^0_{\theta^{k-1}}(s_t)}, \textcolor{purple}{v^1_{\theta^{k-1}}(s_t)}, \log \nu(a_t|s_t), \textcolor{purple}{\rho_{\theta^{k-1}}(s_t, a_t)}$
	\end{algorithmic}
\end{algorithm}

\begin{algorithm}[!t]
	\caption{ReLOAD-IMPALA \texttt{Loss}: \\ $\mathcal L(\theta_{(i)}^k, \eta^k, \mu^k, \pi_{\theta^{k}}(\cdot|s_t), \textcolor{purple}{\pi_{\theta^{k-1}}(\cdot|s_t)}, \textcolor{purple}{v^0_{\theta^{k-1}}(s_t)}, \textcolor{purple}{v^1_{\theta^{k-1}}(s_t)}, \log \nu(a_t|s_t), \textcolor{purple}{\rho_{\theta^{k-1}}(s_t, a_t)}, \mu^{k-1}, \{\tau_n\}_{n=1}^N)$ }\label{alg:reload_impala_loss}
		\begin{algorithmic}[1] 
		    \STATE Unroll trajectories:
    		    \begin{align*}
    		    \{ \textcolor{teal}{\pi_{\theta^k_{(i)}}(\cdot|s_t)}, \textcolor{teal}{v^0_{\theta^k_{(i)}}(s_t)}, \textcolor{teal}{v^1_{\theta^k_{(i)}}(s_t)} \} \gets \texttt{Unroll}(\{\tau_n\}, \textcolor{teal}{\theta^k_{(i)}})
    		    \end{align*}
    		\STATE Compute importance ratios wrt behavior policy $\nu^k$:
    		    \begin{align*}
    		        \textcolor{teal}{\rho_{\theta^k_{(i)}}(s_t, a_t)} = \frac{\textcolor{teal}{\pi_{\theta^k_{(i)}}(a_t|s_t)}}{\nu^k(a_t|s_t)}
    		    \end{align*}
    		\STATE Compute rewards: $\textcolor{teal}{r^0_t(\theta_{(i)}^k)}, \textcolor{teal}{r^1_t( \theta_{(i)}^k)} \gets \texttt{ComputeRewards}(r_t, \textcolor{teal}{\theta_{k}^{(i)}})$
    		\STATE Compute TD errors and advantages:
    		    \begin{align*}
    		        \textcolor{teal}{\delta^0_{(i)}, A^0_{\theta^k_{(i)}}} & \gets \texttt{V-Trace}(\textcolor{teal}{v_{\theta^k_{(i)}}^0}, \textcolor{teal}{r^0_t(\theta_{(i)}^k)}, \textcolor{teal}{\rho_{\theta^k_{(i)}}}); \quad \textcolor{teal}{\delta^1_{(i)}, A^1_{\theta^k_{(i}}} \gets \texttt{V-Trace}(\textcolor{teal}{v_{\theta^k_{(i)}}^1}, \textcolor{teal}{r^1_t(\theta_{(i)}^k)}, \textcolor{teal}{\rho_{\theta^k_{(i)}}}) \\
    		        \_, \textcolor{purple}{A^0_{\theta^{k-1}}} &\gets \texttt{V-Trace}(\textcolor{purple}{v_{\theta^{k-1}}^0}, \textcolor{teal}{r^0_t(\theta_{(i)}^k)}, \textcolor{purple}{\rho_{\theta^{k-1}}}); \quad \_, \textcolor{purple}{A^1_{\theta^{k-1}}} \gets \texttt{V-Trace}(\textcolor{purple}{v_{\theta^{k-1}}^1}, \textcolor{teal}{r^1_t(\theta_{(i)}^k)}, \textcolor{purple}{\rho_{\theta^{k-1}}})
    		    \end{align*}
    		\STATE Weight cumulants:
    		    \begin{align*}
    		        \textcolor{teal}{A_{\theta^k_{(i)}}^{\mu^k}} \gets (1 - \sigma(\textcolor{teal}{\mu^k})) \textcolor{teal}{A^0_{\theta^k_{(i)}}} - \sigma(\textcolor{teal}{\mu^k}) \textcolor{teal}{A_{\theta^k_{(i)}}^1}; \quad \textcolor{purple}{A_{\theta^{k-1}}^{\mu^{k-1}}} \gets (1 - \sigma(\textcolor{purple}{\mu^{k-1}})) \textcolor{purple}{A^0_{\theta^{k-1}}} - \sigma(\textcolor{purple}{\mu^{k-1}}) \textcolor{purple}{A_{\theta^{k-1}}^1}
    		    \end{align*}
    		\STATE Compute optimistic advantages: $\textcolor{orange}{\tilde A^{\mu^k}_{\theta^k_{(i)}}} \gets 2\textcolor{teal}{A_{\theta^k_{(i)}}^{\mu^k}} - \textcolor{purple}{A_{\theta^{k-1}}^{\mu^{k-1}}}$
    		\STATE Policy loss:
    		    \begin{align*}
    		        \mathcal L_\pi = \sum_t -\textcolor{teal}{\rho_{\theta^k_{(i)}}(s_t, a_t)} \textcolor{orange}{\tilde A^{\mu^k}_{\theta^k_{(i)}}(s_t, a_t)} \textcolor{teal}{\log \pi_{\theta^k_{(i)}}(a_t|s_t)} + \frac{1}{\eta^k}\kl[\textcolor{teal}{\pi_{\theta^k_{(i)}}(\cdot|s_t)}, \pi_{\theta^k}(\cdot|s_t)]
    		    \end{align*}
    		\STATE Value loss: 
    		    \begin{align*}
    		        \mathcal L_V = \sum_t \textcolor{teal}{\delta^0_{(i)}(s_t, a_t)}^2 + \textcolor{teal}{\delta^1_{(i)}(s_t, a_t)}^2
    		    \end{align*}
    		\STATE Regularization loss: 
    		    \begin{align*}
    		        \mathcal L_\Omega = \sum_t \kl[\textcolor{teal}{\pi_{\theta^k_{(i)}}(\cdot|s_t)}|| \mathcal N(0, I)]
    		    \end{align*}
    		\STATE Return $\alpha_\pi \mathcal L_\pi + \alpha_V \mathcal L_V + \alpha_\Omega \mathcal L_\Omega$
	\end{algorithmic}
\end{algorithm}


\paragraph{Distributional MPO (DMPO)} Maximum a posteriori policy optimization \citep[MPO;][]{abdolmaleki2018mpo} is a deep policy iteration-style algorithm whose improvement step can be broken into two stages which are reminiscent of the ``E'' and ``M'' steps of the EM algorithm. In the E-step, a non-parametric variational ``policy'' $\nu^k(a| s)$ is constructed using off-policy samples as follows:
\begin{align} \label{eq:mpo_variational}
    \nu^k(a|s) \propto \pi_{\theta^k}(a| s)\exp\left(\frac{q_{\theta^k}(s,a)}{\omega^\star}\right)
\end{align}
where $\omega^\star$ is the minimizer of the convex dual function
\begin{align*}
    g(\omega) = \omega \epsilon + \omega \E_{s\sim d_\pi} \E_{a \sim \pi_{\theta^k}} \exp\left(q_{\theta^k}(s,a)/\omega \right)
\end{align*}
where $\epsilon$ is a constant defining the radius of a KL trust region around the current policy. This (soft-)improved policy is then distilled into the parametric policy in an approximate ``M'' step using another KL trust region: 
\begin{align*}
    \theta^{k+1} = \argmax_{\theta} \E_{d_\nu}\E_{\nu^k} \log \pi_{\theta}(a|s) \quad \mathrm{s.t.} \quad \E_{d_\nu} \kl[\pi_{\theta^k}(\cdot|s)||\pi_\theta(\cdot|s)] < \epsilon. 
\end{align*}
In \textit{distributional} MPO \citep[DMPO;][]{abdolmaleki2020dmpo}, in order to provide more accurate value estimates, the standard critic is replaced with a distributional critic using a categorical representation of the return distribution \citep{bellemare2017distributional}. To add Lagrangian optimization to DMPO ($\mu$-DMPO), we used the same strategy as in IMPALA to update both the Lagrange multipliers and to compute the mixed $q$-values $q_{\theta^k}^\mu$ (rather than advantages in this case) to plug into \cref{eq:mpo_variational}. Similarly, for ReLOAD we computed the optimistic mixed $q$-values 
\begin{align*}
    \tilde q_{\theta^k}^{\mu^k}(s,a) = q_{\theta^k}^{\mu^k}(s,a) - q_{\theta^{k-1}}^{\mu^{k-1}}(s,a)
\end{align*}
and plugged them into \cref{eq:mpo_variational}. The MPO policy update code we used can be found here: \url{https://github.com/deepmind/rlax/blob/master/rlax/_src/mpo_ops.py}.

\paragraph{Multi-Objective MPO (MO-MPO)} Rather than scalarize the task and constraint rewards into a single, non-stationary reward via Lagrangian optimization, multi-objective MPO \citep[MO-MPO;][]{abdolmaleki2020dmpo} instead estimates a separate variational policy for each task and constraint reward $\nu_n^k(a|s)$ by solving
\begin{align*}
    \max_{\nu^k_n} \  \E_{d_{\pi_{\theta^k}},\nu_n^k} q_n^k(s,a) \quad \mathrm{s.t.} \quad \E_{d_{\pi_{\theta^k}}} \kl[\nu^k_n(a|s)||\pi_{\theta^k}(a|s)] < \epsilon_n
\end{align*}
for $n = 0, 1, \dots, N$, where $\epsilon_n$ encodes the preferences/thresholds/desired trade-offs among the task and constraint rewards. 
MO-MPO then distills each of these improved policies into the parametric policy:
\begin{align*}
    \theta^{k+1} = \argmax_{\theta} \  \sum_{n=0}^N\E_{d_{\nu^k_n}}\E_{\nu^k_n} \log \pi_{\theta}(a|s) \quad \mathrm{s.t.} \quad \E_{d_\nu} \kl[\pi_{\theta^k}(\cdot|s)||\pi_\theta(\cdot|s)] < \beta. 
\end{align*}
where $\beta\in\reals_+$ is the radius of the trust region.

\paragraph{Reward-Constrained D4PG (RC-D4PG)} Deep deterministic policy gradients \citep[DDPG;][]{lillicrap2015ddpg} is a DRL algorithm which uses a deterministic policy $\pi_{\theta_\pi}(s)$ to maximize the objective $J(\theta) = \E[q_{\theta_q}(s,a)\mid s = s_t, a = \pi_{\theta_\pi}(s_t)]$ (where $\theta = \{\theta_\pi, \theta_q\}$) using the deterministic policy gradient estimate $\grad_{\theta_\pi}J(\theta) = \E[\grad_a q_{\theta_q}(s,a) \grad_{\theta_\pi}\pi_{\theta_\pi}(s_t)]$. Distributed distributional DDPG \citep[D4PG;][]{barthmaron2018d4pg} is a distributed algorithm which augments DDPG with distributional critics using a categorical representation for the return distribution \citep{bellemare2017distributional}. Reward-constrained D4PG \citep[RC-D4PG;][]{calian2020metal} augments D4PG for Lagrangian optimization in the same way as $\mu$-IMPALA and $\mu$-MDPO augment their respective base algorithms, with the exception that rather than update the Lagrange multiplier by using constraint value estimates obtained from samples from the most recent policy, RC-D4PG maintains a buffer of constraint returns from across training and uses them to compute constraint violations. 

\paragraph{MetaL} Meta-gradients for the Lagrange learning rate \citep[MetaL;][]{calian2020metal} extends RC-D4PG by using meta-gradients \citep{xu2018metagrads} to meta-learn the learning rate for the Lagrange multiplier $\eta_\mu$ with the goal of stabilizing optimization. Specifically, they define a set of inner losses for the actor, critic, and Lagrange multiplier corresponding to the standard components of Lagrangian optimization for CMDPs while also defining an outer loss $L_{outer} = L_{critic}(\theta_q^{k+1}(\eta_\mu^k), \mu^{k+1}(\eta_\mu^k))$. That is, the outer loss is the standard critic loss evaluated using the updated critic parameters and Lagrange multipliers from the inner loop (which are themselves functions of the current Lagrange multiplier learning rate). The Lagrange multiplier learning rate is then updated using meta-gradients on this loss.

\begin{algorithm}[!t]
	\caption{\textcolor{MidnightBlue}{ReLOAD}-MDPI}\label{alg:tabular_reload}
		\begin{algorithmic}[1] 
		    \STATE Require: CMDP $\mathcal M_C$, step sizes $\{\eta_\pi, \eta_\mu\} > 0$
		    \STATE Initialize $ \pi^1$, $\mu^1$, $\pi^0$, $\mu^0$
            \FOR{$k=1,\dots,K$}
                \STATE $\{ q_n\}_{n=0}^N \gets \texttt{PolicyEval}(\mathcal M_C,  \pi^k)$
                \STATE $ q_{\mu}^k \gets - q_0^k + \sum_{n=1}^N \mu^k_n  q_n^k$ \quad (mixed $q$-values)
                \STATE $ v_{1:N}^k \gets [\langle  q_1^k,  \pi^k\rangle, \dots,  \langle  q_N^k,  \pi^k\rangle]^\top$
                \STATE Update policy with OMWU:
                    \begin{align*}
                         \pi^{k+1} &= \frac{ \pi^k \exp\left( ( \textcolor{MidnightBlue}{2} q_\mu^k \textcolor{MidnightBlue}{-\  q_\mu^{k-1}})/\eta_\pi \right)}{\left\langle  \pi^k \exp\left( (\textcolor{MidnightBlue}{2} q_\mu^k \textcolor{MidnightBlue}{-\  q_\mu^{k-1}})/\eta_\pi \right), \mathbf 1 \right\rangle} 
                    \end{align*} 
                \STATE Update Lagrange multiplier with projected OGA:
                    \begin{align*}
                         \mu^{k+1} &= \max\{ \mu^k + \eta_\mu (\textcolor{MidnightBlue}{2}  v^k_{1:N} \textcolor{MidnightBlue}{-\  v^{k-1}_{1:N}} -  \theta), \  0 \}
                    \end{align*}
            \ENDFOR
            \STATE \textbf{return} $\pi^K, \mu^K$
	\end{algorithmic}
\end{algorithm}

\section{Further Experimental Details} \label{sect:experiment_details}

\paragraph{Performance Measures} In simple cases like the paradoxical CMDP, we know the optimal solution exactly and can compute, e.g., the $L_2$ distance between the iterates produced by the algorithm $(d_\pi^k, \mu^k)$ and the SP $(d_\pi^\star, \mu^\star)$. In more complex settings such as the RWRL suite and control suite, previous work, e.g., \citet{calian2020metal} and \citet{huang2022lp3} has used the \textit{penalized reward} $R_{\mathrm{penalized}}$:
\begin{align}
    R_{\mathrm{penalized}} = R_0 - \sum_{n=1}^N \max\{R_n - \theta_n, 0 \},
\end{align}
which is the average task return $R_0$ minus the \textit{constraint overshoot}---how much the average constraint returns violate their associated constraint thresholds. (No bonus is given for being further beneath the threshold than necessary.) However, this metric ignores the effect of the Lagrange multiplier, which in Lagrangian optimization acts on a coefficient on the constraint term, e.g., $\mathcal L = v_0 + \mu(v_1 - \theta_1)$ for a single constraint. We therefore prefer the \textit{weighted reward} $R_{\mathrm{weighted}}$ given by
\begin{align}
    R_{\mathrm{weighted}} = R_0 - \sum_{n=1}^N \mu^\star_n \max\{R_n - \theta_n, 0\},
\end{align}
where $\mu_\star^n$ is the optimal Lagrange multiplier. In our case, for large-scale environments we use a sigmoid to bound the Lagrange multiplier as described in \cref{sect:algorithm_details}. Therefore, because the agent is maximizing the weighted value/advantage $(1 - \sigma(\mu)) A^0 + \sigma(\mu) A^1$, we divide by $1 - \sigma(\mu)$ to give
\begin{align}
    R_{\mathrm{weighted}} = R_0 - \sum_{n=1}^N \hat \mu_n^\star \max\{R_n - \theta_n, 0\} \quad \mathrm{where} \quad \hat \mu_n^\star \coloneqq \frac{\sigma(\mu^\star_n)}{1 - \sigma(\mu^\star_n)}. 
\end{align}
This approach is also used by \citet{stooke2020pid} and \citet{zahavy2022domino}. To estimate the optimal Lagrange multiplier(s), we ran $\mu$- agents for 8 random seeds on each task and averaged the Lagrange multipliers, as they enjoy guarantees for average-iterate convergence. 

\paragraph{The Paradoxical CMDP}
Tabular agents were trained for 500 episodes, with each episode lasting 10 time steps and with the agent starting in a uniformly-randomly chosen state. The discount factor $\gamma$ was 0.9. $\mu$-MDPI-Avg was obtained by maintaining a running average of the iterates of $\mu$-MDPI. DRL agents ($\mu$-MDPO and ReLOAD-MDPO) were trained for 30,000 episodes. The policy was parameterized as an MLP with a single hidden layer with 16 units. States were provided as a one-hot encoding. The agents were trained using RMSProp with an initial learning rate of 6e-4, which decayed linearly to 1e-4 over the course of training. There were 5 inner loop steps to approximate the MD update with a fixed MD step size of 0.25.  

\paragraph{Catch} 
The Catch task was as described by \citet{osband2019bsuite} with the addition of the constraint reward and threshold as described in \cref{sec:experiments_main}. $\mu$-MDPO and ReLOAD-MDPO agents were trained for 30,000 episodes with the same hyperparameter settings as for the Paradoxical CMDP, with the exception that the policy network had two hidden layers, each with 32 hidden units. 

\paragraph{The RWRL Suite} Hyperparameters and training regimes for 0.1-D4PG, RC-D4PG, and MetaL are as reported by \citet{calian2020metal}. For $\mu$-DMPO and ReLOAD-DMPO, hyperparameters are given in \cref{tab:dmpo_hparams}. All task settings are as reported in \citet{calian2020metal}. 

\paragraph{Oscillating Control Suite}

Control suite domains and accompanying thresholds are described in \cref{tab:oscillating_settings}. OGD-{IMPALA, DMPO} is implemented by augmenting $\mu$-{IMPALA, DMPO} with the optimistic gradient descent optimizer from the Optax library, whose code is available at: \url{https://github.com/deepmind/optax/blob/master/optax/_src/alias.py}. PID control is implemented as described in \citet{stooke2020pid} with the same values of $K_P = 0.95$ and $K_D = 0.9$ as used in that work. For IMPALA, which does not normally have a trust region component to the policy update, RNTR-IMPALA implemented ReLOAD without a trust region (that is, only using optimistic advantages/value estimates).

\begin{table}[t]
    \small
    \centering
    \setlength\tabcolsep{3pt}
    \begin{tabular}{lccc}
        \toprule \hline
        \textbf{Domain, Task} & \textbf{Constrained Quantity} & \textbf{Observation Dims} & $\mathbf{\theta}$ \Tstrut \\
        \midrule
        Walker, Walk & Height & 24 & 0.5 \\
        Walker, Walk & Velocity & 14-23 & 155 \\
        Reacher, Easy & Velocity & 4-5 & 1.02 \\
        Quadruped, Walk & Torque & 54-77 & 610 \\
        Humanoid, Walk & Height & 21 & 400 \\
        \bottomrule
    \end{tabular}
    \caption{Experiment settings for oscillating control suite experiments. For multidimensional quantities, the constraint was placed on the $L_2$ norm.}
    \label{tab:oscillating_settings}
    \vspace{-1em}
\end{table}

\section{Further Experimental Results} \label{sect:experiment_results}

\begin{figure}[ht] \label{fig:peg_mdpi}
    \begin{center}
    \centerline{\includegraphics[width=0.5\linewidth]{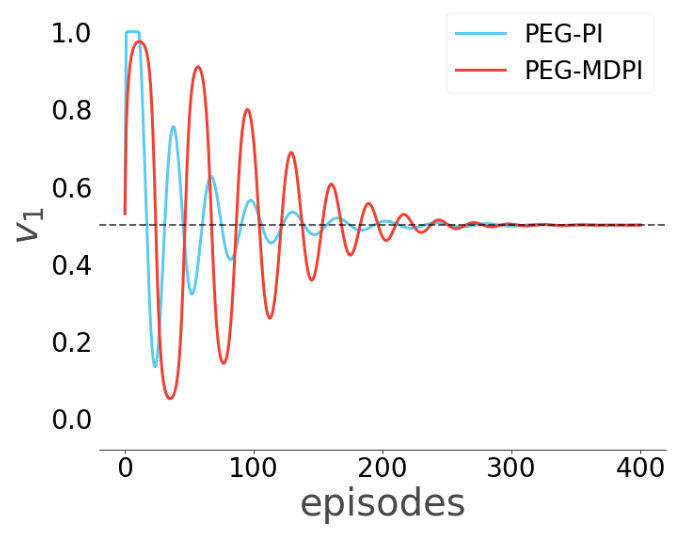}} 
    \caption{\textbf{Tabular PEG-based policy iteration in the paradoxical CMDP.} PEG-PI uses projected optimistic gradient descent instead of MWU for the policy, while PEG-MDPI uses optimistic MWU for the policy update, which here produces the same updates as ReLOAD. Both variants converge in the last-iterate.}
    \end{center}
\end{figure}

\begin{figure}[ht] \label{fig:toy_extra}
    \begin{center}
    \centerline{\includegraphics[width=0.8\linewidth]{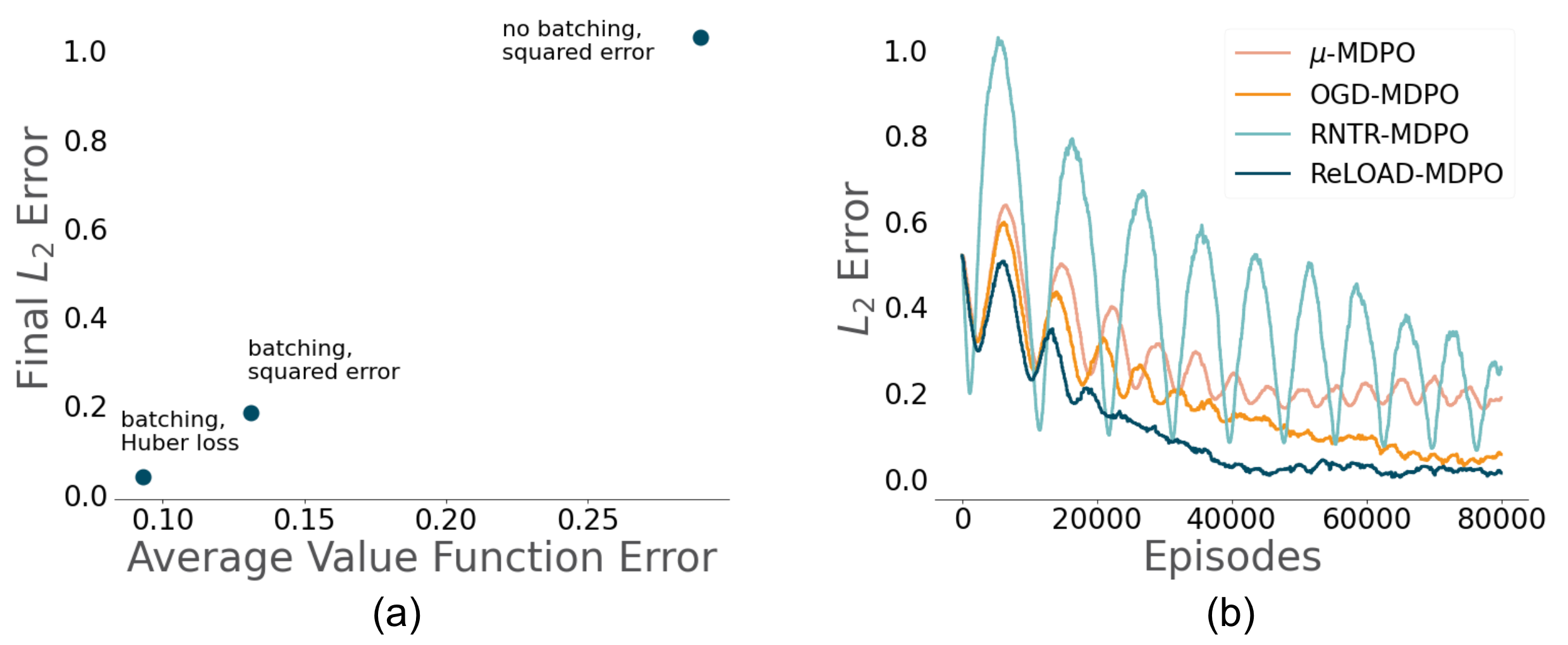}} 
    \caption{\textbf{Policy evaluation and ablations in the paradoxical CMDP.} (a) To demonstrate the importance of accurate value estimation on performance, we plotted the $L_2$ distance of the final $(d_\pi^K, \mu^K)$ pair obtained by ReLOAD-MDPO from the SP of the paradoxical CMDP in \cref{fig:toy}a as a function of the average error in the value estimates over the course of training. We can see that performance gets dramatically worse as the value function error increases, and that adding standard techniques like batching and Huber loss rather than squared error loss improves value estimates sufficiently to allow for strong performance of the overall algorithm. (b) Here, we compare ReLOAD-MDPO against various ablations by measuring the $L_2$ distance from the SP of the paradoxical CMDP over the course of training. We can see that both $\mu$-MDPO and ReLOAD without a trust region (RNTR-) fail to converge. However, performing OGD directly on the policy parameters, rather than via optimistic value estimates, performs nearly as well as ReLOAD. This is likely because the CMDP only has two states, so using the gradient computed from value estimates obtained from an old batch of data will likely be close to that obtained from the current batch of data, as there's more likely to be significant overlap. If we look to higher-dimensional tasks like control suite (\cref{fig:osc}), there is a much larger gap between the direct OGD approach and ReLOAD as the state-action pairs used for value estimation are more likely to differ from one minibatch to another.}
    \label{fig:toy_extras}
    \end{center}
\end{figure}

\begin{figure}[ht]
    \begin{center}
    \centerline{\includegraphics[width=0.66\linewidth]{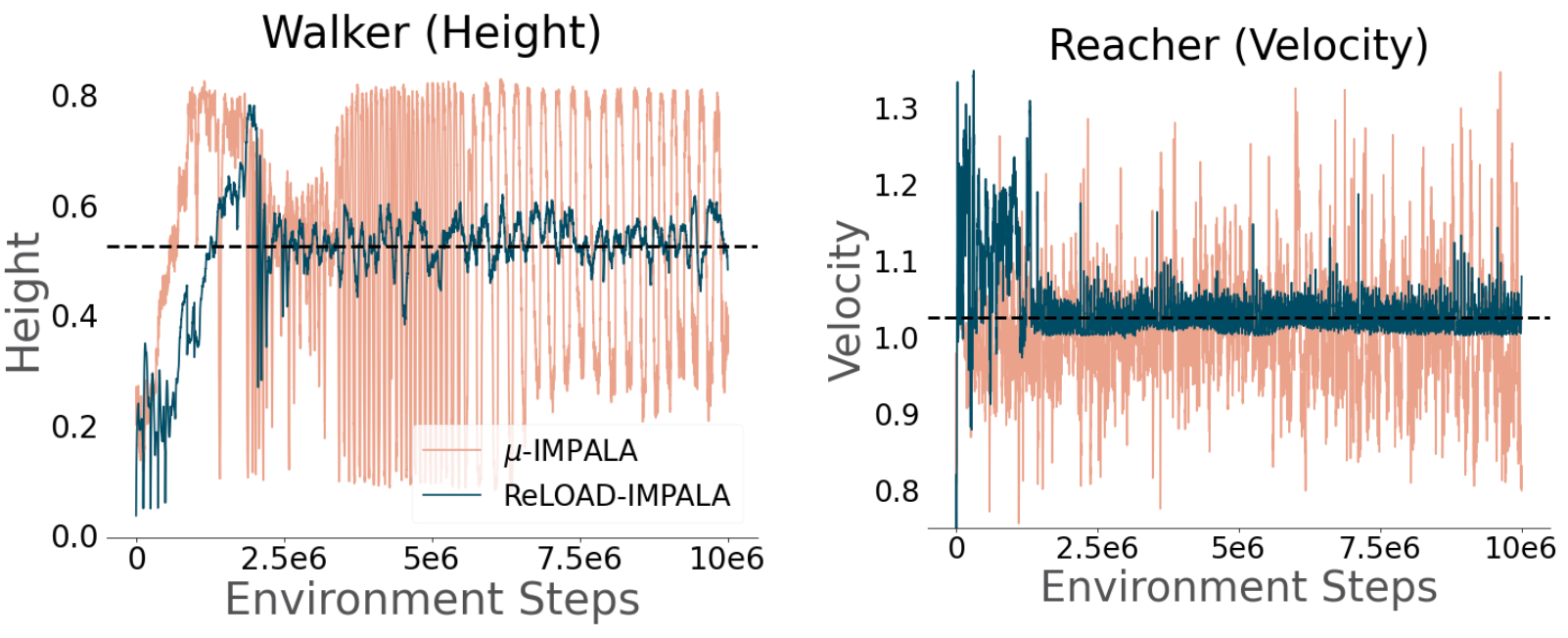}}
    \caption{Learning curves for $\mu$-IMPALA and ReLOAD-IMPALA on \texttt{Walker, Walk} with a height constraint (left) and \texttt{Reacher, Easy} with a velocity constraint (right). The horizontal dotted line indicates the value of the constraint. In each case, ReLOAD-IMPALA significantly dampens oscillations compared to $\mu$-IMPALA.}
    \label{fig:impala_curves}
    \end{center}
\end{figure}

\begin{figure}[ht]
    \begin{center}
    \centerline{\includegraphics[width=0.99\linewidth]{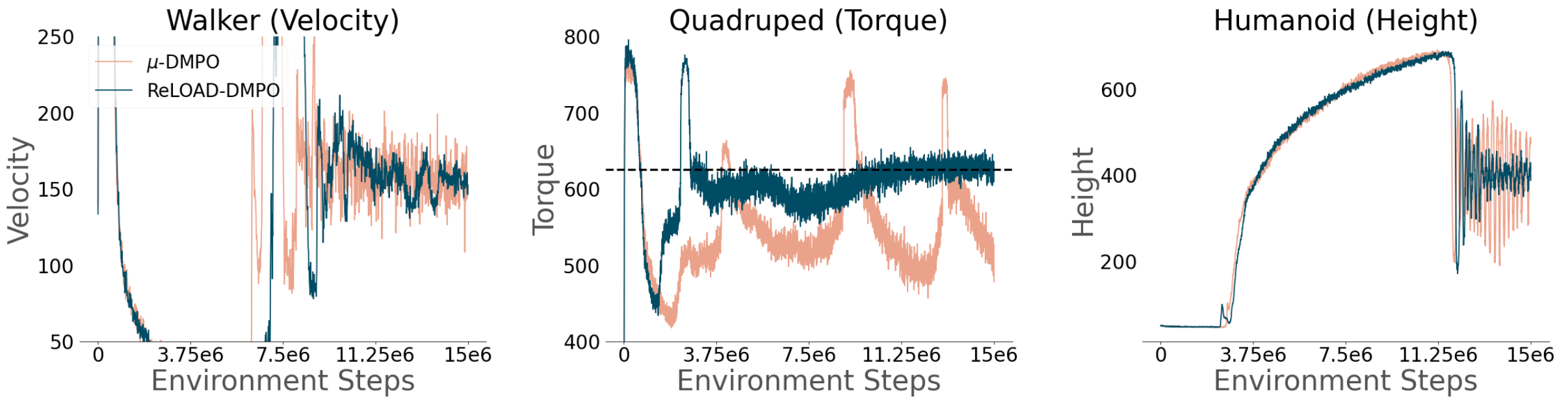}}
    \caption{Learning curves for $\mu$-DMPO and ReLOAD-DMPO on \texttt{Walker, Walk} with a velocity constraint (left), \texttt{Quadruped, Walk} with a torque constraint (center), and \texttt{Humanoid, Walk} with a height constraint (right). The horizontal dotted line indicates the value of the constraint. In each case, ReLOAD-DMPO significantly dampens oscillations compared to $\mu$-DMPO.}
    \label{fig:dmpo_curves}
    \end{center}
\end{figure}

\begin{table}[t]
    \small
    \centering
    \setlength\tabcolsep{3pt}
    \begin{tabular}{llccccc}
        \toprule \hline
        \textbf{Domain} & \textbf{Algorithm} & \textbf{Safety-Coeff/Threshold} & \textbf{Weighted Reward} & $\mathbf{R_{penalized}}$ & \textbf{Task Reward} & \textbf{Constraint Violation} \Tstrut \\
        \midrule
        Walker & 0.1-D4PG & 0.05/0.057 & $-153.8 \pm 12.1$ & $552.8 \pm 12.0$ & $979.6 \pm 0.3$ & $426.8 \pm 12.1$ \\
         &  & 0.05/0.077 & $-175.4 \pm 20.0$ & $544.3 \pm 19.9$ & $979.1 \pm 0.5$ & $434.8 \pm 20.0$ \\
         &  & 0.05/0.097 & $-75.1 \pm 20.5$ & $581.8 \pm 20.3$ & $978.6 \pm 0.4$ & $396.8 \pm 20.5$ \\
         &  & 0.1/0.057 & $533.9 \pm 9.0$ & $811.3 \pm 8.7$ & $978.9 \pm 0.4$ & $167.6 \pm 9.0$ \\
         &  & 0.1/0.077 & $630.2 \pm 7.9$ & $847.7 \pm 7.7$ & $979.1 \pm 0.3$ & $131.4 \pm 7.9$ \\
         &  & 0.1/0.097 & $652.4 \pm 5.8$ & $855.8 \pm 5.7$ & $978.7 \pm 0.3$ & $122.9 \pm 5.8$ \\
         &  & 0.2/0.057 & $982.0 \pm 0.2$ & $982.0 \pm 0.2$ & $982.0 \pm 0.2$ & $0.0 \pm 0.0$ \\
         &  & 0.2/0.077 & $981.8 \pm 0.2$ & $981.8 \pm 0.2$ & $981.8 \pm 0.2$ & $0.0 \pm 0.0$ \\
         &  & 0.2/0.097 & $981.5 \pm 0.3$ & $981.5 \pm 0.3$ & $981.5 \pm 0.3$ & $0.0 \pm 0.0$ \\
         &  & 0.3/0.057 & $982.7 \pm 0.3$ & $982.7 \pm 0.3$ & $982.7 \pm 0.3$ & $0.0 \pm 0.0$ \\
         &  & 0.3/0.077 & $981.8 \pm 0.3$ & $981.8 \pm 0.3$ & $981.8 \pm 0.3$ & $0.0 \pm 0.0$ \\
         &  & 0.3/0.097 & $982.0 \pm 0.1$ & $982.0 \pm 0.1$ & $982.0 \pm 0.1$ & $0.0 \pm 0.0$ \\
         & RC-D4PG & 0.05/0.057 & $268.7 \pm 34.6$ & $296.1 \pm 28.3$ & $312.7 \pm 34.0$ & $16.6 \pm 6.4$ \\
         &  & 0.05/0.077 & $325.9 \pm 44.4$ & $346.6 \pm 36.9$ & $359.0 \pm 43.8$ & $12.5 \pm 7.2$ \\
         &  & 0.05/0.097 & $329.1 \pm 33.4$ & $334.8 \pm 32.8$ & $338.2 \pm 33.4$ & $3.4 \pm 1.3$ \\
         &  & 0.1/0.057 & $914.3 \pm 10.9$ & $915.0 \pm 10.8$ & $915.5 \pm 10.9$ & $0.5 \pm 0.4$ \\
         &  & 0.1/0.077 & $942.7 \pm 1.6$ & $945.4 \pm 1.2$ & $947.1 \pm 1.4$ & $1.7 \pm 0.8$ \\
         &  & 0.1/0.097 & $956.1 \pm 3.3$ & $959.3 \pm 3.3$ & $961.3 \pm 3.2$ & $2.0 \pm 0.8$ \\
         &  & 0.2/0.057 & $980.5 \pm 0.3$ & $981.0 \pm 0.4$ & $981.4 \pm 0.2$ & $0.3 \pm 0.2$ \\
         &  & 0.2/0.077 & $979.1 \pm 1.0$ & $980.7 \pm 1.1$ & $981.7 \pm 0.3$ & $1.0 \pm 1.0$ \\
         &  & 0.2/0.097 & $977.7 \pm 0.8$ & $980.3 \pm 0.9$ & $981.9 \pm 0.3$ & $1.6 \pm 0.8$ \\
         &  & 0.3/0.057 & $978.4 \pm 0.5$ & $980.5 \pm 0.5$ & $981.8 \pm 0.3$ & $1.3 \pm 0.4$ \\
         &  & 0.3/0.077 & $978.6 \pm 0.9$ & $980.5 \pm 0.7$ & $981.6 \pm 0.3$ & $1.1 \pm 0.8$ \\
         &  & 0.3/0.097 & $979.6 \pm 0.6$ & $980.9 \pm 0.6$ & $981.7 \pm 0.4$ & $0.8 \pm 0.5$ \\
         & MetaL & 0.05/0.057 & $-168.5 \pm 10.5$ & $540.0 \pm 9.6$ & $968.0 \pm 1.5$ & $428.0 \pm 10.4$ \\
         &  & 0.05/0.077 & $-138.8 \pm 13.7$ & $551.7 \pm 13.5$ & $968.9 \pm 1.4$ & $417.1 \pm 13.6$ \\
         &  & 0.05/0.097 & $-131.1 \pm 13.0$ & $553.9 \pm 12.2$ & $967.8 \pm 1.0$ & $413.8 \pm 13.0$ \\
         &  & 0.1/0.057 & $663.4 \pm 19.8$ & $853.7 \pm 19.8$ & $968.6 \pm 1.2$ & $114.9 \pm 19.8$ \\
         &  & 0.1/0.077 & $627.4 \pm 17.8$ & $841.7 \pm 17.1$ & $971.2 \pm 1.1$ & $129.5 \pm 17.8$ \\
         &  & 0.1/0.097 & $766.9 \pm 19.8$ & $893.3 \pm 19.7$ & $969.7 \pm 0.7$ & $76.4 \pm 19.7$ \\
         &  & 0.2/0.057 & $981.3 \pm 0.5$ & $982.0 \pm 0.6$ & $982.5 \pm 0.2$ & $0.4 \pm 0.4$ \\
         &  & 0.2/0.077 & $980.2 \pm 0.3$ & $981.5 \pm 0.3$ & $982.3 \pm 0.1$ & $0.8 \pm 0.3$ \\
         &  & 0.2/0.097 & $978.6 \pm 0.8$ & $980.7 \pm 0.7$ & $981.9 \pm 0.3$ & $1.3 \pm 0.7$ \\
         &  & 0.3/0.057 & $977.7 \pm 0.9$ & $980.6 \pm 1.0$ & $982.3 \pm 0.4$ & $1.7 \pm 0.8$ \\
         &  & 0.3/0.077 & $978.4 \pm 0.5$ & $980.6 \pm 0.6$ & $981.9 \pm 0.2$ & $1.3 \pm 0.5$ \\
         &  & 0.3/0.097 & $979.6 \pm 0.5$ & $980.9 \pm 0.5$ & $981.6 \pm 0.2$ & $0.8 \pm 0.5$ \\
         & ReLOAD & 0.05/0.057 & $367.0 \pm 114.8$ & $626.9 \pm 113.4$ & $783.8 \pm 96.9$ & $157.0 \pm 9.6$ \\
         &  & 0.05/0.077 & $423.3 \pm 80.1$ & $716.7 \pm 43.5$ & $893.9 \pm 26.1$ & $177.2 \pm 24.0$ \\
         &  & 0.05/0.097 & $315.8 \pm 43.1$ & $677.7 \pm 28.9$ & $896.4 \pm 22.1$ & $218.6 \pm 10.3$ \\
         &  & 0.1/0.057 & $754.1 \pm 39.0$ & $875.4 \pm 23.7$ & $948.7 \pm 15.8$ & $73.3 \pm 10.1$ \\
         &  & 0.1/0.077 & $717.9 \pm 26.8$ & $877.1 \pm 11.2$ & $973.3 \pm 4.8$ & $96.2 \pm 9.9$ \\
         &  & 0.1/0.097 & $746.9 \pm 28.4$ & $889.3 \pm 12.0$ & $975.3 \pm 4.7$ & $86.0 \pm 10.4$ \\
         &  & 0.2/0.057 & $907.7 \pm 6.9$ & $950.6 \pm 5.0$ & $976.6 \pm 4.1$ & $26.0 \pm 1.9$ \\
         &  & 0.2/0.077 & $881.5 \pm 9.7$ & $940.7 \pm 5.9$ & $976.4 \pm 4.3$ & $35.7 \pm 3.1$ \\
         &  & 0.2/0.097 & $882.7 \pm 7.8$ & $942.1 \pm 5.2$ & $977.9 \pm 4.0$ & $35.9 \pm 2.4$ \\
         &  & 0.3/0.057 & $938.3 \pm 5.8$ & $963.0 \pm 5.0$ & $977.9 \pm 4.3$ & $14.9 \pm 1.2$ \\
         &  & 0.3/0.077 & $930.4 \pm 5.6$ & $960.5 \pm 4.6$ & $978.8 \pm 4.0$ & $18.2 \pm 1.3$ \\
         &  & 0.3/0.097 & $925.8 \pm 5.8$ & $958.3 \pm 4.7$ & $977.9 \pm 4.0$ & $19.6 \pm 1.5$ \\
        \midrule
        \bottomrule
    \end{tabular}
    \caption{Mean performance for \texttt{RWRL-Walker} averaged over 8 random seeds for each of the 12 constraint settings in the RWRL suite, where lower values of the safety coefficient and threshold indicate more challenging tasks. $\pm$ values denote one standard error. }
    \label{tab:rwrl_walker}
    \vspace{-1em}
\end{table}

\begin{table}[t]
    \small
    \centering
    \setlength\tabcolsep{3pt}
    \begin{tabular}{llccccc}
        \toprule \hline
        \textbf{Domain} & \textbf{Algorithm} & \textbf{Safety-Coeff/Threshold} & \textbf{Weighted Reward} & $\mathbf{R_{penalized}}$ & \textbf{Task Reward} & \textbf{Constraint Violation} \Tstrut \\
        \midrule
        Cartpole & 0.1-D4PG & 0.05/0.07 & $833.8 \pm 1.3$ & $352.6 \pm 1.2$ & $851.7 \pm 0.2$ & $499.1 \pm 1.3$ \\
         &  & 0.05/0.09 & $834.3 \pm 1.5$ & $372.6 \pm 1.5$ & $851.5 \pm 0.1$ & $478.9 \pm 1.5$ \\
         &  & 0.05/0.115 & $835.0 \pm 0.9$ & $399.7 \pm 0.8$ & $851.2 \pm 0.1$ & $451.5 \pm 0.9$ \\
         &  & 0.1/0.07 & $844.9 \pm 0.7$ & $568.2 \pm 0.7$ & $855.2 \pm 0.1$ & $287.0 \pm 0.7$ \\
         &  & 0.1/0.09 & $845.5 \pm 1.0$ & $588.3 \pm 1.0$ & $855.1 \pm 0.2$ & $266.8 \pm 1.0$ \\
         &  & 0.1/0.115 & $846.4 \pm 0.9$ & $612.9 \pm 0.9$ & $855.1 \pm 0.2$ & $242.2 \pm 0.9$ \\
         &  & 0.2/0.07 & $850.7 \pm 4.8$ & $763.1 \pm 6.3$ & $853.9 \pm 2.4$ & $90.8 \pm 4.2$ \\
         &  & 0.2/0.09 & $854.7 \pm 1.3$ & $791.3 \pm 1.5$ & $857.0 \pm 0.5$ & $65.7 \pm 1.2$ \\
         &  & 0.2/0.115 & $852.0 \pm 4.4$ & $807.6 \pm 5.8$ & $853.6 \pm 2.4$ & $46.0 \pm 3.7$ \\
         &  & 0.3/0.07 & $857.5 \pm 0.3$ & $825.5 \pm 0.3$ & $858.7 \pm 0.1$ & $33.3 \pm 0.3$ \\
         &  & 0.3/0.09 & $858.4 \pm 0.6$ & $845.4 \pm 0.5$ & $858.9 \pm 0.1$ & $13.5 \pm 0.5$ \\
         &  & 0.3/0.115 & $856.7 \pm 1.9$ & $856.6 \pm 2.0$ & $856.7 \pm 1.9$ & $0.1 \pm 0.1$ \\
         & RC-D4PG & 0.05/0.07 & $278.6 \pm 114.7$ & $158.7 \pm 21.7$ & $283.1 \pm 88.2$ & $124.4 \pm 73.3$ \\
         &  & 0.05/0.09 & $260.4 \pm 84.4$ & $144.8 \pm 43.9$ & $264.7 \pm 66.4$ & $119.9 \pm 52.2$ \\
         &  & 0.05/0.115 & $422.8 \pm 105.5$ & $269.1 \pm 23.0$ & $428.5 \pm 83.5$ & $159.4 \pm 64.4$ \\
         &  & 0.1/0.07 & $262.6 \pm 48.7$ & $180.4 \pm 49.6$ & $265.7 \pm 43.1$ & $85.3 \pm 22.6$ \\
         &  & 0.1/0.09 & $534.4 \pm 104.0$ & $296.2 \pm 93.7$ & $543.3 \pm 81.9$ & $247.1 \pm 64.1$ \\
         &  & 0.1/0.115 & $710.0 \pm 45.1$ & $537.5 \pm 35.5$ & $716.4 \pm 38.8$ & $178.9 \pm 23.0$ \\
         &  & 0.2/0.07 & $310.9 \pm 90.6$ & $298.3 \pm 86.4$ & $311.4 \pm 90.3$ & $13.1 \pm 7.5$ \\
         &  & 0.2/0.09 & $770.0 \pm 18.0$ & $760.6 \pm 16.4$ & $770.3 \pm 17.7$ & $9.7 \pm 3.0$ \\
         &  & 0.2/0.115 & $847.7 \pm 2.4$ & $847.4 \pm 2.4$ & $847.7 \pm 2.4$ & $0.3 \pm 0.1$ \\
         &  & 0.3/0.07 & $845.2 \pm 3.4$ & $844.7 \pm 3.6$ & $845.3 \pm 3.4$ & $0.6 \pm 0.3$ \\
         &  & 0.3/0.09 & $857.3 \pm 0.2$ & $857.1 \pm 0.1$ & $857.3 \pm 0.1$ & $0.2 \pm 0.1$ \\
         &  & 0.3/0.115 & $859.2 \pm 0.3$ & $858.7 \pm 0.2$ & $859.2 \pm 0.1$ & $0.4 \pm 0.3$ \\
         & MetaL & 0.05/0.07 & $832.9 \pm 1.1$ & $348.2 \pm 1.1$ & $850.9 \pm 0.2$ & $502.7 \pm 1.1$ \\
         &  & 0.05/0.09 & $829.5 \pm 2.3$ & $365.1 \pm 2.5$ & $846.8 \pm 2.1$ & $481.6 \pm 1.0$ \\
         &  & 0.05/0.115 & $834.4 \pm 1.7$ & $391.9 \pm 1.6$ & $850.9 \pm 0.2$ & $458.9 \pm 1.7$ \\
         &  & 0.1/0.07 & $843.4 \pm 1.0$ & $580.3 \pm 0.9$ & $853.2 \pm 0.2$ & $272.9 \pm 1.0$ \\
         &  & 0.1/0.09 & $844.3 \pm 2.3$ & $599.3 \pm 2.3$ & $853.4 \pm 0.2$ & $254.1 \pm 2.3$ \\
         &  & 0.1/0.115 & $845.5 \pm 1.2$ & $623.7 \pm 1.3$ & $853.7 \pm 0.1$ & $230.0 \pm 1.2$ \\
         &  & 0.2/0.07 & $851.1 \pm 2.6$ & $787.3 \pm 3.3$ & $853.5 \pm 2.0$ & $66.2 \pm 1.6$ \\
         &  & 0.2/0.09 & $850.8 \pm 3.8$ & $809.3 \pm 4.2$ & $852.3 \pm 2.1$ & $43.1 \pm 3.1$ \\
         &  & 0.2/0.115 & $855.1 \pm 2.1$ & $831.8 \pm 1.9$ & $855.9 \pm 0.2$ & $24.1 \pm 2.1$ \\
         &  & 0.3/0.07 & $856.1 \pm 1.9$ & $839.9 \pm 1.5$ & $856.7 \pm 0.4$ & $16.8 \pm 1.9$ \\
         &  & 0.3/0.09 & $852.8 \pm 3.2$ & $848.5 \pm 4.1$ & $852.9 \pm 2.7$ & $4.4 \pm 1.6$ \\
         &  & 0.3/0.115 & $857.3 \pm 2.0$ & $856.9 \pm 2.1$ & $857.3 \pm 2.0$ & $0.4 \pm 0.2$ \\
         & ReLOAD & 0.05/0.07 & $875.8 \pm 2.6$ & $798.3 \pm 9.4$ & $878.7 \pm 2.4$ & $80.4 \pm 7.4$ \\
         &  & 0.05/0.09 & $871.5 \pm 3.0$ & $781.0 \pm 11.2$ & $874.8 \pm 2.8$ & $93.8 \pm 8.7$ \\
         &  & 0.05/0.115 & $869.1 \pm 3.0$ & $769.0 \pm 11.3$ & $872.8 \pm 2.7$ & $103.9 \pm 8.9$ \\
         &  & 0.1/0.07 & $875.8 \pm 2.6$ & $801.9 \pm 9.0$ & $878.5 \pm 2.4$ & $76.6 \pm 6.9$ \\
         &  & 0.1/0.09 & $869.5 \pm 2.9$ & $780.7 \pm 9.9$ & $872.8 \pm 2.7$ & $92.1 \pm 7.6$ \\
         &  & 0.1/0.115 & $871.7 \pm 3.0$ & $786.6 \pm 10.1$ & $874.8 \pm 2.8$ & $88.2 \pm 7.7$ \\
         &  & 0.2/0.07 & $869.9 \pm 2.9$ & $795.1 \pm 8.1$ & $872.7 \pm 2.7$ & $77.5 \pm 6.0$ \\
         &  & 0.2/0.09 & $871.9 \pm 3.0$ & $798.7 \pm 8.9$ & $874.7 \pm 2.8$ & $76.0 \pm 6.5$ \\
         &  & 0.2/0.115 & $869.8 \pm 2.9$ & $789.2 \pm 8.6$ & $872.7 \pm 2.7$ & $83.5 \pm 6.4$ \\
         &  & 0.3/0.07 & $870.1 \pm 2.8$ & $802.9 \pm 7.5$ & $872.6 \pm 2.7$ & $69.7 \pm 5.5$ \\
         &  & 0.3/0.09 & $867.8 \pm 2.6$ & $797.1 \pm 7.3$ & $870.5 \pm 2.5$ & $73.4 \pm 5.5$ \\
         &  & 0.3/0.115 & $867.6 \pm 2.6$ & $793.3 \pm 7.3$ & $870.4 \pm 2.4$ & $77.1 \pm 5.5$ \\
        \midrule
        \bottomrule
    \end{tabular}
    \caption{Mean performance for \texttt{RWRL-Cartpole} averaged over 8 random seeds for each of the 12 constraint settings in the RWRL suite, where lower values of the safety coefficient and threshold indicate more challenging tasks. $\pm$ values denote one standard error. }
    \label{tab:rwrl_cartpole}
    \vspace{-1em}
\end{table}

\begin{table}[t]
    \small
    \centering
    \setlength\tabcolsep{3pt}
    \begin{tabular}{llccccc}
        \toprule \hline
        \textbf{Domain} & \textbf{Algorithm} & \textbf{Safety-Coeff/Threshold} & \textbf{Weighted Reward} & $\mathbf{R_{penalized}}$ & \textbf{Task Reward} & \textbf{Constraint Violation} \Tstrut \\
        \midrule
        Quadruped & 0.1-D4PG & 0.05/0.545 & $109.5 \pm 0.0$ & $546.6 \pm 0.0$ & $999.6 \pm 0.0$ & $453.0 \pm 0.0$ \\
         &  & 0.05/0.645 & $305.9 \pm 0.0$ & $646.5 \pm 0.0$ & $999.5 \pm 0.0$ & $353.0 \pm 0.0$ \\
         &  & 0.05/0.745 & $502.4 \pm 0.0$ & $746.5 \pm 0.0$ & $999.5 \pm 0.0$ & $253.0 \pm 0.0$ \\
         &  & 0.1/0.545 & $523.7 \pm 0.0$ & $547.5 \pm 0.0$ & $999.5 \pm 0.0$ & $452.0 \pm 0.0$ \\
         &  & 0.1/0.645 & $629.0 \pm 0.0$ & $647.5 \pm 0.0$ & $999.5 \pm 0.0$ & $352.0 \pm 0.0$ \\
         &  & 0.1/0.745 & $734.2 \pm 0.0$ & $747.5 \pm 0.0$ & $999.5 \pm 0.0$ & $252.0 \pm 0.0$ \\
         &  & 0.2/0.545 & $988.7 \pm 56.4$ & $942.9 \pm 56.3$ & $999.3 \pm 0.0$ & $56.4 \pm 56.4$ \\
         &  & 0.2/0.645 & $999.2 \pm 0.0$ & $999.2 \pm 0.0$ & $999.2 \pm 0.0$ & $0.0 \pm 0.0$ \\
         &  & 0.2/0.745 & $999.3 \pm 0.0$ & $999.3 \pm 0.0$ & $999.3 \pm 0.0$ & $0.0 \pm 0.0$ \\
         &  & 0.3/0.545 & $999.3 \pm 0.0$ & $999.3 \pm 0.0$ & $999.3 \pm 0.0$ & $0.0 \pm 0.0$ \\
         &  & 0.3/0.645 & $999.3 \pm 0.0$ & $999.3 \pm 0.0$ & $999.3 \pm 0.0$ & $0.0 \pm 0.0$ \\
         &  & 0.3/0.745 & $999.2 \pm 0.0$ & $999.2 \pm 0.0$ & $999.2 \pm 0.0$ & $0.0 \pm 0.0$ \\
         & RC-D4PG & 0.05/0.545 & $-157.5 \pm 71.1$ & $279.5 \pm 71.1$ & $732.5 \pm 71.1$ & $453.0 \pm 0.0$ \\
         &  & 0.05/0.645 & $28.1 \pm 60.0$ & $368.4 \pm 60.0$ & $721.2 \pm 60.0$ & $352.8 \pm 0.2$ \\
         &  & 0.05/0.745 & $298.6 \pm 49.0$ & $542.7 \pm 49.0$ & $795.7 \pm 49.0$ & $253.0 \pm 0.0$ \\
         &  & 0.1/0.545 & $454.9 \pm 116.0$ & $469.8 \pm 158.4$ & $751.8 \pm 81.5$ & $282.1 \pm 82.6$ \\
         &  & 0.1/0.645 & $948.0 \pm 43.8$ & $950.4 \pm 44.7$ & $995.7 \pm 1.2$ & $45.3 \pm 43.8$ \\
         &  & 0.1/0.745 & $676.6 \pm 56.6$ & $686.6 \pm 73.0$ & $875.6 \pm 38.7$ & $189.0 \pm 41.2$ \\
         &  & 0.2/0.545 & $999.1 \pm 0.0$ & $999.1 \pm 0.0$ & $999.1 \pm 0.0$ & $0.0 \pm 0.0$ \\
         &  & 0.2/0.645 & $998.7 \pm 1.4$ & $996.5 \pm 1.4$ & $999.1 \pm 0.1$ & $2.6 \pm 1.4$ \\
         &  & 0.2/0.745 & $997.9 \pm 4.1$ & $992.8 \pm 4.1$ & $999.1 \pm 0.1$ & $6.3 \pm 4.1$ \\
         &  & 0.3/0.545 & $998.8 \pm 2.6$ & $996.6 \pm 2.6$ & $999.2 \pm 0.0$ & $2.6 \pm 2.6$ \\
         &  & 0.3/0.645 & $999.2 \pm 0.2$ & $999.0 \pm 0.2$ & $999.2 \pm 0.0$ & $0.2 \pm 0.2$ \\
         &  & 0.3/0.745 & $999.1 \pm 0.0$ & $999.1 \pm 0.0$ & $999.1 \pm 0.0$ & $0.0 \pm 0.0$ \\
         & MetaL & 0.05/0.545 & $108.6 \pm 0.1$ & $545.6 \pm 0.1$ & $998.6 \pm 0.1$ & $452.9 \pm 0.1$ \\
         &  & 0.05/0.645 & $305.3 \pm 0.1$ & $645.8 \pm 0.1$ & $998.7 \pm 0.0$ & $352.9 \pm 0.1$ \\
         &  & 0.05/0.745 & $501.6 \pm 0.0$ & $745.7 \pm 0.0$ & $998.7 \pm 0.0$ & $253.0 \pm 0.0$ \\
         &  & 0.1/0.545 & $523.0 \pm 0.3$ & $546.8 \pm 0.3$ & $998.4 \pm 0.1$ & $451.6 \pm 0.3$ \\
         &  & 0.1/0.645 & $674.4 \pm 44.0$ & $690.7 \pm 44.1$ & $998.7 \pm 0.1$ & $308.0 \pm 44.0$ \\
         &  & 0.1/0.745 & $766.7 \pm 31.4$ & $778.3 \pm 31.5$ & $998.4 \pm 0.1$ & $220.1 \pm 31.4$ \\
         &  & 0.2/0.545 & $998.3 \pm 4.6$ & $994.2 \pm 4.6$ & $999.2 \pm 0.0$ & $4.9 \pm 4.6$ \\
         &  & 0.2/0.645 & $999.0 \pm 0.7$ & $998.4 \pm 0.7$ & $999.1 \pm 0.0$ & $0.7 \pm 0.7$ \\
         &  & 0.2/0.745 & $999.2 \pm 0.0$ & $999.2 \pm 0.0$ & $999.2 \pm 0.0$ & $0.0 \pm 0.0$ \\
         &  & 0.3/0.545 & $999.2 \pm 0.3$ & $998.9 \pm 0.3$ & $999.2 \pm 0.0$ & $0.3 \pm 0.3$ \\
         &  & 0.3/0.645 & $999.2 \pm 0.2$ & $999.0 \pm 0.2$ & $999.2 \pm 0.0$ & $0.2 \pm 0.2$ \\
         &  & 0.3/0.745 & $998.7 \pm 1.8$ & $995.7 \pm 1.8$ & $999.2 \pm 0.0$ & $3.5 \pm 1.8$ \\
         & ReLOAD & 0.05/0.545 & $651.8 \pm 27.6$ & $657.7 \pm 16.6$ & $663.8 \pm 4.7$ & $6.1 \pm 25.4$ \\
         &  & 0.05/0.645 & $657.7 \pm 3.4$ & $659.3 \pm 3.4$ & $661.0 \pm 0.9$ & $1.7 \pm 3.2$ \\
         &  & 0.05/0.745 & $650.0 \pm 19.2$ & $654.7 \pm 18.7$ & $659.5 \pm 1.9$ & $4.9 \pm 19.1$ \\
         &  & 0.1/0.545 & $996.4 \pm 1.0$ & $996.4 \pm 1.0$ & $997.1 \pm 0.3$ & $0.7 \pm 0.9$ \\
         &  & 0.1/0.645 & $993.4 \pm 3.7$ & $993.7 \pm 3.5$ & $998.4 \pm 0.5$ & $4.7 \pm 3.6$ \\
         &  & 0.1/0.745 & $982.3 \pm 7.3$ & $983.2 \pm 7.2$ & $999.1 \pm 0.1$ & $16.0 \pm 7.3$ \\
         &  & 0.2/0.545 & $998.0 \pm 2.6$ & $992.6 \pm 2.6$ & $999.3 \pm 0.0$ & $6.7 \pm 2.6$ \\
         &  & 0.2/0.645 & $997.4 \pm 3.3$ & $988.9 \pm 3.3$ & $999.3 \pm 0.0$ & $10.4 \pm 3.3$ \\
         &  & 0.2/0.745 & $996.2 \pm 4.6$ & $982.8 \pm 4.6$ & $999.3 \pm 0.0$ & $16.5 \pm 4.6$ \\
         &  & 0.3/0.545 & $998.4 \pm 2.6$ & $992.5 \pm 2.6$ & $999.3 \pm 0.0$ & $6.8 \pm 2.6$ \\
         &  & 0.3/0.645 & $998.0 \pm 3.1$ & $989.7 \pm 3.2$ & $999.3 \pm 0.1$ & $9.6 \pm 3.1$ \\
         &  & 0.3/0.745 & $997.9 \pm 3.7$ & $988.8 \pm 3.7$ & $999.3 \pm 0.0$ & $10.6 \pm 3.7$ \\
        \midrule
        \bottomrule
    \end{tabular}
    \caption{Mean performance for \texttt{RWRL-Quadruped} averaged over 8 random seeds for each of the 12 constraint settings in the RWRL suite, where lower values of the safety coefficient and threshold indicate more challenging tasks. $\pm$ values denote one standard error. }
    \label{tab:rwrl_quadruped}
    \vspace{-1em}
\end{table}

\begin{table}[t]
    \small
    \centering
    \setlength\tabcolsep{3pt}
    \begin{tabular}{llccc}
        \toprule \hline
        \textbf{Domain (Constraint)} & \textbf{Algorithm} & \textbf{Weighted Reward} & \textbf{Task Reward} & \textbf{Constraint Violation} \Tstrut \\
        \midrule
        Walker (Height) & $\mu$-IMPALA & $354.5 \pm 27.7$ & $360.6 \pm 32.4$ & $5.5 \pm 4.6$ \\
         & OGD-IMPALA & $487.5 \pm 2.8$ & $510.1 \pm 15.1$ & $20.4 \pm 8.3$ \\
         & $\mu^\star$-IMPALA & $356.7 \pm 32.8$ & $527.5 \pm 65.0$ & $153.6 \pm 29.0$ \\
         & PID-IMPALA & $408.1 \pm 12.7$ & $462.6 \pm 27.4$ & $49.0 \pm 19.8$ \\
         & RNTR-IMPALA & $386.6 \pm 22.7$ & $399.0 \pm 28.2$ & $11.1 \pm 6.5$ \\
         & ReLOAD-IMPALA & $549.0 \pm 2.0$ & $592.2 \pm 16.4$ & $38.8 \pm 10.8$ \\
        \midrule
        Reacher (Velocity) & $\mu$-IMPALA & $436.9 \pm 31.7$ & $450.1 \pm 68.4$ & $117.3 \pm 26.0$ \\
         & OGD-IMPALA & $22.4 \pm 10.9$ & $44.6 \pm 13.1$ & $197.7 \pm 22.5$ \\
         & $\mu^\star$-IMPALA & $893.6 \pm 8.5$ & $913.6 \pm 18.0$ & $178.1 \pm 11.6$ \\
         & PID-IMPALA & $428.5 \pm 25.6$ & $439.5 \pm 72.0$ & $97.6 \pm 22.2$ \\
         & RNTR-IMPALA & $841.7 \pm 102.8$ & $850.0 \pm 103.0$ & $73.7 \pm 11.6$ \\
         & ReLOAD-IMPALA & $938.1 \pm 2.9$ & $961.8 \pm 11.7$ & $209.7 \pm 14.1$ \\
        \midrule
        \bottomrule
    \end{tabular}
    \caption{Oscillating control suite results for IMPALA-based agents, averaged over 8 random seeds. $\pm$ values indicate one standard error.}
    \label{tab:impala}
    \vspace{-1em}
\end{table}

\begin{table}[t]
    \small
    \centering
    \setlength\tabcolsep{3pt}
    \begin{tabular}{llccc}
        \toprule \hline
        \textbf{Domain (Constraint)} & \textbf{Algorithm} & \textbf{Weighted Reward} & \textbf{Task Reward} & \textbf{Constraint Violation} \Tstrut \\
        \midrule
        Walker (Velocity) & $\mu$-DMPO & $195.9 \pm 57.6$ & $441.0 \pm 57.5$ & $150.9 \pm 2.1$ \\
         & OGD-DMPO & $108.2 \pm 10.0$ & $357.5 \pm 10.0$ & $153.5 \pm 1.1$ \\
         & $\mu^\star$-DMPO & $118.1 \pm 15.9$ & $932.7 \pm 3.8$ & $501.5 \pm 15.4$ \\
         & PID-DMPO & $300.1 \pm 18.6$ & $544.2 \pm 18.5$ & $150.3 \pm 1.6$ \\
         & ReLOAD-DMPO & $321.0 \pm 49.3$ & $569.4 \pm 49.2$ & $152.9 \pm 2.0$ \\
        \midrule
        Quadruped (Torque) & $\mu$-DMPO & $-254.8 \pm 23.5$ & $567.1 \pm 22.1$ & $558.3 \pm 8.0$ \\
         & OGD-DMPO & $-220.1 \pm 20.8$ & $664.6 \pm 20.7$ & $601.0 \pm 1.3$ \\
         & $\mu^\star$-DMPO & $-328.4 \pm 36.1$ & $200.4 \pm 33.7$ & $359.2 \pm 13.0$ \\
         & PID-DMPO & $-232.2 \pm 21.1$ & $655.4 \pm 20.7$ & $602.9 \pm 4.2$ \\
         & ReLOAD-DMPO & $-137.0 \pm 55.7$ & $768.4 \pm 51.7$ & $615.0 \pm 20.6$ \\
        \midrule
        Humanoid (Height) & $\mu$-DMPO & $92.5 \pm 76.8$ & $685.8 \pm 59.4$ & $576.7 \pm 48.7$ \\
         & OGD-DMPO & $1.9 \pm 64.3$ & $606.1 \pm 49.6$ & $587.2 \pm 40.9$ \\
         & $\mu^\star$-DMPO & $-45.9 \pm 0.1$ & $0.0 \pm 0.0$ & $44.7 \pm 0.1$ \\
         & PID-DMPO & $39.9 \pm 67.0$ & $538.3 \pm 50.0$ & $484.4 \pm 44.6$ \\
         & ReLOAD-DMPO & $114.4 \pm 44.3$ & $650.1 \pm 36.3$ & $520.6 \pm 25.2$ \\
        \midrule
        \bottomrule
    \end{tabular}
    \caption{Oscillating control suite results for DMPO-based agents, averaged over 8 random seeds. $\pm$ values indicate one standard error.}
    \label{tab:dmpo_results}
    \vspace{-1em}
\end{table}

\begin{table}[t]
    \small
    \centering
    \setlength\tabcolsep{3pt}
    \begin{tabular}{lc}
        \toprule \hline
        \textbf{Hyperparameter} & \textbf{Value} \Tstrut \\
        \midrule
        regularization weight $\alpha_\Omega$ & 1e-2 \\
        value loss weight $\alpha_V$ & 0.25 \\
        policy loss weight $\alpha_\pi$ & 1.0 \\
        discount factor $\gamma$ & 0.99 \\
        RMSProp decay & 0.99  \\
        RMSProp $\epsilon$ & 1e-4 \\
        initial step size & 6e-4 \\
        final step size & 6e-6 \\
        max gradient norm & 0.2 \\
        inner loop steps & 5 \\
        initial Bregman step size & 2.0 \\
        final Bregman step size & 0.5 \\
        initial $\mu$ learning rate & 1e-1 \\
        final $\mu$ learning rate & 1e-3 \\
        network hidden units per layer & 256 \\
        network depth & 3 \\
        training duration (environment steps) & 10e6 \\
        \bottomrule
    \end{tabular}
    \caption{Hyperparameter settings for IMPALA experiments.}
    \label{tab:impala_hparams}
    \vspace{-1em}
\end{table}

\begin{table}[t]
    \small
    \centering
    \setlength\tabcolsep{3pt}
    \begin{tabular}{lc}
        \toprule \hline
        \textbf{Hyperparameter} & \textbf{Value} \Tstrut \\
        \midrule
        batch size & 256 \\
        replay size & 2e6 \\
        optimizer & Adam \\
        regularization weight & 1e-2 \\
        value loss weight & 0.25 \\
        discount factor $\gamma$ & 0.99 \\
        initial step size & 3e-4 \\
        final step size & 1e-5 \\
        initial $\mu$ learning rate & 1e-1 \\
        final $\mu$ learning rate & 1e-3 \\
        network hidden units per layer & 256 \\
        network depth & 3 \\
        training duration & 15e6  \\
        \bottomrule
    \end{tabular}
    \caption{Hyperparameter settings for DMPO experiments.}
    \label{tab:dmpo_hparams}
    \vspace{-1em}
\end{table}

\begin{figure}[ht]
    \begin{center}
    \centerline{\includegraphics[width=0.6\linewidth]{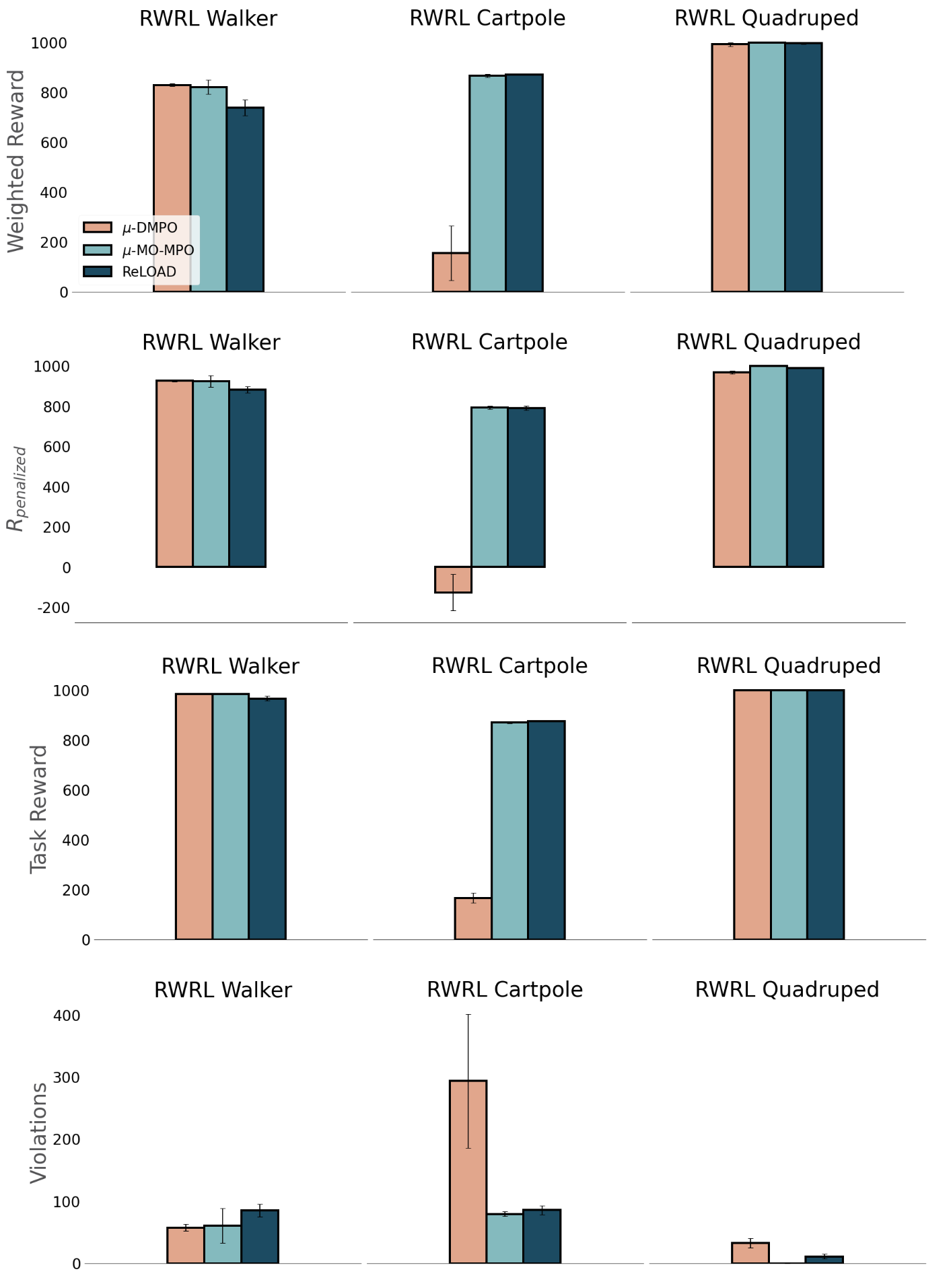}}
    \caption{\textbf{ReLOAD-DMPO outperforms $\mu$-DMPO and matches the performance of MO-DMPO on the RWRL Suite.} Here, we trained agents on the comparatively easy RWRL settings used by \cite{huang2022lp3} over 8 random seeds. Error bars denote one standard error.}
    \label{fig:rwrl_lp3}
    \end{center}
\end{figure}


\end{document}